\documentclass[]{fairmeta}

\usepackage[utf8]{inputenc} 
\usepackage{lmodern}       
\usepackage[T1]{fontenc}    
\usepackage{hyperref}       
\usepackage{url}            
\usepackage{booktabs}       
\usepackage{nicefrac}       
\usepackage{microtype}      
\usepackage{xcolor}         
\usepackage{graphicx}
\usepackage{subcaption}  
\usepackage{caption}

\title{Convex Dominance in Deep Learning I: A Scaling Law of Loss and Learning Rate}

\author[1]{Zhiqi Bu}
\author[2,\dagger]{Shiyun Xu}
\author[2,\dagger]{Jialin Mao}
\contribution[\dagger]{Joint second authors.}

\affiliation[1]{FAIR, Meta Superintelligence Labs}
\affiliation[2]{Independent Researcher}
\correspondence{\email{zhiqibu@meta.com}}

\usepackage{verbatim}
\usepackage{url}
\usepackage{algorithm}
\usepackage{algorithmicx}
\usepackage[noend]{algpseudocode}
\usepackage{amsmath, amsfonts, amssymb, amsthm, amsbsy, amscd, bm, bbm,mathrsfs} 
\usepackage{amssymb}
\usepackage{graphicx}
\usepackage{setspace}
\usepackage{hyperref}
\usepackage{appendix}
\usepackage{xspace}
\usepackage{epigraph}
\usepackage[thinc]{esdiff}
\graphicspath{{figs/}}

\usepackage[labelfont=bf]{caption}
\usepackage{cleveref}

\renewcommand{\arraystretch}{1.5}

\makeatletter
\newcommand*{\rom}[1]{\expandafter\@slowromancap\romannumeral #1@}
\makeatother

\makeatletter
\newcommand{\be}{\begin{equation}}
\newcommand{\ee}{\end{equation}}

\newcommand{\E}{\mathbb{E}}

\allowdisplaybreaks \numberwithin{equation}{section}
\makeatother

\newcommand{\w}{\mathbf{w}}
\newcommand{\g}{\mathbf{g}}
\newcommand{\R}{\mathbb{R}}
\newcommand{\etap}{\eta_\textup{peak}}
\newcommand{\etar}{\eta_\textup{ref}}

\newtheorem{theorem}{Theorem}
\newtheorem{othertheorem}{othertheorem}[section]

\newtheorem{corollary}[othertheorem]{Corollary}

\newtheorem{remark}[othertheorem]{Remark}
\newtheorem{example}[othertheorem]{Example}
\newtheorem{abstraction}{Generalization}

\newtheorem{condition}[othertheorem]{Condition}

\crefname{othertheorem}{condition}{conditions}
\Crefname{othertheorem}{Condition}{Conditions}

\crefname{abstraction}{generalization}{generalizations}
\Crefname{abstraction}{Generalization}{Generalizations}

\abstract{
Deep learning has non-convex loss landscape and its optimization dynamics is hard to analyze or control. Nevertheless, the dynamics can be empirically convex-like across various tasks, models, optimizers, hyperparameters, etc. In this work, we examine the applicability of convexity and Lipschitz continuity in deep learning, in order to precisely control the loss dynamics via the learning rate schedules. We illustrate that deep learning quickly becomes weakly convex after a short period of training, and the loss is predicable by an upper bound on the last iterate, which further informs the scaling of optimal learning rate. 
Through the lens of convexity, we build scaling laws of learning rates and losses that extrapolate as much as $80\times$ across training horizons and $70\times$ across model sizes.
}

\begin{document}

\maketitle

\section{Introduction}
Deep learning has highly non-convex and complex loss landscape, e.g. the global minimum may not be unique, and there may be many local minima and saddle points where the optimization can be trapped \citep{garipov2018loss,choromanska2015loss,dauphin2014identifying,jin2017escape,sagun2016eigenvalues}. Nevertheless, the empirical success of optimization in deep learning has implied that some benign properties may hold and be leveraged.

In fact, there has been a long history of empirically exploring convexity in deep learning. For example, Llama training (non-convex) with AdamW \citep{loshchilovdecoupled} is closely similar to convex optimization with SGD, in terms of the shapes of loss curves in Figure 1 of \citep{schaippsurprising}. Empirical evidence by \citep{zhousgd} has shown that SGD follows a star-convex path during the optimization of neural networks. In addition, convexity is universally observed along the direction of gradients in vision and language models by \citep{bu2025gradient}.

Another line of research has focused on
the two-dimensional loss landscape, by \citep{li2018visualizing,im2016empirical} (along two random and normalized directions) and \citep{allen2019convergence} (along the gradient and  the negatively curved direction of the Hessian), which is approximately locally convex for residual neural networks with various depth and width. Furthermore, these convex-like loss landscapes are also observed on large language models such as RoBERTa, LLaMA, Qwen, and Mistral in \citep{zhong2022improving,chen2025understanding,lee2024fp8}.

Besides these low-dimensional loss landscapes,
the Hessian spectrum provides a rigorous local notion of curvature: a positive-definite Hessian (all eigenvalues positive) indicates that the loss landscape is locally convex around that point. Empirical observations show that at initialization, the Hessian often contains many large negative eigenvalues, which quickly shift toward zero and become much smaller in magnitude than the positive eigenvalues. Consequently, the spectrum becomes dominated by positive eigenvalues and the loss landscape becomes approximately convex \citep{yao2020pyhessian,papyan2018full,sankar2021deeper,zhang2024transformers}.

In addition, theoretical analysis shows that two-layer and deeper neural networks can be regarded as convex in the wide-network limit, under neural tangent kernel \citep{jacot2018neural,lee2019wide,du2018gradient,du2019gradient,li2018learning,allen2019learning}, neural repopulation \citep{fang2019over,fang2022convex}, and mean field \citep{mei2018mean,chizat2018global}.

As a matter of fact, many works in deep learning have been drawing useful understanding from convex analysis to non-convex deep learning: \cite{sutskever2013importance} shows that momentum significantly accelerates the convergence; \cite{defazio2023optimal} explains the effectiveness of learning rate warmup and decay from a convex viewpoint; the effect of weight decay \citep{krogh1991simple} is first understood in ridge regression \citep{hoerl1970ridge}.

Specifically,the relationship between the learning rate sequence $\{\eta_t\}$ and the upper bounds of loss sequence $\{L_t\}$ has been heavily studied. These bounds come from different settings, studying convex or strongly convex loss, Lipschitz continuous or smooth loss, finite-iteration or asymptotic bound, averaged or last iterate, etc. Our work is directly based on Corollary 12 of \citep{defazio2023optimal} (re-stated in \eqref{eq:last lr array}), and a simplified version can be found in \eqref{eq:lr series}, which already provides some insights as we will summarize in \Cref{rem:d-adapt}.

\subsection{Related work}
\paragraph{Convex to deep learning.}
It is well-known in convex regime that loss can converge at $O(1/\sqrt{T})$ and optimal learning rate is $O(1/\sqrt{T})$. However, whether or when these conclusions hold in deep learning is largely unclear. From this perspective, our work is most closely related to \citep{schaippsurprising}, both following from \citep{defazio2023optimal}. In contrast, we not only extend major findings of \citep{schaippsurprising} theoretically (e.g. our qualifying exam in \Cref{def:qualifying} covers any schedule), but also focus more on empirical validation. For example, $O(1/\sqrt{T})$ convergence of loss and consistent patterns across models and optimizers in our Figures 6-12 are not presented in \citep{schaippsurprising}, which focuses on one model and single training horizon like our Figures 2-5. In particular, our empirical approach is data-driven and practically applicable in large-scale deep learning. 

\paragraph{Scaling laws.}
Current popular scaling laws are primarily about loss \citep{kaplan2020scaling,hoffmann2022training} (i.e. scaling laws of loss), predicting how loss changes as model sizes and training horizons change, assuming optimal learning rate. As a result, learning rate is not explicitly presented in these laws. Other laws (i.e. scaling laws of learning rate) can scale learning rate across training horizons $\etap^*=\lambda T^{-\alpha}$, where $\alpha=0.125$ in \citep{bi2024deepseek}, $\alpha\in\{0.32, 0.38, 0.42,0.65,0.70\}$ by Table 5 in \citep{bjorck2024scaling}, and some are horizon-unaware in \citep{porian2024resolving,wang2024scaling} (i.e. $\alpha=0$). Nevertheless, these laws may not predict loss. In this work, we propose a law that simultaneously predict loss and optimal learning rate for the fixed value $\alpha=0.5$.

\paragraph{Learning rate transfer.} Maximal update parameterization (muP) \citep{yang2022tensor} is a technique to transfer optimal learning rate across model size. While Table 1 of \citep{yang2022tensor} claimed it also transfers across training horizons, this contradicts with empirical evidence in \citep{bjorck2024scaling} (see Section 3.3) and our analysis.


\subsection{Contributions}
In this work, we study the scaling law of deep learning loss and learning rate, through the lens of convex loss (non-smooth) and bounded gradient. We will establish a series of generalizations from convex theory to deep learning, which is presented in \Cref{tab:generalization} and summarized as follows.
\begin{enumerate}
    \item We study the convex-like behaviors in deep learning for general model architectures, optimizers and learning rate schedules, hence establishing a non-asymptotic mapping from learning rate sequence to loss sequence.
    \item We generalize to an asymptotic upper bound of loss, achieving $O(1/\sqrt{T})$ convergence when (I) the peak learning rate is scaled by $1/\sqrt{T}$ and (II) the learning rate schedule is qualified.
    \item We propose a data-driven method to fit the asymptotic bound, establishing a scaling law across training horizons and model sizes.
\end{enumerate}

\section{Convergence of SGD under convex loss}
\label{sec:convex sgd}
\subsection{Revisiting non-asymptotic bound of SGD}
We consider the stochastic gradient descent (SGD) as
$\w_{t+1}=\w_t-\eta_{t+1} \g(\w_t)$,
where $\w$ is the parameters, $\eta$ is the learning rate, $\g$ is the mini-batch gradient with $\E\g(\w)=\nabla  L$, $0\leq t< T$ is the iteration and $T$ is the training horizon (i.e. number of iterations). The learning rate is defined by two factors via $\etap\cdot s_t(T)$: (I) the learning rate schedule, which is a function $s_t\in [0,1]$ (e.g. linear decay is $s_t(T)=1-t/T$), and (II) the peak learning rate $\etap\in\R^+$ which is a positive scalar.

We briefly review the convergence analysis under the convex and bounded gradient conditions.
\begin{condition}
\label{def:convex and Lipschitz}
Denoting a differentiable function as $L$ and its gradient as $\nabla L$, then $L$ is
\begin{align}
\textit{convex if }\forall (\w,\bm{x}), L(\w)-L(\bm{x})\leq (\w-\bm{x})^\top \nabla L(\w)
\\
\textit{bounded in gradient if }\exists G \text{ s.t. }\forall \w, \E\|\g(\w)\|^2\leq G^2
\end{align}
\end{condition}

\begin{remark}
In \Cref{def:convex and Lipschitz}, the convexity is not necessary for our analysis as it can be replaced by star-convexity and iterate-wise convexity along the optimization path, i.e. 
$$L(\w_t)-L(\w_*)\leq (\w_t-\w_*)^\top \nabla L(\w_t)$$ where $\w_*\in\text{argmin}_\w L(\w)$ is the minimizer, and 
$$L(\w_t)-L(\w_s)\leq (\w_t-\w_s)^\top \nabla L(\w_t).$$
Additionally, the  bounded gradient condition reduces to Lipschitz continuity when SGD is full-batch.
\end{remark}
In the parameter space, with $\g_t:=\g(\w_t)$,
\begin{align*}
\|\w_{t+1}-\w_*\|^2
&=\|\w_{t}-\w_*\|^2-2\eta_{t+1} (\w_{t}-\w_*)^\top \g_t +\eta_{t+1}^2 \|\g_t\|^2
\end{align*}
For bounded gradient and convex loss, denoting $L_*=\min_\w L(\w)$, we have in expectation
\begin{align*}
\E\|\w_{t+1}-\w_*\|^2&\leq \E\|\w_{t}-\w_*\|^2-2\eta_{t+1} (L_{t}-L_*)+\eta_{t+1}^2 G^2
\end{align*}
Telescoping sum gives
\begin{align*}
0&\leq\E\|\w_{\tau}-\w_*\|^2\leq \|\w_{0}-\w_*\|^2-2\sum_{t=0}^{\tau-1}\eta_{t+1} (L_{t}-L_*)+\sum_{t=0}^{\tau-1}\eta_{t+1}^2 G^2
\end{align*}
Dividing by $2\sum_{t}\eta_{t+1}$ and applying Jensen's inequality, we obtain an upper bound of $L(\bar \w_\tau)$, where $\bar\w_\tau=\frac{\sum_{t=0}^{\tau-1}\eta_{t+1} \w_t}{\sum_{t=0}^{\tau-1} \eta_{t+1}}$ is the averaged iterate and $D:=\|\w_{0}-\w_*\|$,
\begin{align}
\E L(\bar\w_\tau)\leq \frac{\sum_{t=0}^{\tau-1}\eta_{t+1} L_{t}}{\sum_{t=0}^{\tau-1} \eta_{t+1}}&\leq L_*+\frac{D^2}{2\sum_{t=1}^{\tau} \eta_{t}}+ \frac{G^2\sum_{t=1}^{\tau}\eta_{t}^2}{2\sum_{t=1}^{\tau} \eta_{t}}
:=L_\text{SGD-ave}(\{\eta_t\}).
\label{eq:lr series}
\end{align}

\begin{example}
\label{rem:d-adapt}
For constant learning rate $\eta$, the loss upper bound derived from convex analysis in \eqref{eq:lr series} simplifies to $L_*+\frac{D^2}{2T \eta}+ \frac{\eta G^2}{2}$.
This aligns with the empirical trade-off in deep learning that larger $\eta$ converges faster but to a higher loss, and vice versa [See Fig 10.14 \citep{deep_learning_bible_nnfs}]. Furthermore, this loss bound is minimized by $\eta_*=\frac{D}{\sqrt{T}G}$ in convex analysis, which underlies the fast convergence observed in deep learning such as D-adaptation \citep{defazio2023learning}, Prodigy \citep{mishchenko2023prodigy}, DoG \citep{ivgi2023dog}, and DoWG \citep{khaled2023dowg}. 
\end{example}

With one extra term in \cite[Corollary 12]{defazio2023optimal}, we have a bound of any single iterate:
\begin{align}
\E L(\w_\tau)\leq L_*+\frac{D^2}{2\sum_{t=1}^\tau \eta_t}+ \frac{G^2\sum_{t=1}^\tau\eta_t^2}{2\sum_{t=1}^\tau \eta_t}+\frac{ G^2}{2}\sum_{k=1}^{\tau-1}\frac{\eta_k}{\sum_{t=k+1}^\tau\eta_t}\frac{\sum_{t=k}^\tau \eta_t^2}{\sum_{t=k}^\tau \eta_t}
\label{eq:last lr array}
\end{align}

Note that \eqref{eq:lr series} and \eqref{eq:last lr array} translate an arbitrary learning rate sequence $\{\eta_t\}$ to an upper bound of the loss value. While both bounds shed some insights on the loss dynamics, we focus on the bound from \eqref{eq:last lr array}, since \eqref{eq:lr series} can be less precise in characterizing the loss curves (c.f. Figure 2 and Figure 13 in \citep{schaippsurprising}).



\subsection{Asymptotic loss bound at last iteration}
For any training horizon $T$, we can characterize the loss at the last iteration $\w_T$ via \eqref{eq:last lr array}. We show the upper bounds on the loss in terms of different learning rate schedules, the optimal peak learning rate $\etap^*$, and the optimal loss bound in \Cref{tab:theorem1}, where the results are derived in \Cref{col:last-iter} in \Cref{app:proofs}.

\begin{table}[!htb]
\centering
\caption{Convergence of optimal loss and optimal learning rate by \Cref{col:last-iter} under different schedules.}
\vspace{-0.3cm}
\resizebox{\linewidth}{!}{
    \begin{tabular}{c|c|c|c|c}
        learning rate schedule &$\eta_t$ formula& upper bound of $\E L(\w_T)$ &optimal loss bound& optimal $\etap^*(T)$  \\\hline
         constant & $\etap$ & $L_* + \frac{D^2}{2T \etap} + \frac{\etap G^2}{2} \ln T$& $L_* + D G \sqrt{\frac{\ln T}{T}}$ & $\frac{D}{G \sqrt{\ln T\cdot T}}$\\
         square-root inverse& $\etap/\sqrt{t}$ & $L_* + \frac{D^2}{4\sqrt{T}\etap} + \frac{\etap G^2\ln T}{4\sqrt{T}}$ & $L_*+D G \sqrt{\frac{\ln T}{4 T}}$ & $\frac{D}{G \sqrt{\ln T}}$\\
         linear decaying& $\eta_{\text {peak }}(1-t / T)$ & $L_*+\frac{D^2}{T \eta_{\text {peak }}}+\eta_{\text {peak }} G^2$ & $L_*+2 D G \sqrt{\frac{1}{T}}$ & $\frac{D}{G \sqrt{T}}$ \\
         cosine decaying& $\eta_{\text {peak }} \frac{1+\cos (\pi t / T)}{2}$ & $L_*+\frac{D^2}{T \eta_{\text {peak }}}+\eta_{\text {peak }} G^2 \cdot 1.061$ & $L_*+2 D G \sqrt{\frac{1.061}{T}}$ & $\frac{D}{G \sqrt{1.061 T}}$\\
          warmup-stable-decay& $\begin{cases}
             \etap &\text{if }t<cT\\
              \etap\frac{T-t}{T-cT}&\text{if }t\geq cT
          \end{cases}$
          &$L_*+\frac{D^2}{(1+c)T\etap}+ \etap G^2
\left[1+\frac{1}{2}\ln\left(\frac{1+c}{1-c}\right)\right]$& $L_* +  2DG \sqrt{\frac{1+\frac{1}{2}\ln\left(\frac{1+c}{1-c}\right)}{(1+c)T}}$ & $\frac{D}{G \sqrt{(1+c)\left(1+\frac{1}{2}\ln\left(\frac{1+c}{1-c}\right)\right) T}}$ 
          \end{tabular}
}
\label{tab:theorem1}
\end{table}

We summarize some insightful observations in \Cref{tab:theorem1}. 
\begin{itemize}
\item The optimal loss convergence rate is $O(1/\sqrt{T})$ and it can be achieved by some schedules like linear, cosine decay and warmup-stable-decay (WSD or trapezoid; \citep{xing2018walk,hagele2024scaling}), which we term as the \textit{qualified schedules}. In contrast, some other schedules like constant learning rate only achieve a suboptimal rate $O(\sqrt{\ln T/T})$.
\item The qualified schedules have common patterns: (1) the schedules are horizon-aware, i.e. $\eta_t$ is dependent on $T$, in constrast to the constant and square-root inverse schedules; (2) the loss bounds take the following form for some constants $q_1, q_2$:
\begin{align}
\E L(\w_T)\lesssim L_*+\frac{q_1^2}{T\etap}+\etap q_2^2 := L_\text{SGD-last}(\etap,T)
\label{eq:peak and loss}
\end{align}
\item The optimal learning rate of a qualified schedule is $O(1/\sqrt{T})$.
\end{itemize}
While \Cref{col:last-iter} have considered specific schedules and optimal peak learning rate, in what follows, we will generalize to non-optimal peak learning rates in \Cref{sec:non-optimal peak lr} and introduce an exam to select qualified schedules in \Cref{sec:qualify}.

\subsection{Optimal convergence under scaled learning rate}
\label{sec:non-optimal peak lr}
We show in \Cref{col:sgd ref} that any scaled $\etap$ can achieve the optimal $O(1/\sqrt{T})$ loss convergence, regardless of whether $\etap$ is optimal. Here, we scale $\etap$ by dividing any reference learning rate $\etar$ to $\sqrt{T}$.
\begin{corollary}
\label{col:sgd ref}
Consider a learning rate schedule and $\etap$ that satisfy \eqref{eq:peak and loss} for SGD under \Cref{def:convex and Lipschitz}.
\begin{enumerate}
    \item Any $\etar\in\R_+$ achieves the asymptotically optimal convergence rate:
\[
L_\text{SGD-last}(\etap=\etar/\sqrt{T},T) - L_* \sim Q(\etar)/\sqrt{T}=O\left(1/\sqrt{T}\right),
\]
in which we define $Q(\etar):={q_1^2}/\etar+\etar q_2^2$. 
\item Additionally, the optimal $\etar^*=\textup{argmin}_{\etar} Q=q_1/q_2$, and $L_\text{SGD-last}(T) - L_* \sim\frac{2q_1 q_2}{\sqrt{T}}$.
\end{enumerate}
\end{corollary}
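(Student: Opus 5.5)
The plan is to substitute the scaled peak learning rate directly into the asymptotic form \eqref{eq:peak and loss} and simplify. The hypothesis is that the schedule together with $\etap$ satisfies \eqref{eq:peak and loss}. That is, the last-iterate bound \eqref{eq:last lr array} evaluated on the qualified schedule reduces, up to lower-order terms, to $L_\text{SGD-last}(\etap,T)=L_*+\frac{q_1^2}{T\etap}+\etap q_2^2$, with constants $q_1,q_2$ independent of $T$ and $\etap$. In \Cref{tab:theorem1} these constants are, for example, $q_1=D$ and $q_2=G$ for linear decay, and $q_2^2=1.061G^2$ for cosine.

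For part 1, set $\etap=\etar/\sqrt{T}$. The first term becomes $\frac{q_1^2}{T\cdot \etar/\sqrt{T}}=\frac{q_1^2}{\etar\sqrt{T}}$, and the second becomes $\frac{\etar q_2^2}{\sqrt{T}}$. Their sum is exactly $Q(\etar)/\sqrt{T}$. Since $\etar$ is a fixed positive constant, $Q(\etar)$ is a finite constant, so the excess loss is $O(1/\sqrt{T})$.

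For part 2, note that $Q$ is strictly convex on $\R_+$, because $Q''(\etar)=2q_1^2/\etar^3>0$. Setting $Q'(\etar)=-q_1^2/\etar^2+q_2^2=0$ gives $\etar^*=q_1/q_2$. Alternatively, AM--GM gives $Q(\etar)\ge 2q_1q_2$, with equality exactly at that point. Substituting back yields $Q(\etar^*)=2q_1q_2$, so the optimized bound is $2q_1q_2/\sqrt{T}$. This agrees with the optimal entries of \Cref{tab:theorem1}, which serves as a sanity check; for instance, $2DG/\sqrt{T}$ for linear decay.

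The main obstacle is making the symbol $\sim$ rigorous. Relation \eqref{eq:peak and loss} is stated only up to $\lesssim$, and it came from asymptotic estimates of the sums in \eqref{eq:last lr array} (\Cref{col:last-iter}). One must check that the neglected terms are $o(1/\sqrt{T})$ when $\etap=\etar/\sqrt{T}$. The error terms from \Cref{col:last-iter} scale like $\etap\cdot o(1)$ or $\frac{1}{T\etap}\cdot o(1)$, for example from Riemann-sum approximations of the schedule and from endpoint terms. Under the scaling $\etap\propto T^{-1/2}$, each of these is then $o(T^{-1/2})$, so the asymptotic equivalence is preserved. The step needing care is confirming that these error terms are uniform in $\etap$, so that they stay negligible after the substitution; I expect this to follow from the constants in \Cref{col:last-iter} depending only on the schedule shape $s_t$ and not on $\etap$.
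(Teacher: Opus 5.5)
Your proposal is correct and is the direct argument the corollary rests on: since $L_\text{SGD-last}$ is defined as the right-hand side of \eqref{eq:peak and loss}, substituting $\etap=\etar/\sqrt{T}$ gives exactly $Q(\etar)/\sqrt{T}$, and minimizing $Q$ by calculus or AM--GM gives $\etar^*=q_1/q_2$ and the value $2q_1q_2/\sqrt{T}$. Your uniformity concern also resolves immediately: with $\eta_t=\etap s_t(T)$, the bound \eqref{eq:last lr array} is exactly of the form $L_*+a(T)/\etap+\etap\, b(T)$, where $a$ and $b$ depend only on the schedule shape, so the neglected terms stay $o(1/\sqrt{T})$ after the substitution.
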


\subsection{Qualifying exam for learning rate schedules}
\label{sec:qualify}

We \textit{qualify} a learning rate schedule function $s_t(T)$ if \eqref{eq:peak and loss} satisfies the following
\begin{align*}
L_\text{SGD-last}(\etap={1}/{\sqrt{T}},T) - L_*= O\left(1/{\sqrt{T}}\right),
\end{align*}
where we simply choose $\etar=1$ as supported by \Cref{col:sgd ref} part 1.

We highlight that the qualifying exam can be conducted in a training-free way without running any model optimization. We propose to employ symbolic analysis in \Cref{def:qualifying}, and the full derivation from discrete summation in \eqref{eq:last lr array} to continuous definite integrals can be found in \Cref{app:condition24}.
\begin{condition}
\label{def:qualifying}
We claim a learning rate schedule function $s_t(T)\in [0,1]$ is qualified, if the following holds with $\eta_t(T):=s_t(T)/\sqrt{T}$:
\begin{align*}
\frac{D^2}{2\int_0^T\eta_t dt}+\frac{ G^2}{2}\int_0^T \left(\frac{\eta_t^2}{ \int_{t}^T\eta_k dk}\right)dt=O(1/\sqrt{T}).
\end{align*}
\end{condition}



Theoretically, we prove in \Cref{col:last-iter} that 
\begin{itemize}
    \item linear decaying, cosine decaying, and WSD schedules pass the qualifying exam;
    \item constant and square-root inverse schedules fail.
\end{itemize}

Empirically, we validate the effectiveness of our qualifying exam in \Cref{fig:sgd multi}.

\begin{figure}[!htb]
\centering
\includegraphics[width=0.195\linewidth]{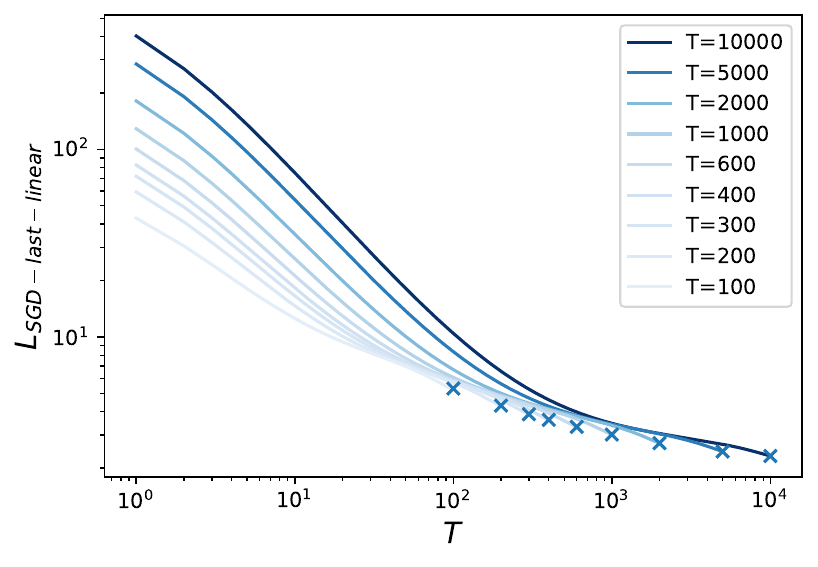}
\includegraphics[width=0.19\linewidth]{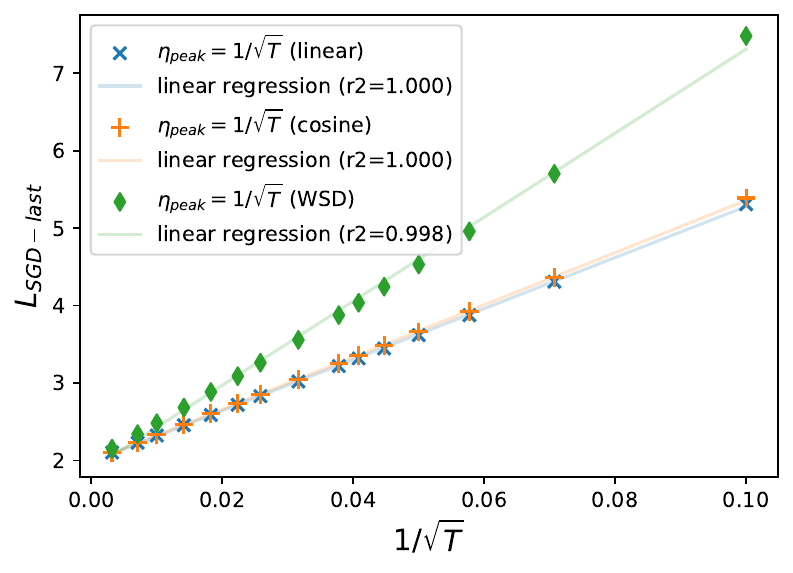}
\includegraphics[width=0.195\linewidth]{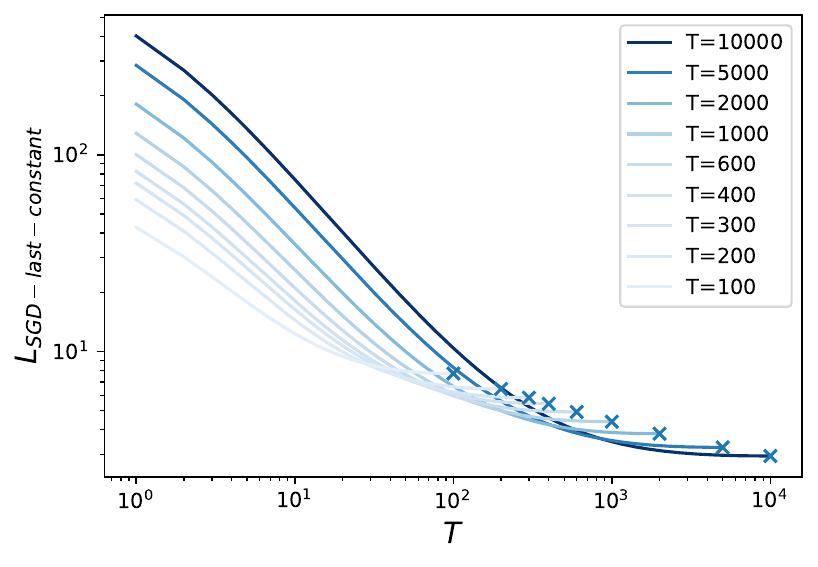}
\includegraphics[width=0.19\linewidth]{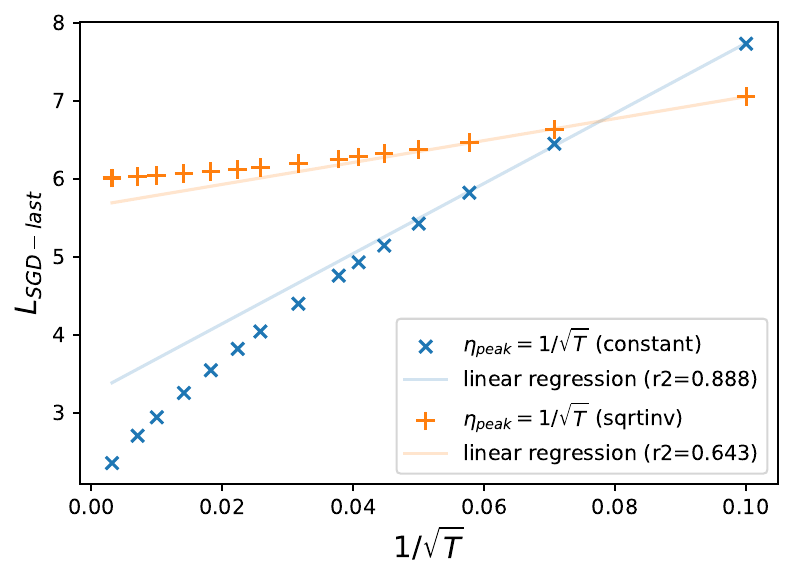}
\includegraphics[width=0.19\linewidth]{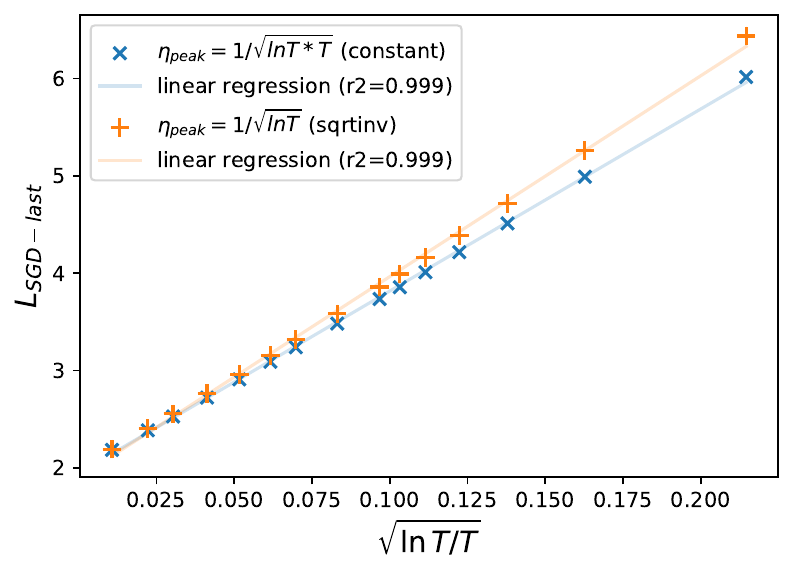}
\vspace{-0.3cm}
\caption{Upper bound of SGD loss in \Cref{eq:last lr array} with peak learning rate $\etap=1/\sqrt{T}$. Left-most is linear decaying schedule. Center is constant schedule.
}
\label{fig:sgd multi}
\end{figure}

\vspace{-0.3cm}
\section{Generalizations to deep learning and adaptive optimizers}
\label{sec:within run}

\subsection{Abstract form of any-iteration loss}

Our goal is to generalize the understanding \textbf{from convex analysis with SGD} beyond the scope of \eqref{eq:last lr array}, \textbf{to non-convex deep learning and to general optimizers}. However, we cannot use \eqref{eq:last lr array} directly as we do not know $D,G,L_*$. Alternatively, we will estimate these quantities in a data-driven manner and work with an abstract form of the loss in \eqref{eq:last lr array DL}.

\begin{abstraction}[generalized from \eqref{eq:last lr array}]
\label{abs:first}
For general optimizers and for deep learning, we have
\begin{align}
\E L(\w_\tau)\leq \tilde L_\infty+\frac{\tilde D^2}{2\sum_{t=1}^\tau \eta_t}+ \frac{\tilde  G^2}{2}\left(\frac{\sum_{t=1}^\tau\eta_t^2}{\sum_{t=1}^\tau \eta_t}+\sum_{k=1}^{\tau-1}\frac{\eta_k}{\sum_{t=k+1}^\tau\eta_t}\frac{\sum_{t=k}^\tau \eta_t^2}{\sum_{t=k}^\tau \eta_t}\right)
\label{eq:last lr array DL}
\end{align}
\end{abstraction}
In contrast to \eqref{eq:last lr array}, we require the following 
modifications:
\begin{itemize}
    \item The quantities $\tilde D$ and $\tilde G$ are no longer tied to $D=\|\w_0-\w_*\|$ and $G^2=\max_\w \E\|\g(\w)|\|^2$, i.e. we renounce the physical meaning of these quantities, because
    \begin{itemize}
        \item general optimizers (e.g. momentum SGD, AdamW, Muon, parameter-efficient training such as LoRA) are not covered by \eqref{eq:last lr array} ever under the convex setting in \Cref{def:convex and Lipschitz}.
        \item $D,G$ depend on the model architectures and datasets, which are hard to derive in practice. 
    \end{itemize}
    \item The irreducible loss is now $\tilde L_\infty=L(\lim_{\tau\to\infty}\w_\tau)$ instead of $L_*=L(\w_*)$. This accommodates the fact that there are infinite minima in deep learning, hence $\w_*$ is not unique and $L_*$ is not well-defined.
\end{itemize}

As we will observe in the following sections, \eqref{eq:last lr array DL} holds well at all iterations and becomes tight after some initial training. To be clear, we examine \eqref{eq:last lr array DL} on various
\begin{itemize}
    \item models: ResNet \citep{he2016deep}, ViT \citep{dosovitskiy2020image}, GPT2 \citep{radford2019language}, vision-language models (which use LLAMA3 architecture \citep{dubey2024llama} as the language backbone).
    \item tasks: ImageNet \citep{deng2009imagenet}, OpenWebText \citep{Gokaslan2019OpenWeb}, and Cauldron \citep{laurençon2024matters}.
    \item optimizers: SGD, AdamW \cite{loshchilovdecoupled}, Muon \citep{jordan2024muon}, parameter-efficient training (LoRA \citep{hu2022lora}), etc.
    \item learning rate schedules: linear decay, cosine decay, WSD, constant, and square-root inverse.
\end{itemize}

\subsection{Generalizing to deep learning}
We start with SGD (same as \eqref{eq:last lr array}) but in the deep learning setting. In \Cref{fig:resnet-imagenet}, we train ResNet18 on ImageNet dataset with SGD under 4 learning rate schedules. The goodness of fit\footnote{We fit a non-negative linear regression $\bm{y}\sim\bm{X}\bm{\beta}+\tilde L_\infty$ where $\bm{y}\in\R^T, \bm{\beta}=[\tilde D, \tilde G]^\top, \bm{X}\in\R^{T\times 2}$, and specifically $y_\tau= L(\w_\tau), X_{\tau,1}=\frac{1}{2\sum_{t=1}^\tau \eta_t}, X_{\tau,2}=\frac{1}{2}\left(\frac{\sum_{t=1}^\tau\eta_t^2}{\sum_{t=1}^\tau \eta_t}+\sum_{k=1}^{\tau-1}\frac{\eta_k}{\sum_{t=k+1}^\tau\eta_t}\frac{\sum_{t=k}^\tau \eta_t^2}{\sum_{t=k}^\tau \eta_t}\right)$.} for \eqref{eq:last lr array DL} is particularly evident for WSD and cyclic schedules: for WSD schedule, we see a sudden decrease of loss matching the decay of learning rate; for cyclic schedule, we see the periodic oscillation of loss matching that of learning rate. In particular, we use half the iterations to fit (solid line) and half to predict (dashed line). Our loss prediction by \eqref{eq:last lr array DL} is tight after a short period of training, with $R^2$ score $\geq 0.95$, highlighting the strong applicability of \Cref{abs:first}.
\begin{figure}[H]
    \centering
    \hspace{-0.1cm}
    \begin{subfigure}[b]{0.20\textwidth} \includegraphics[width=\linewidth]{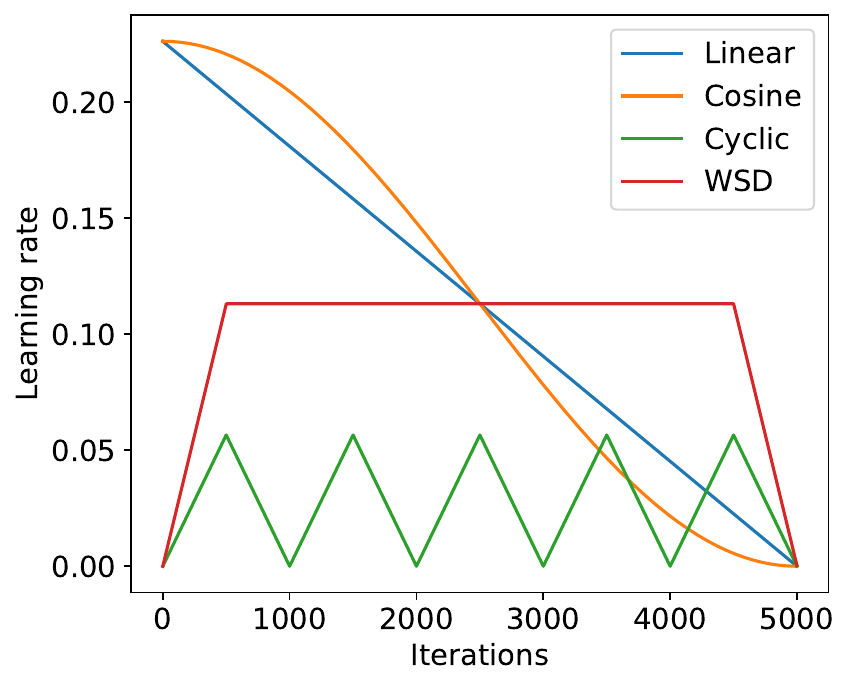}
    \caption{schedules}
    \end{subfigure}
    \hspace{-0.22cm}
    \begin{subfigure}[b]{0.20\textwidth}
    \includegraphics[width=\linewidth]{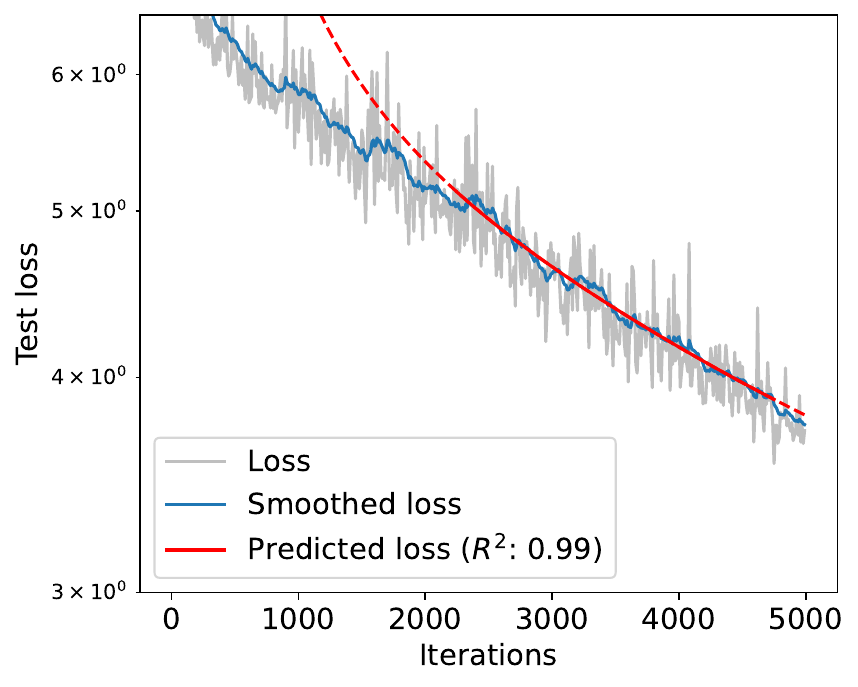}
    \caption{Linear}
    \end{subfigure}
    \hspace{-0.22cm}
    \begin{subfigure}[b]{0.20\textwidth}
\includegraphics[width=\linewidth]{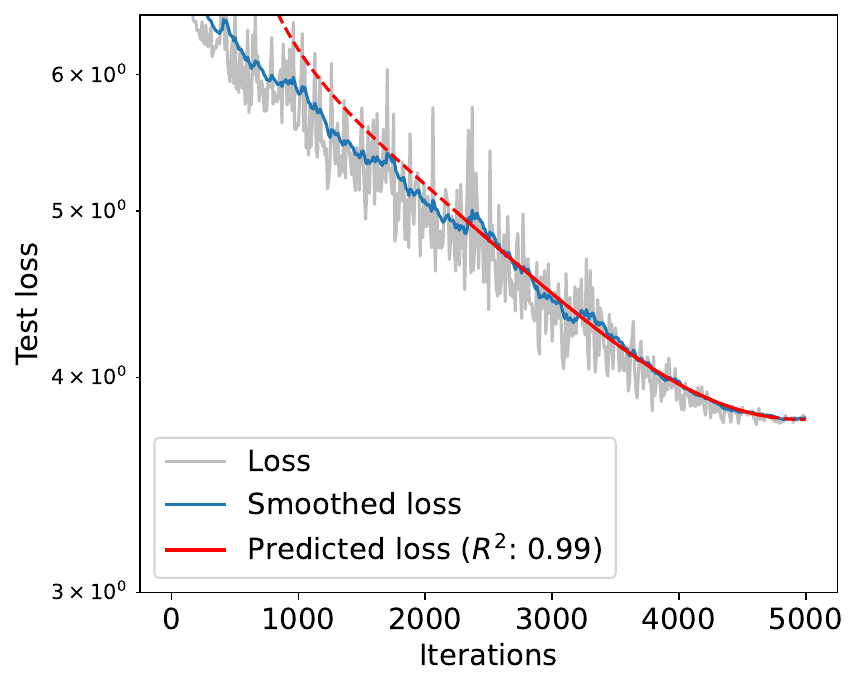}
    \caption{Cosine}
    \end{subfigure}
    \hspace{-0.22cm}
    \begin{subfigure}[b]{0.20\textwidth}
    \includegraphics[width=\linewidth]{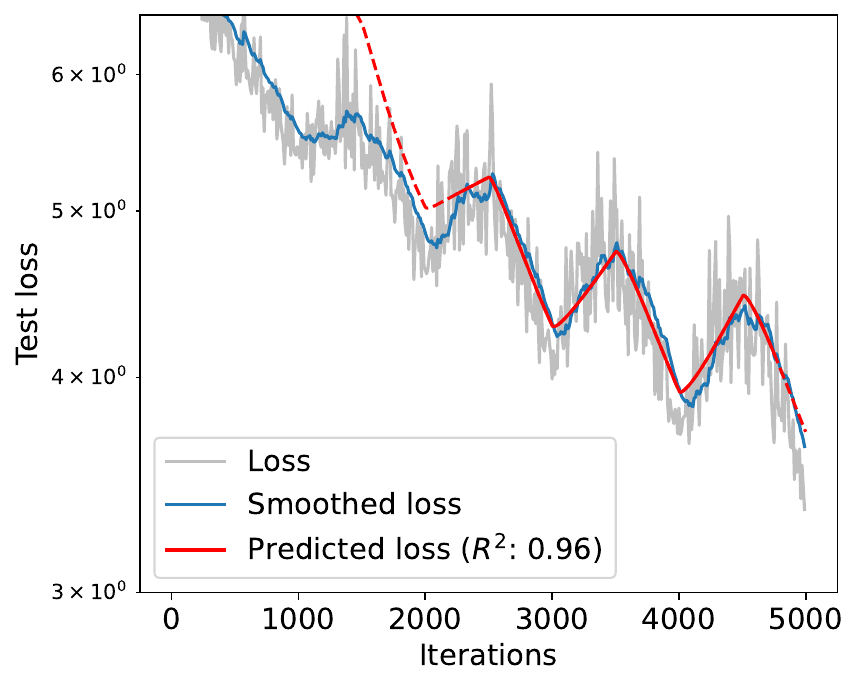}
    \caption{Cyclic}
    \end{subfigure}
    \hspace{-0.22cm}
    \begin{subfigure}[b]{0.20\textwidth}
    \includegraphics[width=\linewidth]{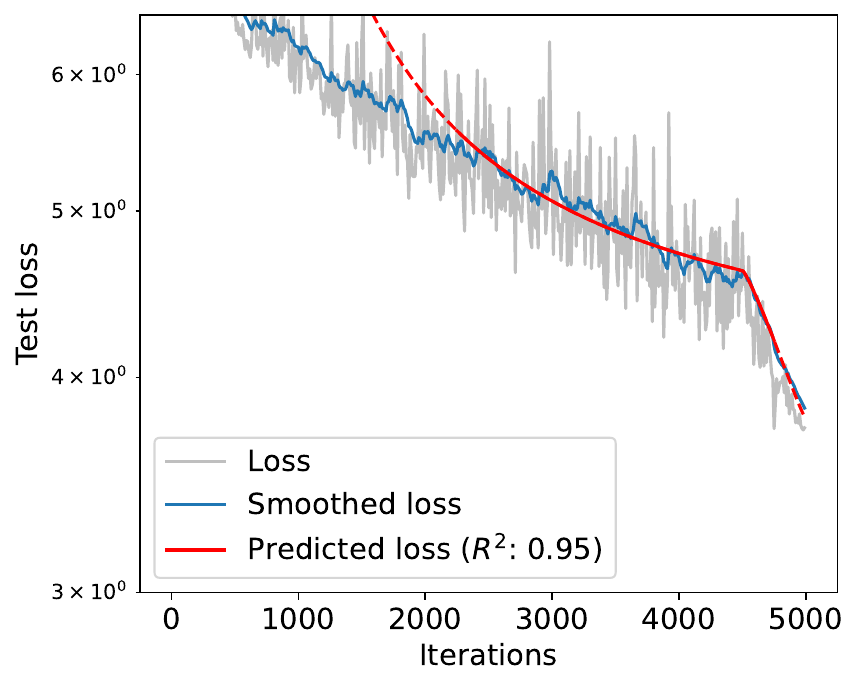}
    \caption{WSD}
    \end{subfigure}
    \caption{Sequence-to-sequence prediction by \eqref{eq:last lr array DL} for ResNet18 on ImageNet with SGD.}
    \label{fig:resnet-imagenet}
\end{figure}



\subsection{Generalizing to adaptive optimizers}
\label{sec:adamw}
We further test adaptive optimizers beyond SGD.
Firstly, we compare ResNet18 trained with AdamW in \Cref{fig:resnet-imagenet-adamw} to SGD in \Cref{fig:resnet-imagenet} and observe the same patterns that \eqref{eq:last lr array DL} holds with $R^2$ score $\geq 0.95$.



\begin{figure}[H]
    \centering
    \begin{subfigure}[b]{0.19\textwidth} \includegraphics[width=\linewidth]{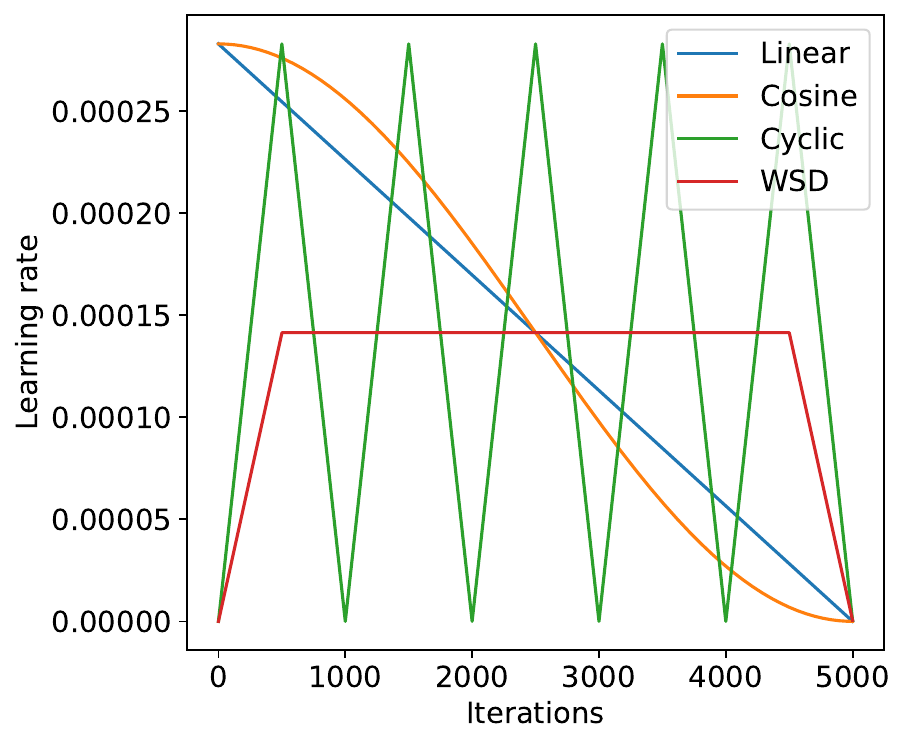}
    \caption{schedules}
    \label{fig:imagenet-lr-schedule}
    \end{subfigure}
    \hfill
    \begin{subfigure}[b]{0.19\textwidth}
    \includegraphics[width=\linewidth]{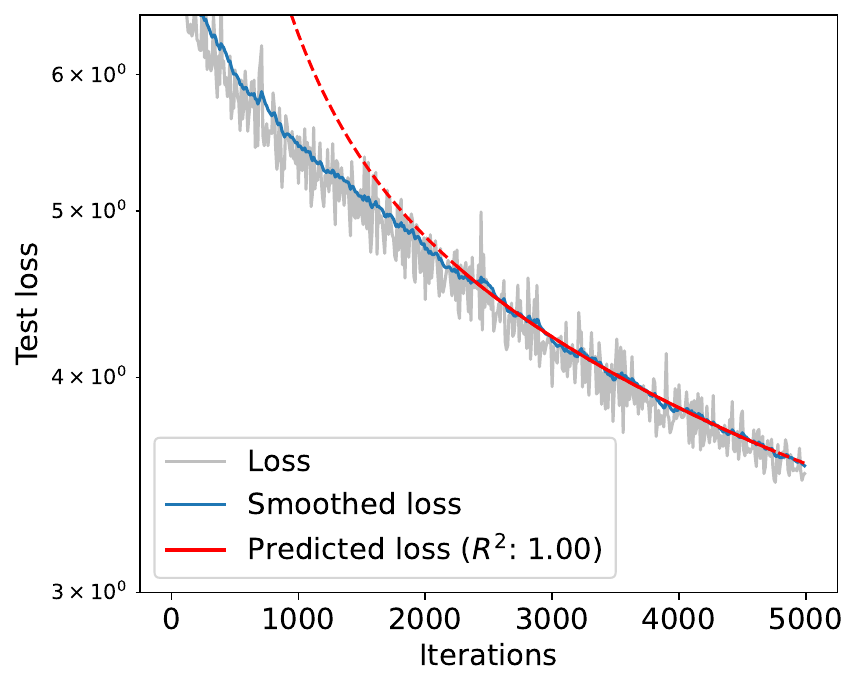}
    \caption{Linear}
    \label{fig:imagenet-linear}
    \end{subfigure}
    \hfill
    \begin{subfigure}[b]{0.19\textwidth}
\includegraphics[width=\linewidth]{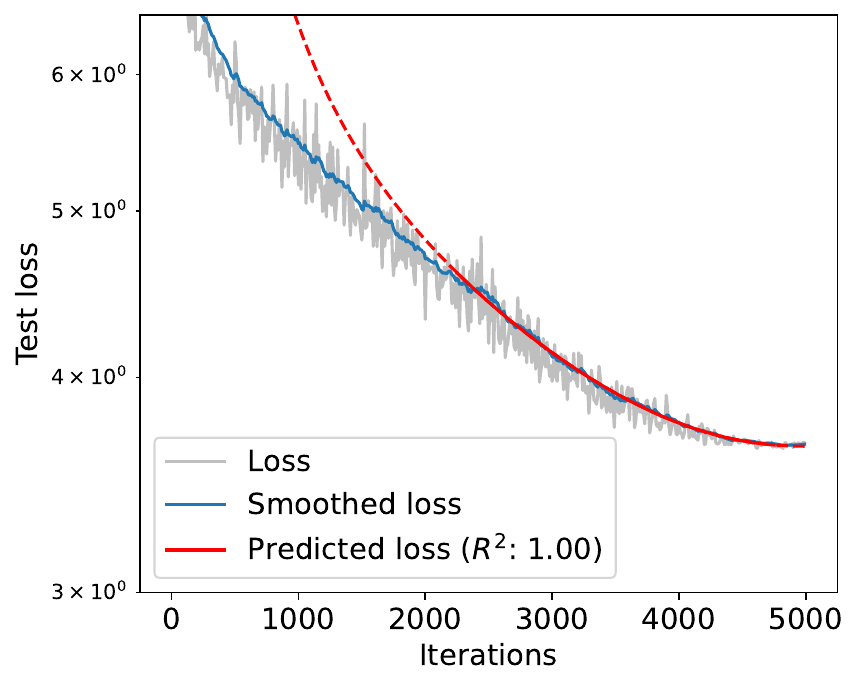}
    \caption{Cosine}
    \label{fig:imagenet-cosine}
    \end{subfigure}
    \hfill
    \begin{subfigure}[b]{0.19\textwidth}
    \includegraphics[width=\linewidth]{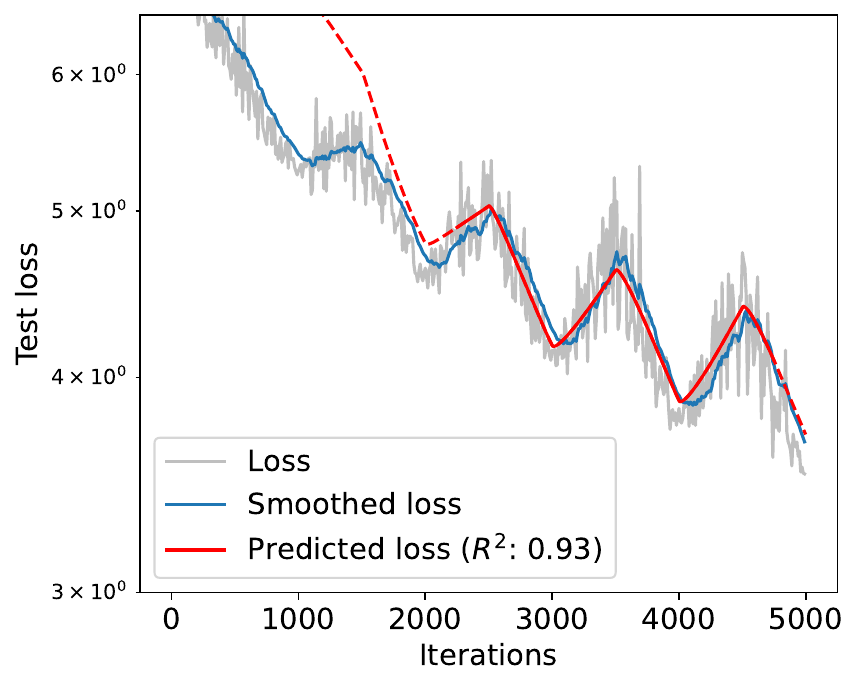}
    \caption{Cyclic}
    \label{fig:imagenet-cycle}
    \end{subfigure}
    \begin{subfigure}[b]{0.19\textwidth}
    \includegraphics[width=\linewidth]{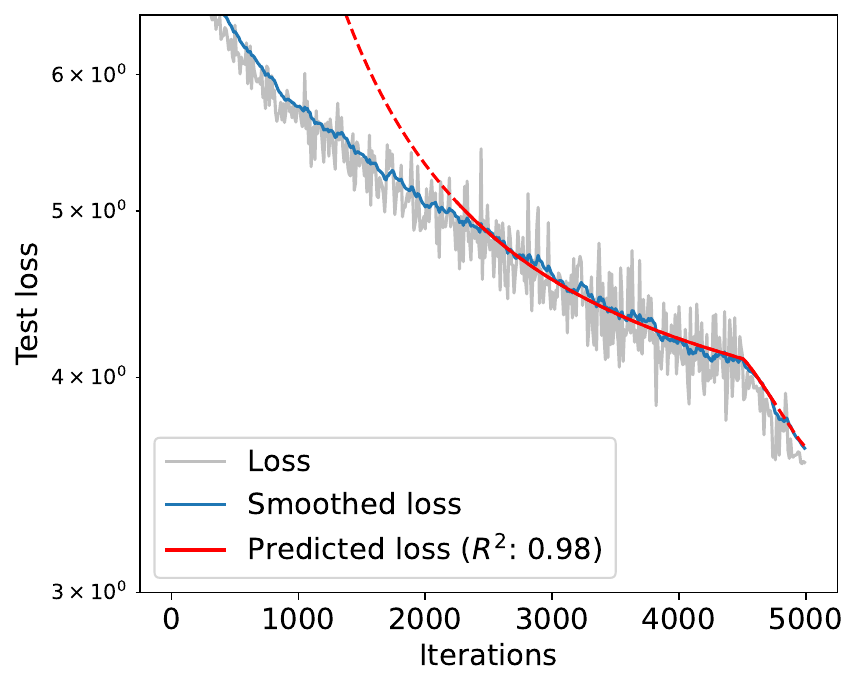}
        \caption{WSD}
    \label{fig:imagenet-wsd}
    \end{subfigure}
    \vspace{-0.2cm}
    \caption{Sequence-to-sequence prediction by \eqref{eq:last lr array DL} for ResNet18 on ImageNet with AdamW.}
    \label{fig:resnet-imagenet-adamw}
\end{figure}

We further pre-train GPT2 (124M) with AdamW and Muon-NSGD \citep{boreiko2025towards} optimizers, and observe consistent patterns in \Cref{fig:nanogpt-adamw} and \Cref{fig:nanogpt-muon}. The prediction $R^2$ scores are $\geq0.95$ in all cases.

\begin{figure}[!htb]
    \centering
    \begin{subfigure}[b]{0.19\textwidth} \includegraphics[width=\linewidth]{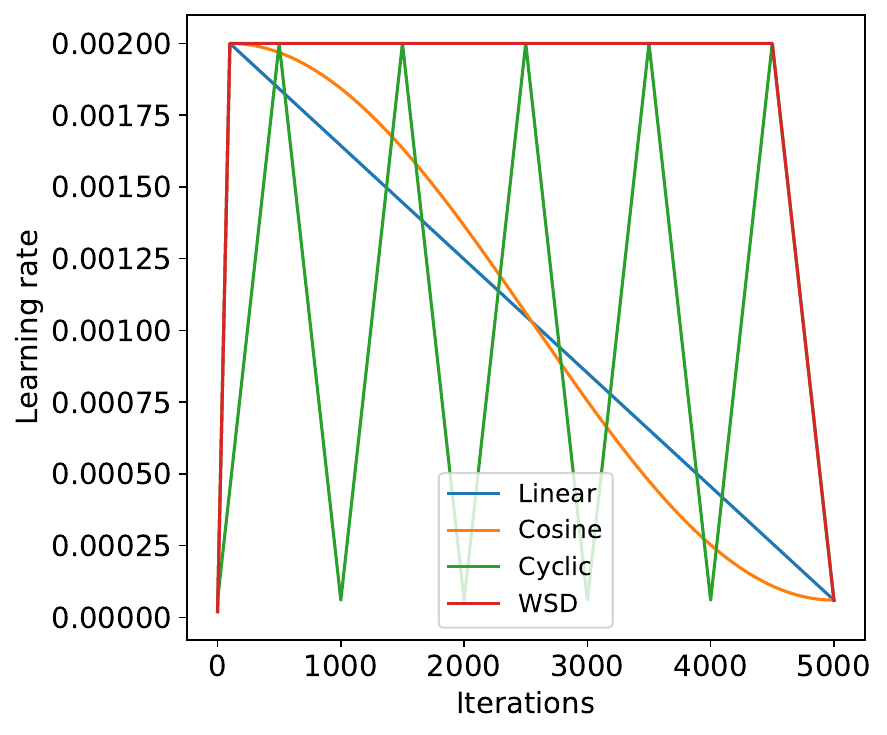}
    \caption{schedules}
    \label{fig:nanogpt-adamw-lr-schedule}
    \end{subfigure}
    \hfill
    \begin{subfigure}[b]{0.19\textwidth}
    \includegraphics[width=\linewidth]{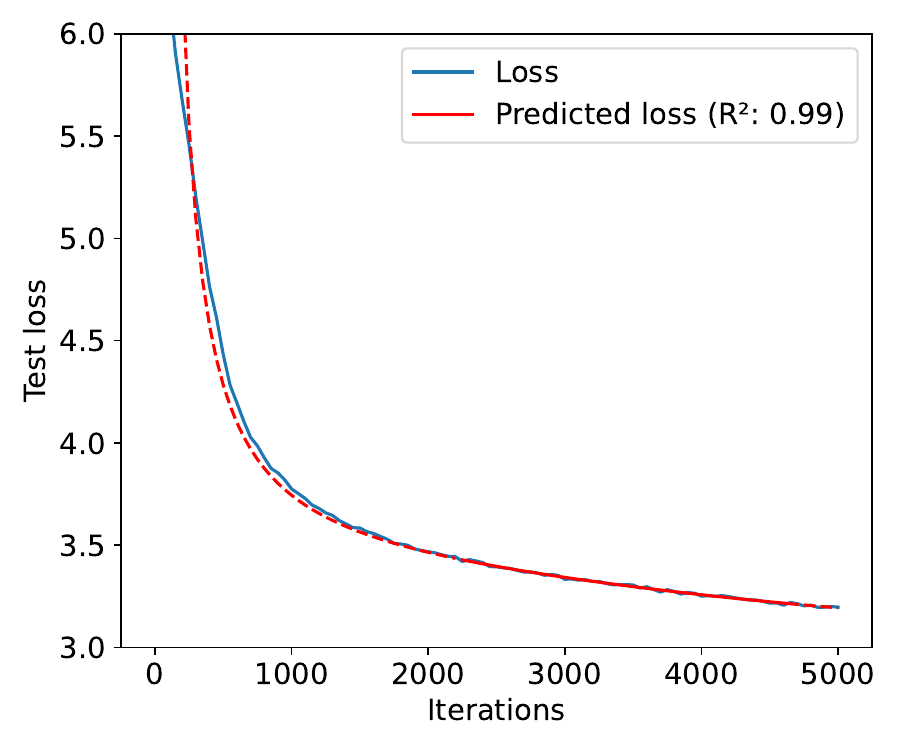}
    \caption{Linear}
    \label{fig:nanogpt-adamw-linear}
    \end{subfigure}
    \hfill
    \begin{subfigure}[b]{0.19\textwidth}
\includegraphics[width=\linewidth]{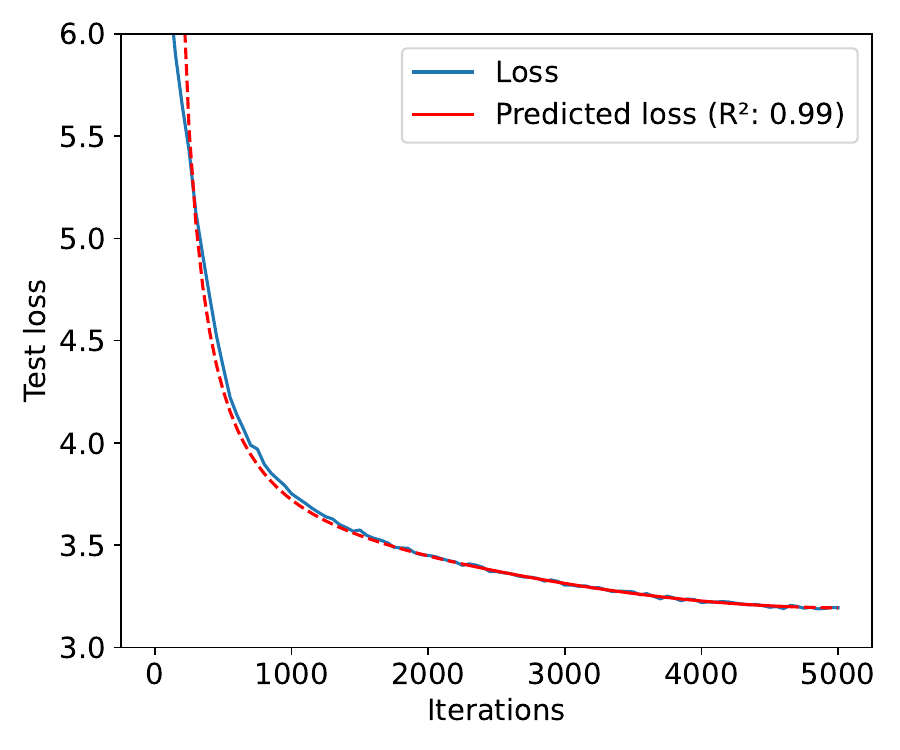}
    \caption{Cosine}
    \label{fig:nanogpt-adamw-cosine}
    \end{subfigure}
    \hfill
    \begin{subfigure}[b]{0.19\textwidth}
    \includegraphics[width=\linewidth]{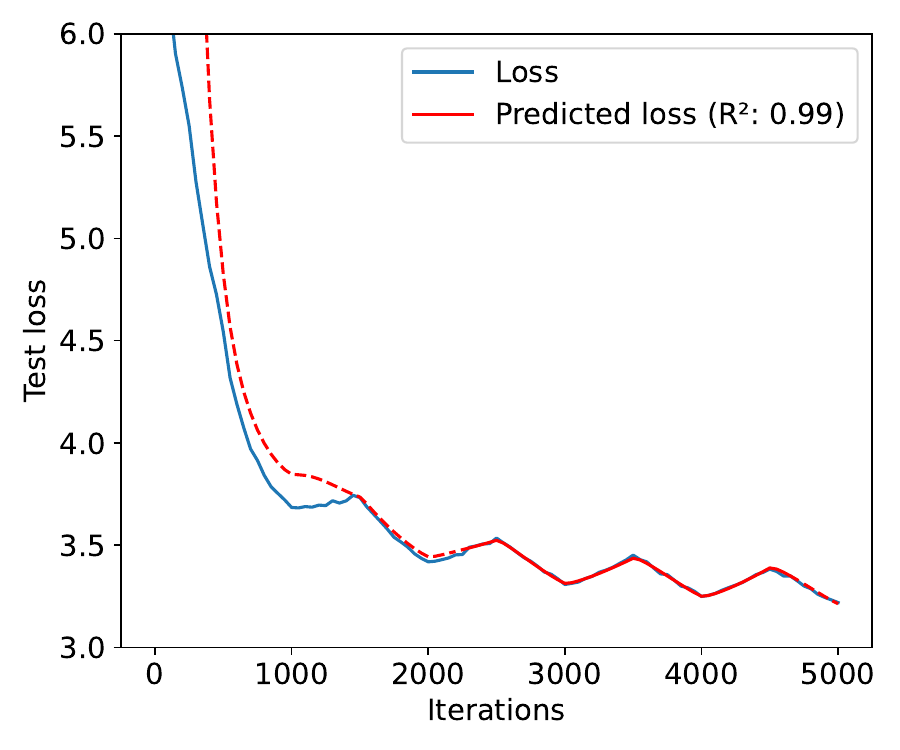}
    \caption{Cyclic}
    \label{fig:nanogpt-adamw-cycle}
    \end{subfigure}
    \begin{subfigure}[b]{0.19\textwidth}
    \includegraphics[width=\linewidth]{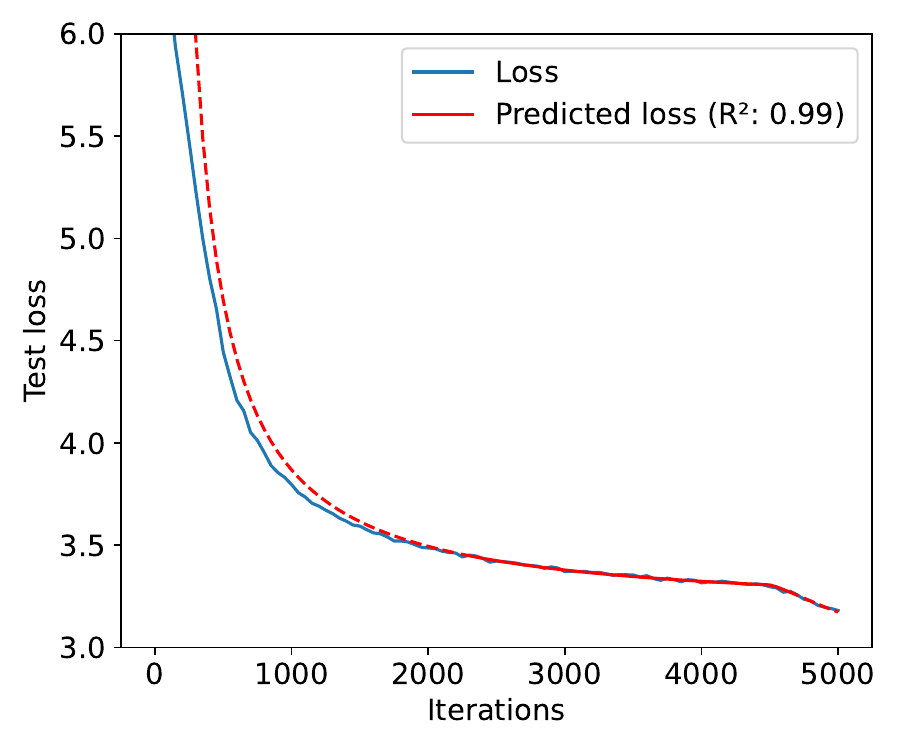}
        \caption{WSD}
    \label{fig:nanogpt-adamw-wsd}
    \end{subfigure}
    \vspace{-0.2cm}
    \caption{Sequence-to-sequence prediction by \eqref{eq:last lr array DL} for GPT2 on OpenWebText with AdamW.}

    \label{fig:nanogpt-adamw}
\end{figure}

\begin{figure}[!htb]
    \centering
    \begin{subfigure}[b]{0.19\textwidth} \includegraphics[width=\linewidth]{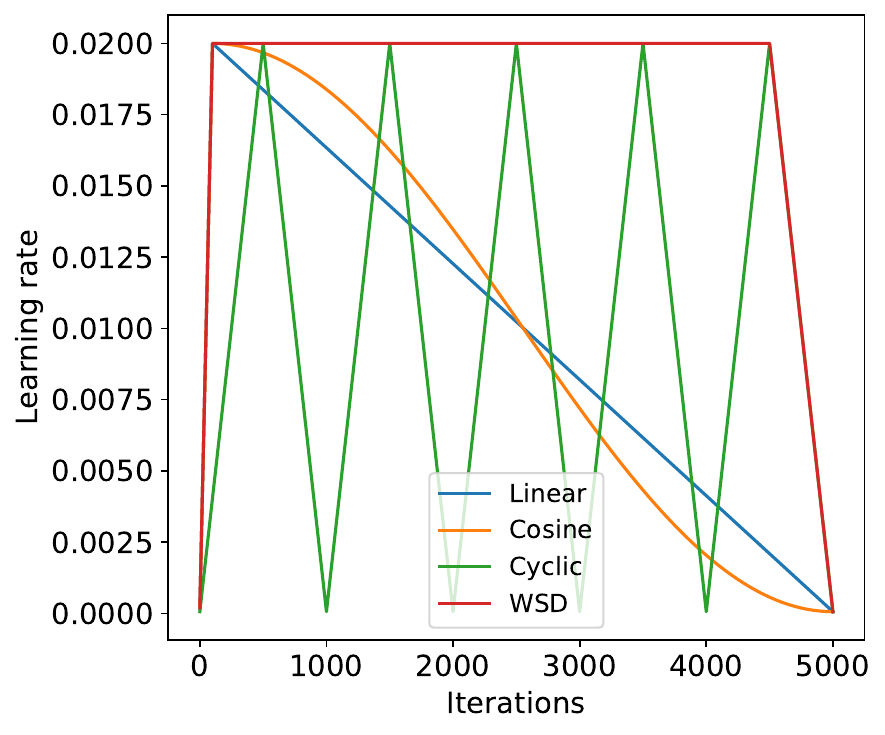}
    \caption{schedules}
    \label{fig:nanogpt-muon-lr-schedule}
    \end{subfigure}
    \hfill
    \begin{subfigure}[b]{0.19\textwidth}
    \includegraphics[width=\linewidth]{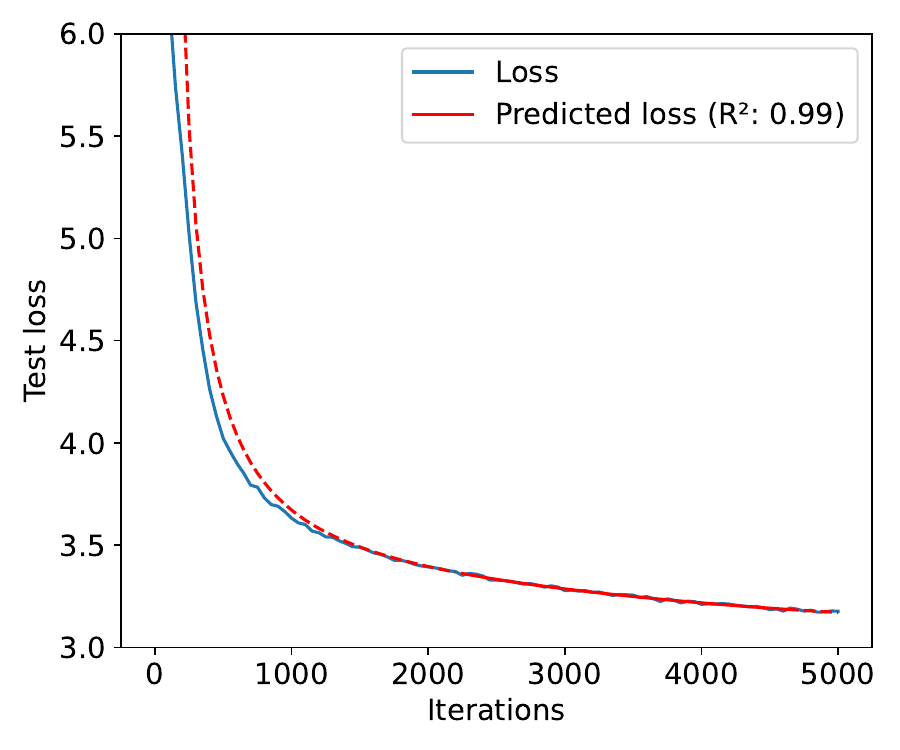}
    \caption{Linear}
    \label{fig:nanogpt-muon-linear}
    \end{subfigure}
    \hfill
    \begin{subfigure}[b]{0.19\textwidth}
\includegraphics[width=\linewidth]{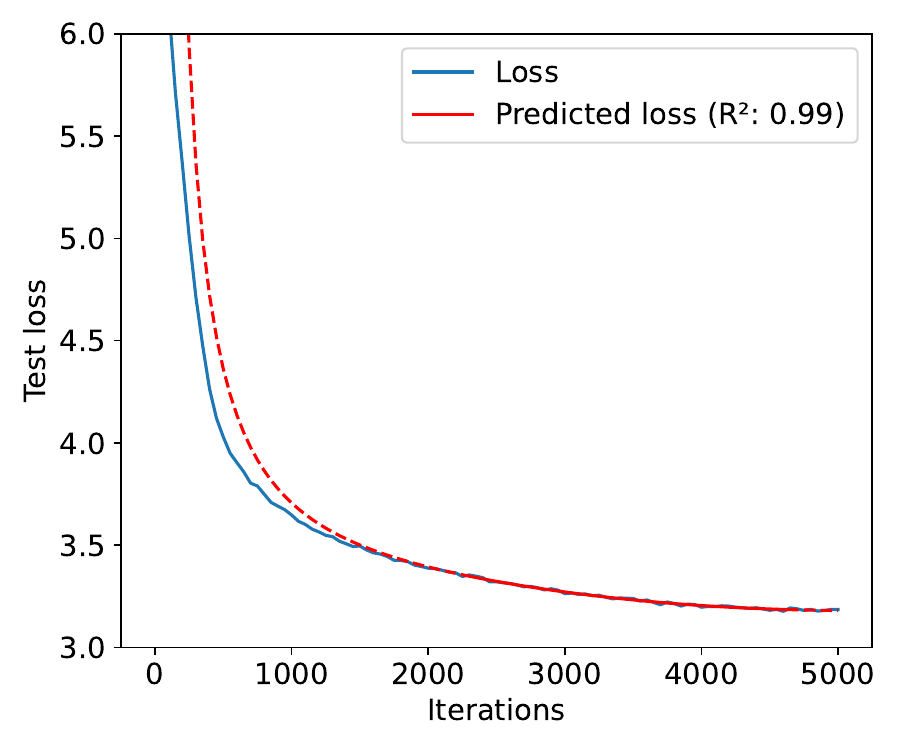}
    \caption{Cosine}
    \label{fig:nanogpt-muon-cosine}
    \end{subfigure}
    \hfill
    \begin{subfigure}[b]{0.19\textwidth}
    \includegraphics[width=\linewidth]{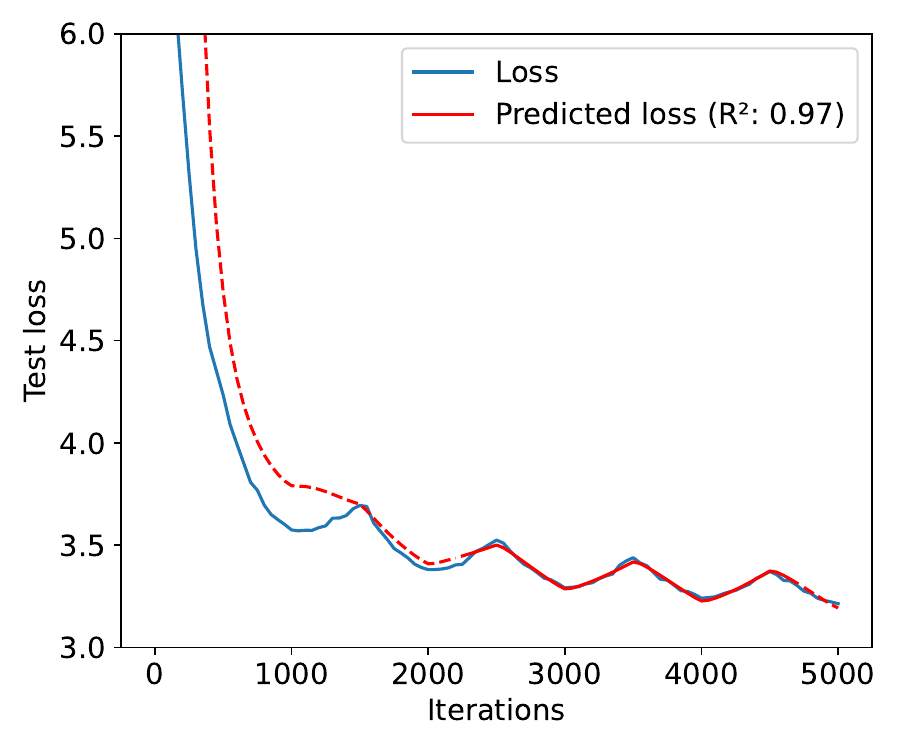}
    \caption{Cyclic}
    \label{fig:nanogpt-muon-cycle}
    \end{subfigure}
    \begin{subfigure}[b]{0.19\textwidth}
    \includegraphics[width=\linewidth]{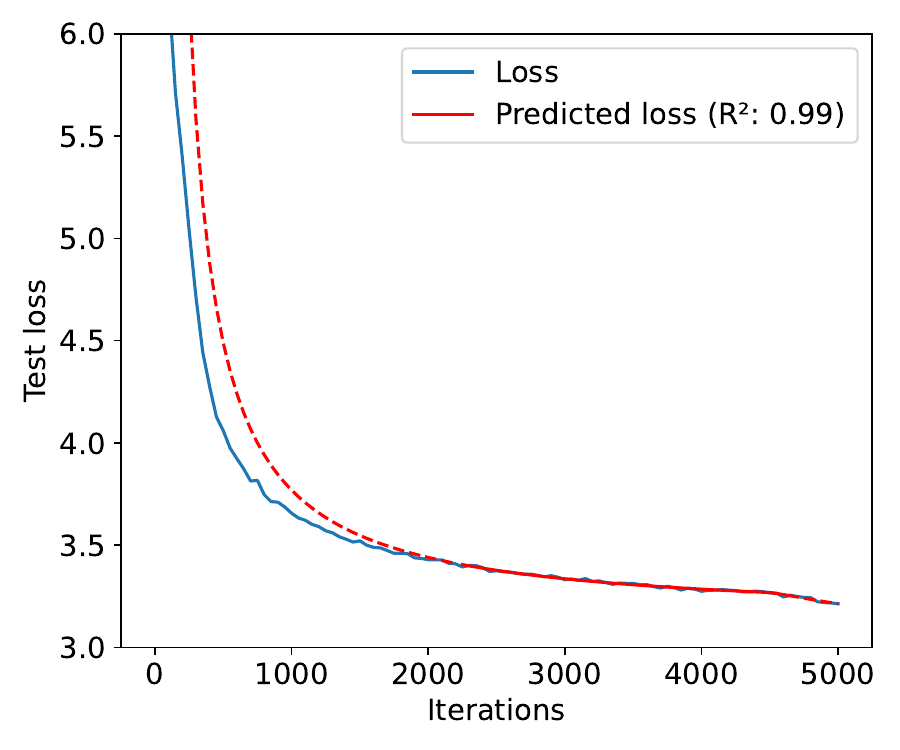}
        \caption{WSD}
    \label{fig:nanogpt-muon-wsd}
    \end{subfigure}
    \vspace{-0.2cm}
    \caption{Sequence-to-sequence prediction by \eqref{eq:last lr array DL} for GPT2 on OpenWebText with Muon-NSGD.}

    \label{fig:nanogpt-muon}
\end{figure}
\vspace{-0.3cm}

\section{Loss characterization at last iteration}
\label{sec:4}


Towards building a scaling law of loss and learning rate, we focus on the last iterate $\w_T$ and further abstract the loss characterization in \eqref{eq:last lr array DL}. 

\begin{abstraction}[generalized from \eqref{eq:peak and loss}]
\label{abs:dl last lr}
For general optimizers under deep learning and for a qualified learning rate schedule with any peak learning rate  $\etap$, we have 
\begin{align}
\E L(\w_T)\sim  \tilde L_\infty+\frac{\tilde q_1^2}{T\etap}+\etap \tilde q_2^2 := L_\text{DL-last}(\etap,T)
\label{eq:peak and loss DL}
\end{align}
\end{abstraction}
\vspace{-0.3cm}
In contrast to \Cref{abs:first}, we require the following modifications:
\begin{itemize}
\item We restrict our analysis to the qualified schedules that satisfy \Cref{def:qualifying}, such as linear and cosine decay, instead of any schedules.
\item We renounce the exact forms of $\tilde q_1$ and $\tilde q_2$ (e.g. $\tilde q_1=\tilde D, \tilde q_2=\tilde G$ for linear decay; $\tilde q_2=\sqrt{1.061}\tilde G$ for cosine decay; $\tilde q_1=\tilde D/(1+c)$ for WSD) and will derive these quantities in a data-driven manner.
\item We focus on $\w_T$ rather than any iterate $\w_\tau$ and transform the inequality \eqref{eq:last lr array DL} to an approximate equality for moderately large $T$.
\end{itemize}

\subsection{Loss under any peak learning rate}

We test \eqref{eq:peak and loss DL} using losses reported in \citep{li2025predictable}. These results were obtained under optimally tuned hyperparameters, where the language models (dense and Mixture-of-Experts; MoE) were trained on a mixture of web text, mathematics, and code. The training used AdamW optimizer with cosine decaying schedule (see details in Section 3.2 and Appendix A of \citep{li2025predictable}). 

We observe high $R^2$ scores ($\geq$ 0.95) across different model sizes as well as model architectures, which empirically validates \Cref{abs:dl last lr}.

\begin{figure}[!htb]
    \centering
    \includegraphics[width=.3\linewidth]{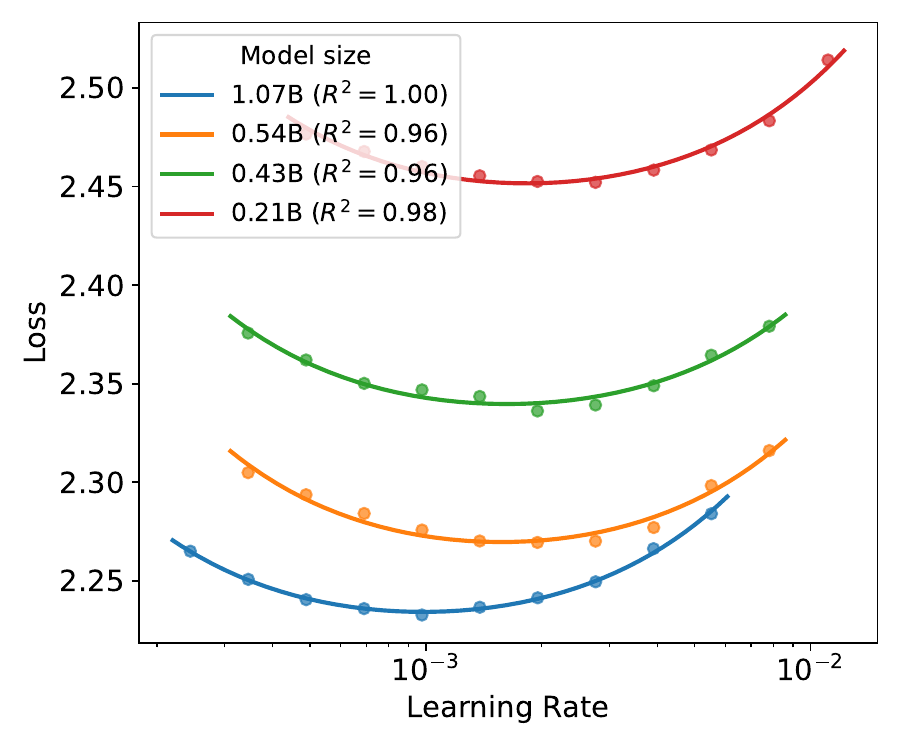}
    \includegraphics[width=.3\linewidth]{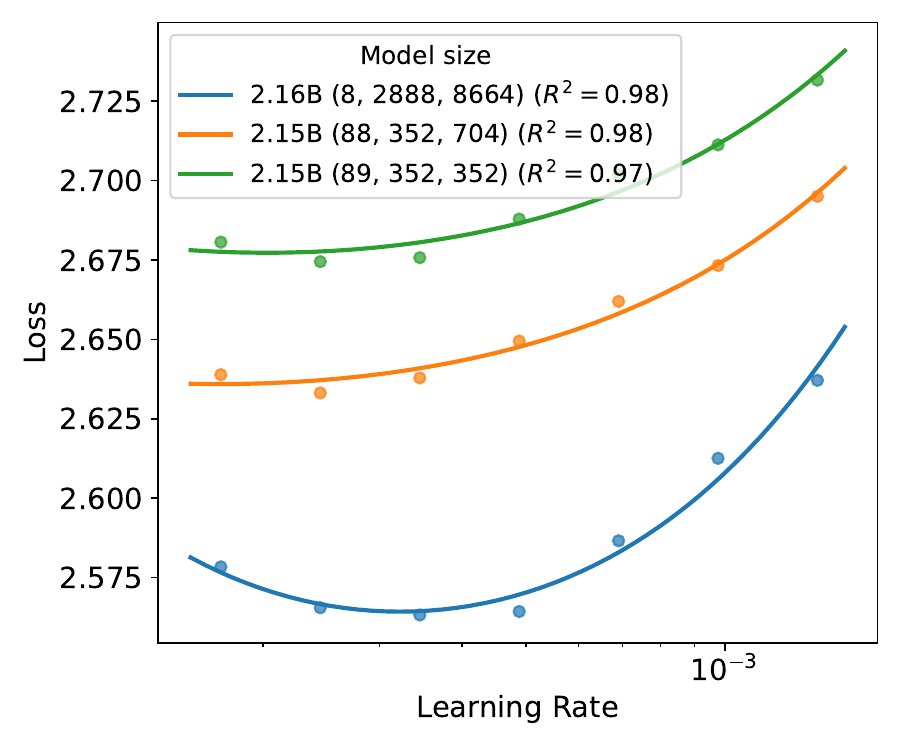}
\vspace{-0.2cm}
\caption{Loss versus learning rate for dense models (left) and MoE models (right) by \citep{li2025predictable}. In the right plot, the orange and green curves correspond to different architectures despite having the same model size. Specific model configurations are summarized in \Cref{app:hitch-hiker}.}
    \label{fig:hitch-hiker}
\end{figure}




\subsection{Loss under optimal peak learning rate}
We further derive and test \eqref{eq:peak and loss DL} under the optimal $\etap$ in \Cref{abs:dl last optimal}.
\begin{abstraction}[generalized from \Cref{col:sgd ref}, part 2]
\label{abs:dl last optimal}
For general optimizers under deep learning and for a qualified learning rate schedule with optimal peak learning rate $\etap$, we have $\E L(\w_T)\sim \tilde L_\infty+2\tilde q_1\tilde q_2/\sqrt{T}$.
\end{abstraction}


To validate \Cref{abs:dl last optimal}, we scrutinize the loss values and training horizons in \citep{hoffmann2022training} for various models from $0.074B$ to $12.56B$, which were used to establish the compute-optimal scaling law for large language models. These runs trained Chinchilla models (same as Gopher \citep{rae2021scaling}) with AdamW under cosine decaying schedule. The original loss values are not publicized in \citep{hoffmann2022training} but they are reconstructed by \citep{besiroglu2024chinchilla}. Note that \cite{hoffmann2022training} only gives FLOPs\footnote{FLOP$\approx 6\times\text{token size}\times\text{model size}=6\times\text{batch size}\times\text{iterations}\times\text{model size}$. The batch size is not released in this work and assumed to be constant across all runs.}, rather than the training iterations. Therefore, we translate the training horizon to token size$=$FLOPs$/6/$model size$=\text{batch size}\times T$. 

\begin{minipage}[b]{0.6\textwidth}
\centering
\captionof{table}{Summary of linear regression on $\E L(\w_T)$ and $1/\sqrt{T\cdot\text{batch size}}$ from \citep{hoffmann2022training,besiroglu2024chinchilla}. Full table is deferred to \Cref{tab:replica all}. Runs for 2B model is visualized in \Cref{fig:vis hitchhiker}. }
\resizebox{\linewidth}{!}{
\begin{tabular}{c|c|c|c|c}
  model size(B) &num of horizons  &$2\tilde q_1\tilde q_2$&$\tilde L_\infty$&$R^2$ score\\\hline
0.074&5&3.22e+04&2.825&0.991\\
0.140&7&3.04e+04&2.670&0.991\\
0.279&8&3.29e+04&2.498&0.999\\
0.425&8&3.27e+04&2.430&0.998\\
0.632&8&3.17e+04&2.367&0.998\\
1.143&10&3.10e+04&2.275&0.998\\
1.429&9&3.18e+04&2.253&0.996\\
1.611&9&3.36e+04&2.228&0.995\\
\textcolor{blue}{2.004}&\textcolor{blue}{8}&\textcolor{blue}{3.62e+04}&\textcolor{blue}{2.178}&\textcolor{blue}{0.999}\\
2.280&7&4.41e+04&2.128&1.000\\
2.979&10&5.90e+04&2.016&0.990\\
4.519&6&3.83e+04&2.106&0.978\\
6.792&8&4.66e+04&2.023&0.999\\
9.290&4&4.29e+04&2.046&0.988\\
12.56&3&4.23e+04&2.053&1.000\\
\end{tabular}
}
\label{tab:replica}
\end{minipage}
\hfill
\begin{minipage}[b]{0.36\textwidth}
\centering
\includegraphics[width=\linewidth]{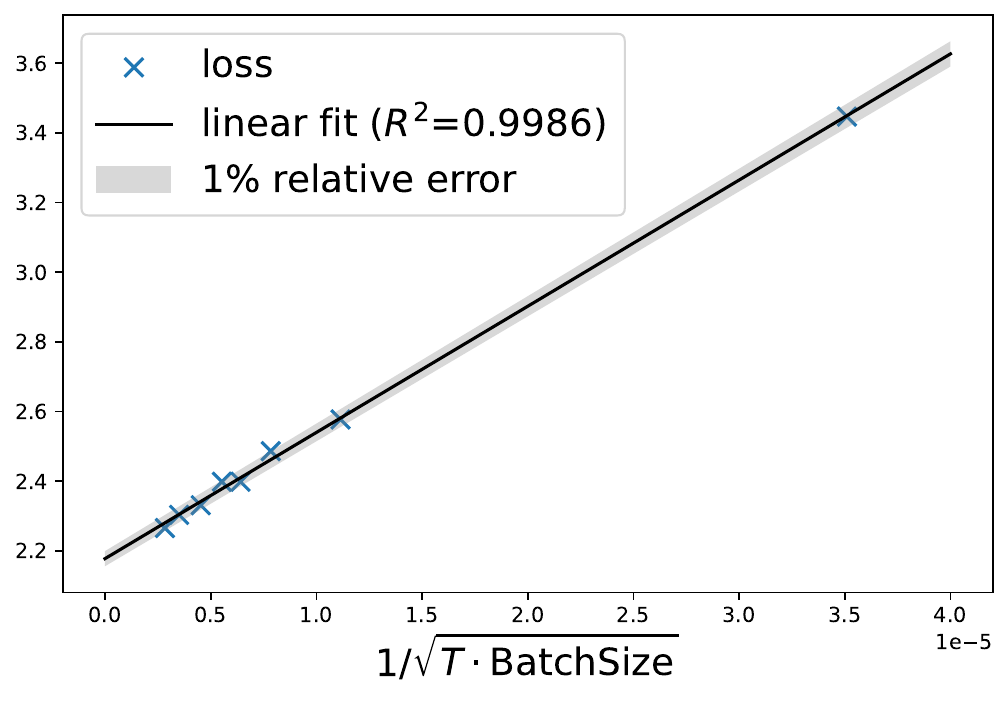}
\includegraphics[width=\linewidth]{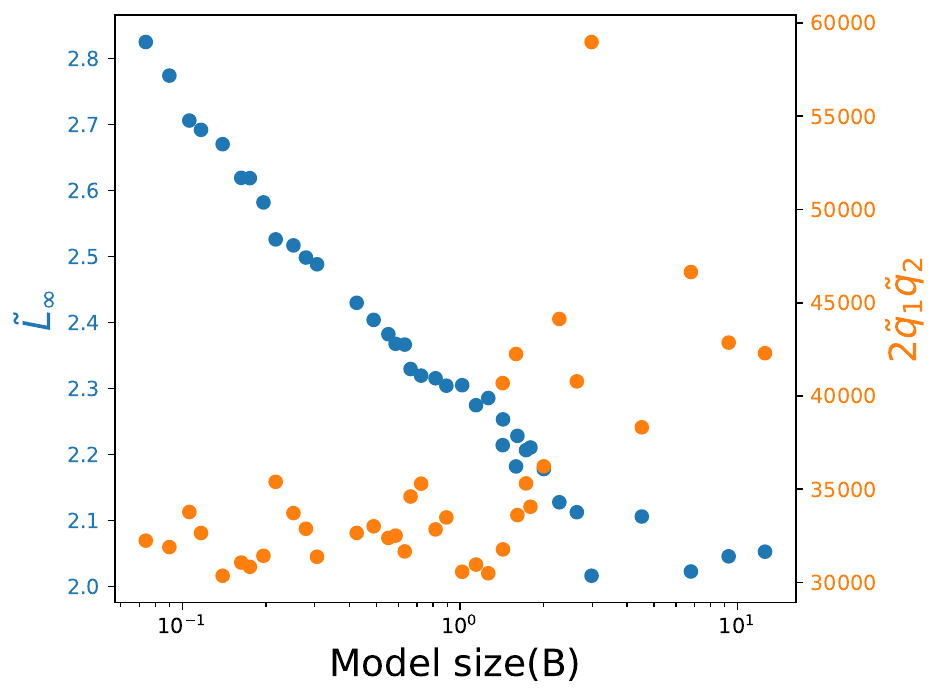}
\captionof{figure}{Upper: 2B model precisely fits \Cref{abs:dl last optimal} across a wide range of training horizons. Lower: As model size increases, $\tilde L_\infty$ log-linearly decreases and $2\tilde q_1\tilde q_2$ roughly increases.}
\label{fig:vis hitchhiker}
\end{minipage}

In \Cref{tab:replica}, we consistently observe precise fitting of \Cref{abs:dl last optimal} with $R^2\geq 0.978$, up to FLOPs=1e22 and over 300B tokens. We note that all loss values are within 1\% relative error of our prediction in \Cref{fig:vis hitchhiker}.

\section{Two-dimensional scaling law for learning rate}
\label{sec:experiments}

The success of transferring insights from convex analysis to deep learning underscores the appeal of predicting and controlling the loss via learning rate. In this section, we propose two-dimensional scaling law of optimal loss and learning rate at various model sizes $(N)$ and training horizons $(T)$.


Notably, we must take one step further from \Cref{abs:dl last optimal}, because we do not know the optimal $\etap$, except it has a form $\frac{\tilde q_1}{\tilde q_2\sqrt{T}}$ with unknown $\tilde q_1$ and $\tilde q_2$.

\begin{abstraction}[generalized from \Cref{col:sgd ref}, part 1]
\label{abstract:4}
For general optimizers under deep learning and for a qualified learning rate schedule with scaled peak learning rate $\etap={\etar}/{\sqrt{T}}$, we have 
$$\E L(\w_T)\sim \tilde L_\infty + \tilde Q(\etar)/\sqrt{T}, \forall\etar.$$
\end{abstraction}
In words, we have $O(1/\sqrt{T})$ loss convergence under $1/\sqrt{T}$-scaled $\etap$, including but not limited to the optimal $\etar$ in \Cref{abs:dl last optimal}. Notice this expression is linear in $1/\sqrt{T}$ and can be visualized by a straight line. We seek the optimal $\etar$ by multiple small-scale runs and we estimate $\tilde L_\infty$ and $\tilde Q$ via linear regression in \Cref{sec:across T} and \Cref{sec:across N}.

Finally, we present a scaling law that predicts both loss and optimal learning rate:
\begin{align}
\begin{split}
\E L(N,T)\sim \tilde L_\infty(\etar^*;N)+\tilde Q(\etar^*;N)/\sqrt{T}
\\
\etap^*(N,T)\sim \etap^*(N_\text{small},T_\text{small})/\sqrt{T/T_\text{small}}
\end{split}
\label{eq:loss scaling law}
\end{align}
where
$N_\text{small}$ is a smaller model with $N_\text{small}\leq N$, and $T_\text{small}$ is a shorter training horizon with $T_\text{small}\leq T$, so that we can use small-scale training to inform the large-scale training's hyperparameter.

\subsection{Experiment settings}
We train GPT2 language models on OpenWebText for various training horizons, following nanoGPT codebase \citep{karpathy_nanogpt}. We apply two optimizers: AdamW and Muon-NSGD, both with 0.01 weight decay and without gradient clipping. Here Muon-NSGD is adapted from the original Muon by (1) optimizing all 2D tensors with Muon and other tensors with normalized SGD, i.e. NSGD, and (2) using a single learning rate for Muon and NSGD. We use the cosine decaying learning rate schedule that decays to 0 with 2\% warm-up. We conduct 240 runs, which are 4 settings (0.1B/AdamW, 0.1B/Muon-NSGD, 1B/Muon-NSGD, and 7B/Muon-NSGD), 10 training horizons from 100 to 500k iterations, and 6 $\etar$ from 0.01 to 30.0). See more details in \Cref{app:experiment}.

To illustrate the generality of our approach, we conduct \textit{additional experiments} in \Cref{app:overfitting} for
parameter-efficient training (LoRA \citep{hu2022lora}) and ablations over key hyperparameters, such as weight decay, gradient clipping, momentum coefficient, batch size, and random seeds. We consistently observe $O(1/\sqrt{T})$ loss convergence with scaled learning rate across different regimes. 

\subsection{Scaling across training horizons}
\label{sec:across T}
We train GPT2 (0.1B) for $T$ ranging from 100 to 500k steps. We obtain good fitting of linear regression over $1/\sqrt{T}<0.02$, i.e. $T>2.5k$ which indicates that \Cref{abstract:4} becomes predictive very early during training. This observation is consistent for two optimizers -- AdamW in \Cref{fig:GPT01B adamw} and Muon-NSGD in \Cref{fig:GPT01B muon}.

From the intercepts of fitted lines, we determine leverage the optimal $\etap^*=\etar^*/\sqrt{T}$ where $\etar^*=10$ for Muon-NSGD and $\etar^*=0.3$ for AdamW. As we observe from the zoom-in plots in \Cref{fig:GPT01B muon} and later in \Cref{fig:all 4 lines}, the extrapolation of our scaling law across training horizons is $\approx 80\times$.

\begin{figure}[!htb]
    \centering
    \vspace{-0.3cm}
\includegraphics[width=0.35\linewidth]{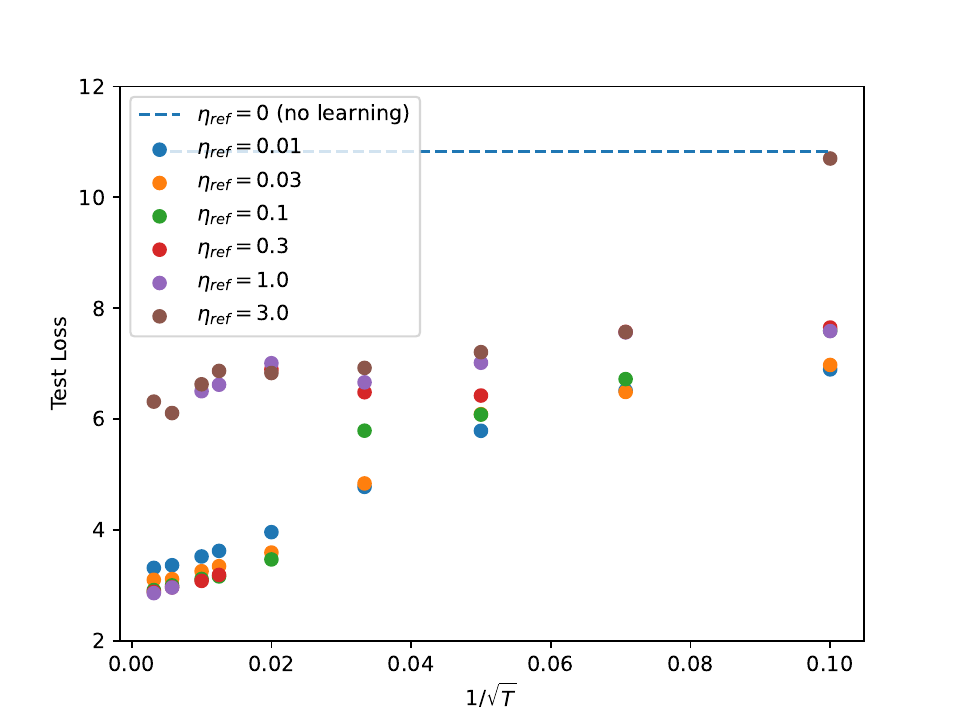}
\hspace{-0.65cm}
\includegraphics[width=0.35\linewidth]{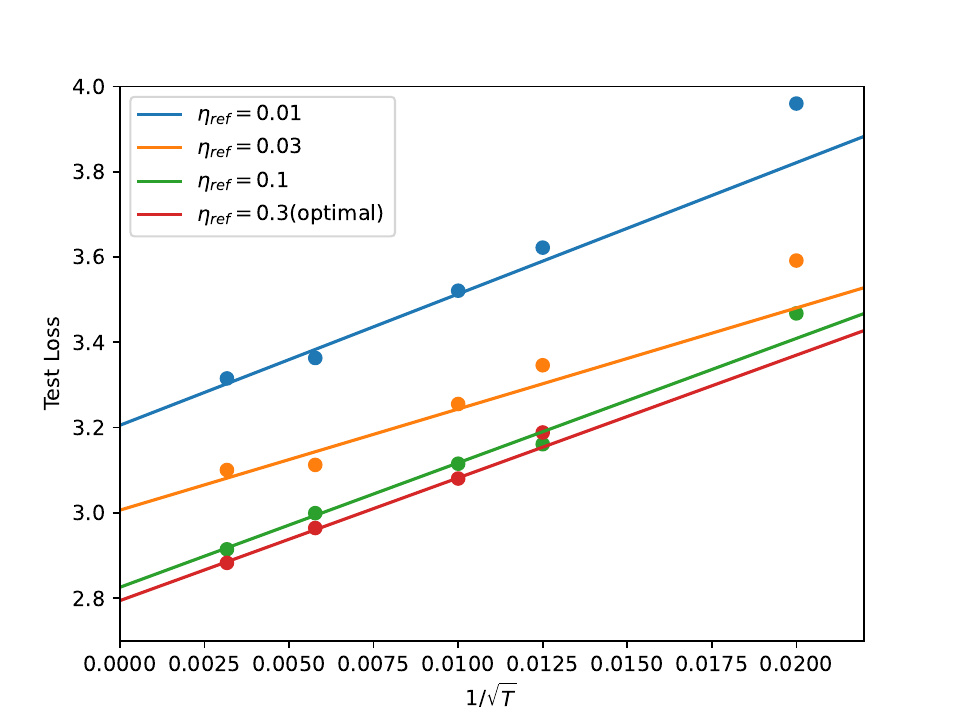}
\hspace{-0.65cm}
\includegraphics[width=0.35\linewidth]{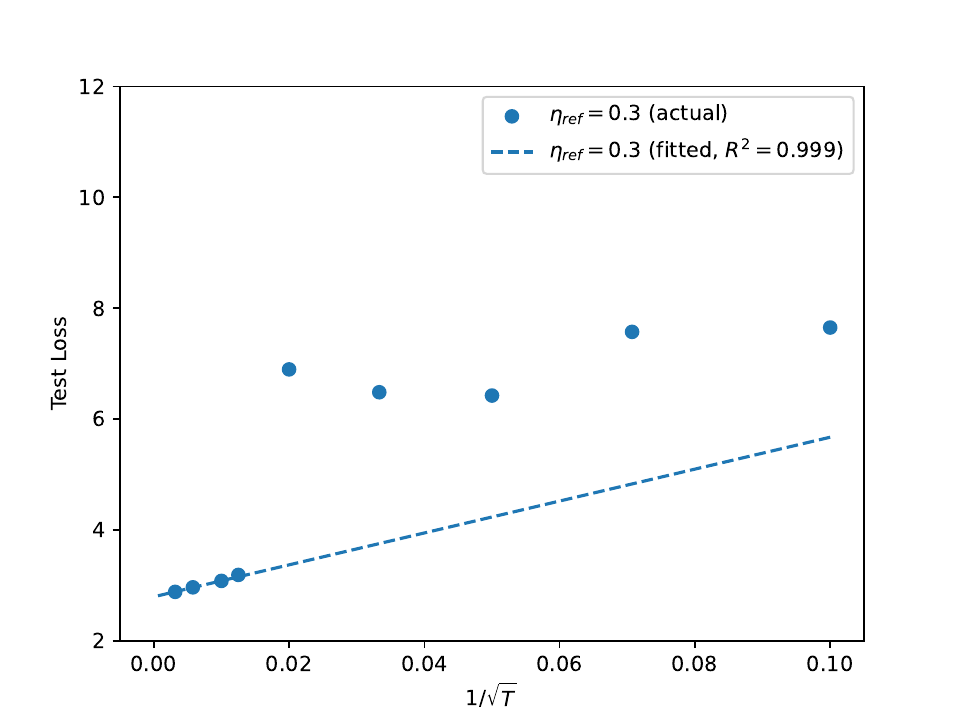} 
    \vspace{-0.3cm}
    \caption{Loss values (dots) and $1/\sqrt{T}$ prediction for GPT2 (0.1B) with AdamW.}
    \label{fig:GPT01B adamw}
\end{figure}

\begin{figure}[!htb]
    \centering
    \vspace{-0.3cm}
\includegraphics[width=0.35\linewidth]{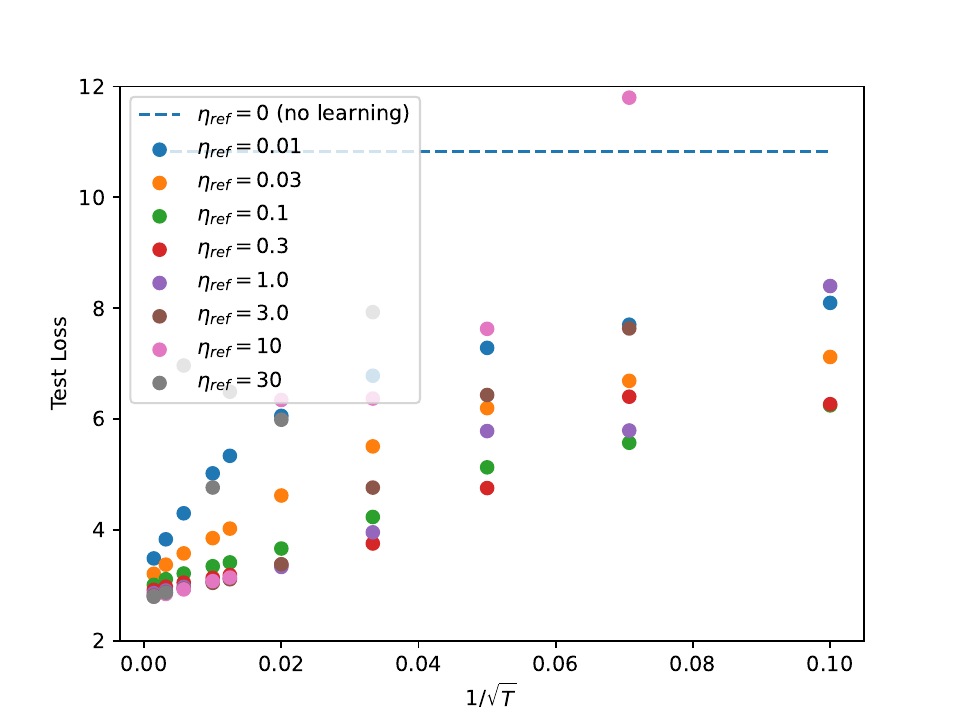}
    \hspace{-0.65cm}
\includegraphics[width=0.35\linewidth]{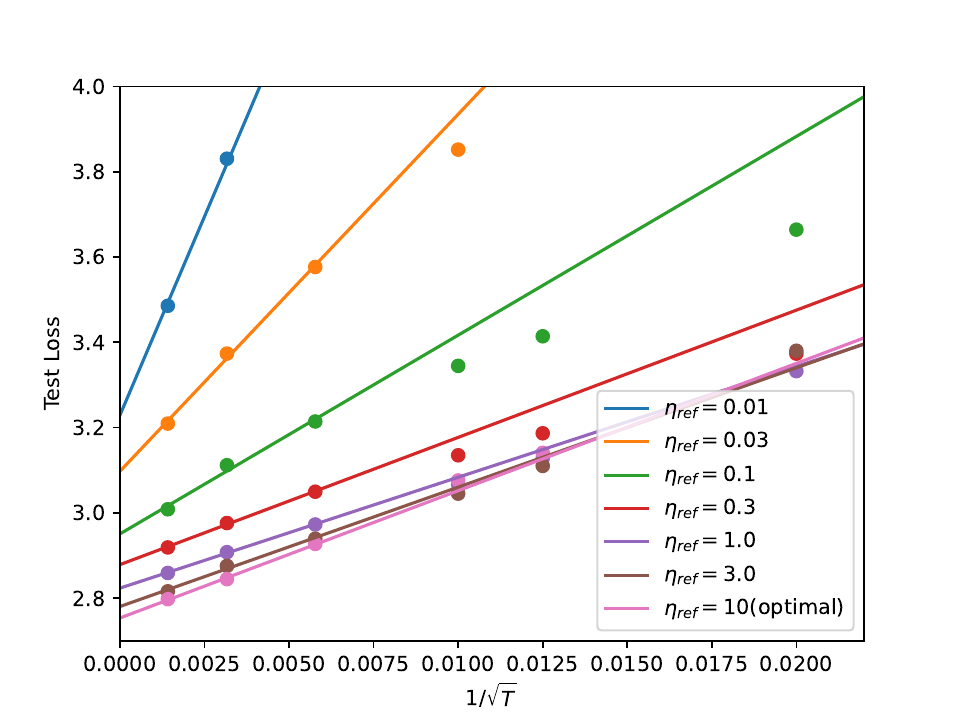}
    \hspace{-0.65cm}
\includegraphics[width=0.35\linewidth]{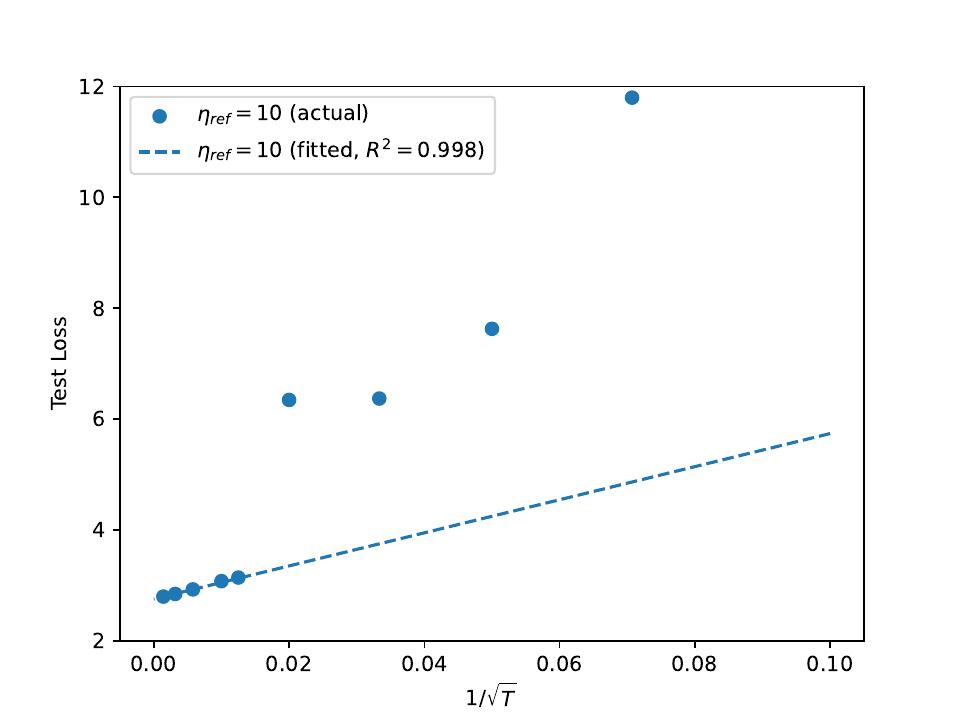}
    \vspace{-0.3cm}
    \caption{Loss values (dots) and $1/\sqrt{T}$ prediction for GPT2 (0.1B) with Muon-NSGD.}
    \label{fig:GPT01B muon}
\end{figure}

\subsection{Scaling across training horizons and model sizes}
\label{sec:across N}
We further train GPT2 1B and 7B models with Muon-NSGD, using the same $\etar^*$ that we transfer from GPT 0.1B model by \eqref{eq:loss scaling law}. Taking a closer look at 7B model losses, we observe precise fitting of our scaling law, extrapolating across model sizes by $70\times$.

\begin{figure}[!htb]
    \centering
    \vspace{-0.3cm}
\includegraphics[width=0.338\linewidth]{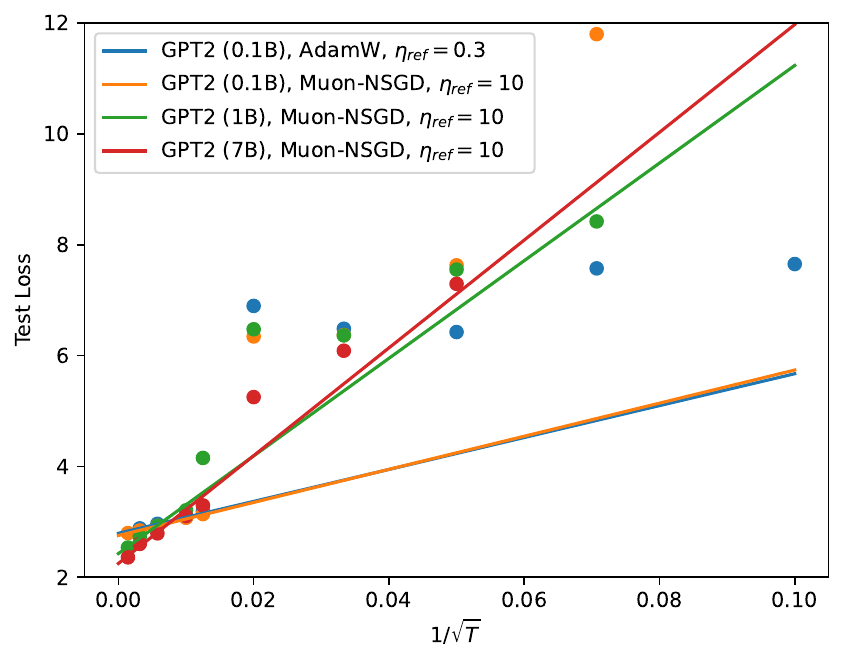}
    \hspace{-0.3cm}
\includegraphics[width=0.338\linewidth]{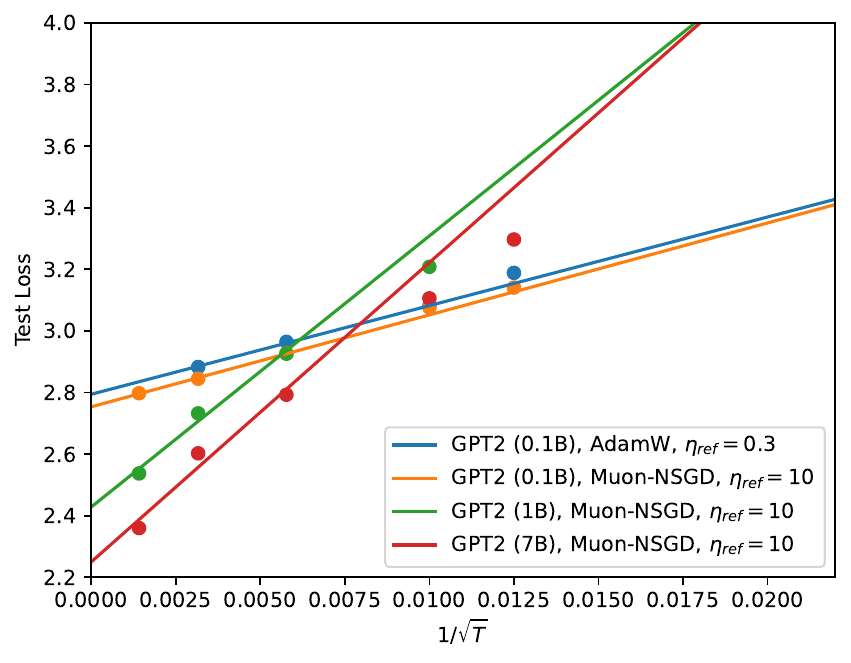}
    \hspace{-0.3cm}
\includegraphics[width=0.325\linewidth]{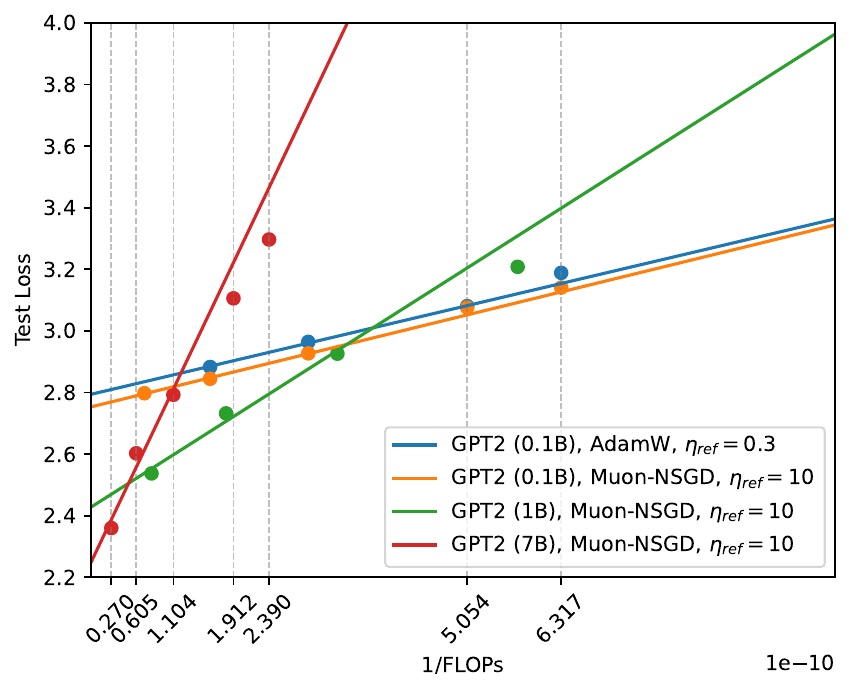}
    \vspace{-0.3cm}
    \caption{Loss values (dots) and $1/\sqrt{T}$ or $1/\sqrt{\text{FLOPs}}$ prediction for 4 settings.}
    \label{fig:all 4 lines}
\vspace{-0.2cm}
\end{figure}

\subsection{Scaling on multi-modal models}
We finetune vision-language models (VLM) with $\approx$1B parameters on the Cauldron dataset \citep{laurençon2024matters}, following nanoVLM codebase \citep{wiedmann2025nanovlm}. We apply AdamW optimizer with cosine decaying learning rate and 3\% warmup. We test three values of $\etar$ across training horizons up to 14.4k iterations. In \Cref{fig:multimodal}, we observe good fitting when $T>2000$, thus generalizing our scaling law to multi-modal models. We note that VLM has multiple components (language backbone, vision backbone, and modality projector) and each has a separate learning rate, all of which are $1/\sqrt{T}$ scaled.

\begin{figure}[!htb]
\vspace{-0.5cm}
    \centering
    \includegraphics[width=0.4\linewidth]{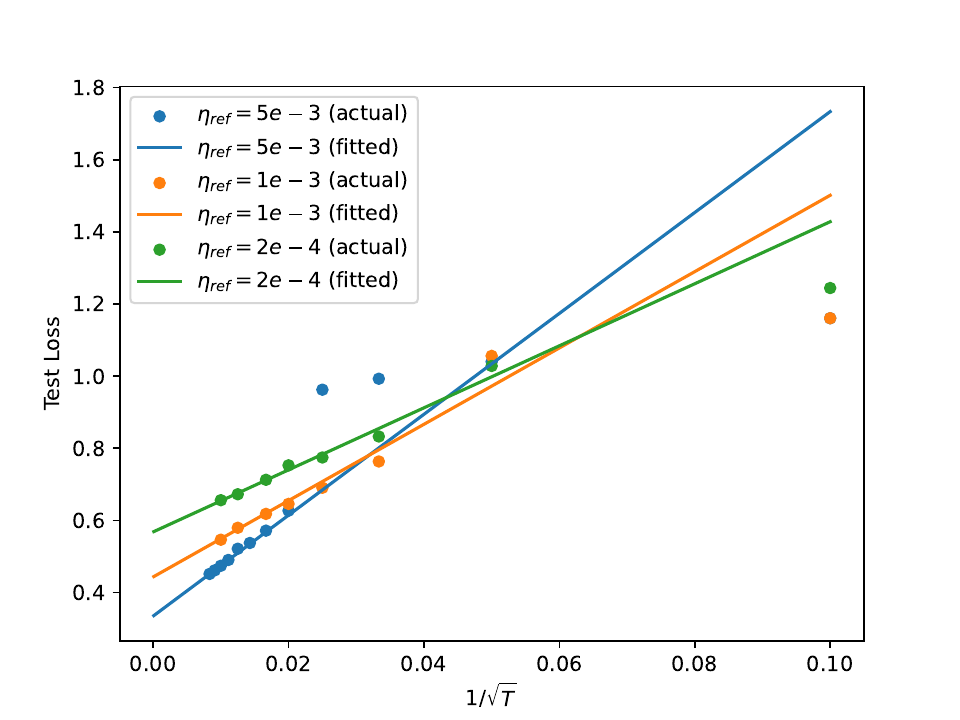}
    \includegraphics[width=0.4\linewidth]{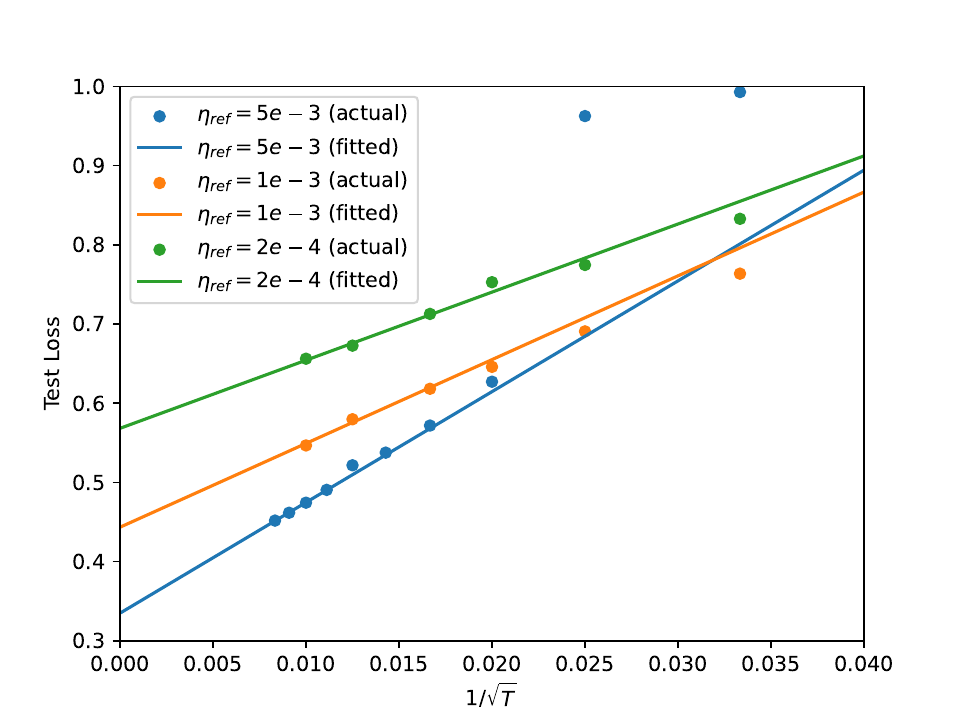}
\vspace{-0.3cm}
    \caption{Loss values (dots) and $1/\sqrt{T}$ prediction for multi-modal VLM with AdamW. Right plot is zoomed.}
    \label{fig:multimodal}
\vspace{-0.3cm}
\end{figure}

\section{Conclusion}
This paper presents a path, starting from a rigorous analysis that is restricted to SGD and convex loss, and generalizing to non-convex deep learning with general optimizers beyond the support of theory. Along the path, we heavily rely on data-driven method to fit our prediction of loss and learning rate. Our findings support (I) convex-like behavior in deep learning, which can be characterized by sequence-to-sequence prediction in \eqref{abs:dl last lr}; (II) asymptotic prediction of $O(1/\sqrt{T})$ loss convergence and $O(1/\sqrt{T})$ learning rate in deep learning; (III) a scaling law that extrapolates across training horizons and model sizes.
We note some limitations of this paper, including the fact that our approach fails to predict test loss but continues to predict training loss when overfitting is severe (see \Cref{fig:vision multi run}), and the lack of understanding why convex-like behaviors exist in various architectures and how many iterations it takes for deep learning to be characterizable by such behaviors.


\bibliography{references}

@article{kaplan2020scaling,
  title={Scaling laws for neural language models},
  author={Kaplan, Jared and McCandlish, Sam and Henighan, Tom and Brown, Tom B and Chess, Benjamin and Child, Rewon and Gray, Scott and Radford, Alec and Wu, Jeffrey and Amodei, Dario},
  journal={arXiv preprint arXiv:2001.08361},
  year={2020}
}

@article{dubey2024llama,
  title={The llama 3 herd of models},
  author={Dubey, Abhimanyu and Jauhri, Abhinav and Pandey, Abhinav and Kadian, Abhishek and Al-Dahle, Ahmad and Letman, Aiesha and Mathur, Akhil and Schelten, Alan and Yang, Amy and Fan, Angela and others},
  journal={arXiv e-prints},
  pages={arXiv--2407},
  year={2024}
}

@misc{wiedmann2025nanovlm,
  author = {Luis Wiedmann and Aritra Roy Gosthipaty and Andrés Marafioti},
  title = {nanoVLM},
  year = {2025},
  publisher = {GitHub},
  journal = {GitHub repository},
  howpublished = {\url{https://github.com/huggingface/nanoVLM}}
}

@inproceedings{
hu2022lora,
title={Lo{RA}: Low-Rank Adaptation of Large Language Models},
author={Edward J Hu and Yelong Shen and Phillip Wallis and Zeyuan Allen-Zhu and Yuanzhi Li and Shean Wang and Lu Wang and Weizhu Chen},
booktitle={International Conference on Learning Representations},
year={2022},
url={https://openreview.net/forum?id=nZeVKeeFYf9}
}

@misc{allal2025smollm2smolgoesbig,
      title={SmolLM2: When Smol Goes Big -- Data-Centric Training of a Small Language Model}, 
      author={Loubna Ben Allal and Anton Lozhkov and Elie Bakouch and Gabriel Martín Blázquez and Guilherme Penedo and Lewis Tunstall and Andrés Marafioti and Hynek Kydlíček and Agustín Piqueres Lajarín and Vaibhav Srivastav and Joshua Lochner and Caleb Fahlgren and Xuan-Son Nguyen and Clémentine Fourrier and Ben Burtenshaw and Hugo Larcher and Haojun Zhao and Cyril Zakka and Mathieu Morlon and Colin Raffel and Leandro von Werra and Thomas Wolf},
      year={2025},
      eprint={2502.02737},
      archivePrefix={arXiv},
      primaryClass={cs.CL},
      url={https://arxiv.org/abs/2502.02737}, 
}

@misc{tschannen2025siglip2multilingualvisionlanguage,
      title={SigLIP 2: Multilingual Vision-Language Encoders with Improved Semantic Understanding, Localization, and Dense Features}, 
      author={Michael Tschannen and Alexey Gritsenko and Xiao Wang and Muhammad Ferjad Naeem and Ibrahim Alabdulmohsin and Nikhil Parthasarathy and Talfan Evans and Lucas Beyer and Ye Xia and Basil Mustafa and Olivier Hénaff and Jeremiah Harmsen and Andreas Steiner and Xiaohua Zhai},
      year={2025},
      eprint={2502.14786},
      archivePrefix={arXiv},
      primaryClass={cs.CV},
      url={https://arxiv.org/abs/2502.14786}, 
}

@misc{laurençon2024matters,
      title={What matters when building vision-language models?}, 
      author={Hugo Laurençon and Léo Tronchon and Matthieu Cord and Victor Sanh},
      year={2024},
      eprint={2405.02246},
      archivePrefix={arXiv},
      primaryClass={cs.CV}
}

@article{hoerl1970ridge,
  title={Ridge regression: Biased estimation for nonorthogonal problems},
  author={Hoerl, Arthur E and Kennard, Robert W},
  journal={Technometrics},
  volume={12},
  number={1},
  pages={55--67},
  year={1970},
  publisher={Taylor \& Francis}
}

@article{krogh1991simple,
  title={A simple weight decay can improve generalization},
  author={Krogh, Anders and Hertz, John},
  journal={Advances in neural information processing systems},
  volume={4},
  year={1991}
}

@inproceedings{sutskever2013importance,
  title={On the importance of initialization and momentum in deep learning},
  author={Sutskever, Ilya and Martens, James and Dahl, George and Hinton, Geoffrey},
  booktitle={International conference on machine learning},
  pages={1139--1147},
  year={2013},
  organization={PMLR}
}

@article{defazio2023optimal,
  title={Optimal linear decay learning rate schedules and further refinements},
  author={Defazio, Aaron and Cutkosky, Ashok and Mehta, Harsh and Mishchenko, Konstantin},
  journal={arXiv preprint arXiv:2310.07831},
  year={2023}
}

@inproceedings{zhousgd,
  title={SGD Converges to Global Minimum in Deep Learning via Star-convex Path},
  author={Zhou, Yi and Yang, Junjie and Zhang, Huishuai and Liang, Yingbin and Tarokh, Vahid},
  booktitle={International Conference on Learning Representations},
  year={2019}
}

@inproceedings{schaippsurprising,
  title={The Surprising Agreement Between Convex Optimization Theory and Learning-Rate Scheduling for Large Model Training},
  author={Schaipp, Fabian and H{\"a}gele, Alexander and Taylor, Adrien and Simsekli, Umut and Bach, Francis},
  booktitle={Forty-second International Conference on Machine Learning},


  year={2025}
}

@misc{Gokaslan2019OpenWeb,
    title={OpenWebText Corpus},
    author={Gokaslan, Aaron and Cohen, Vanya and Pavlick, Ellie and Tellex, Stefanie},
    howpublished={\url{http://Skylion007.github.io/OpenWebTextCorpus}},
    year={2019}
}

@article{radford2019language,
  title={Language models are unsupervised multitask learners},
  author={Radford, Alec and Wu, Jeffrey and Child, Rewon and Luan, David and Amodei, Dario and Sutskever, Ilya and others},
  journal={OpenAI blog},
  volume={1},
  number={8},
  pages={9},
  year={2019}
}

@inproceedings{deng2009imagenet,
  title={Imagenet: A large-scale hierarchical image database},
  author={Deng, Jia and Dong, Wei and Socher, Richard and Li, Li-Jia and Li, Kai and Fei-Fei, Li},
  booktitle={2009 IEEE conference on computer vision and pattern recognition},
  pages={248--255},
  year={2009},
  organization={Ieee}
}

@inproceedings{he2016deep,
  title={Deep residual learning for image recognition},
  author={He, Kaiming and Zhang, Xiangyu and Ren, Shaoqing and Sun, Jian},
  booktitle={Proceedings of the IEEE conference on computer vision and pattern recognition},
  pages={770--778},
  year={2016}
}

@inproceedings{
bu2025gradient,
title={Gradient descent with generalized Newton{\textquoteright}s method},
author={Zhiqi Bu and Shiyun Xu},
booktitle={The Thirteenth International Conference on Learning Representations},
year={2025},
url={https://openreview.net/forum?id=bI3fcTsKW4}
}

@article{jacot2018neural,
  title={Neural tangent kernel: Convergence and generalization in neural networks},
  author={Jacot, Arthur and Gabriel, Franck and Hongler, Cl{\'e}ment},
  journal={Advances in neural information processing systems},
  volume={31},
  year={2018}
}

@article{lee2019wide,
  title={Wide neural networks of any depth evolve as linear models under gradient descent},
  author={Lee, Jaehoon and Xiao, Lechao and Schoenholz, Samuel and Bahri, Yasaman and Novak, Roman and Sohl-Dickstein, Jascha and Pennington, Jeffrey},
  journal={Advances in neural information processing systems},
  volume={32},
  year={2019}
}

@article{bi2024deepseek,
  title={Deepseek llm: Scaling open-source language models with longtermism},
  author={Bi, Xiao and Chen, Deli and Chen, Guanting and Chen, Shanhuang and Dai, Damai and Deng, Chengqi and Ding, Honghui and Dong, Kai and Du, Qiushi and Fu, Zhe and others},
  journal={arXiv preprint arXiv:2401.02954},
  year={2024}
}

@article{li2025predictable,
  title={Predictable Scale: Part I--Optimal Hyperparameter Scaling Law in Large Language Model Pretraining},
  author={Li, Houyi and Zheng, Wenzhen and Hu, Jingcheng and Wang, Qiufeng and Zhang, Hanshan and Wang, Zili and Xuyang, Shijie and Fan, Yuantao and Zhou, Shuigeng and Zhang, Xiangyu and others},
  journal={arXiv preprint arXiv:2503.04715},
  year={2025}
}

@article{zhang2024transformers,
  title={Why transformers need adam: A hessian perspective},
  author={Zhang, Yushun and Chen, Congliang and Ding, Tian and Li, Ziniu and Sun, Ruoyu and Luo, Zhiquan},
  journal={Advances in neural information processing systems},
  volume={37},
  pages={131786--131823},
  year={2024}
}

@inproceedings{sankar2021deeper,
  title={A deeper look at the hessian eigenspectrum of deep neural networks and its applications to regularization},
  author={Sankar, Adepu Ravi and Khasbage, Yash and Vigneswaran, Rahul and Balasubramanian, Vineeth N},
  booktitle={Proceedings of the AAAI Conference on Artificial Intelligence},
  volume={35},
  number={11},
  pages={9481--9488},
  year={2021}
}

@inproceedings{yao2020pyhessian,
  title={PyHessian: Neural Networks Through the Lens of the Hessian},
  author={Yao, Zhewei and Gholami, Amir and Keutzer, Kurt and Mahoney, Michael W},
  booktitle={IEEE BigData},
  year={2020}
}

@article{papyan2018full,
  title={The full spectrum of deepnet hessians at scale: Dynamics with sgd training and sample size},
  author={Papyan, Vardan},
  journal={arXiv preprint arXiv:1811.07062},
  year={2018}
}

@inproceedings{allen2019convergence,
  title={A convergence theory for deep learning via over-parameterization},
  author={Allen-Zhu, Zeyuan and Li, Yuanzhi and Song, Zhao},
  booktitle={International conference on machine learning},
  pages={242--252},
  year={2019},
  organization={PMLR}
}

@inproceedings{zhong2022improving,
  title={Improving Sharpness-Aware Minimization with Fisher Mask for Better Generalization on Language Models},
  author={Zhong, Qihuang and Ding, Liang and Shen, Li and Mi, Peng and Liu, Juhua and Du, Bo and Tao, Dacheng},
  booktitle={Findings of the Association for Computational Linguistics: EMNLP 2022},
  pages={4064--4085},
  year={2022}
}

@article{chen2025understanding,
  title={Understanding Pre-training and Fine-tuning from Loss Landscape Perspectives},
  author={Chen, Huanran and Dong, Yinpeng and Wei, Zeming and Huang, Yao and Zhang, Yichi and Su, Hang and Zhu, Jun},
  journal={arXiv preprint arXiv:2505.17646},
  year={2025}
}

@article{lee2024fp8,
  title={To fp8 and back again: Quantifying the effects of reducing precision on llm training stability},
  author={Lee, Joonhyung and Bae, Jeongin and Kim, Byeongwook and Kwon, Se Jung and Lee, Dongsoo},
  journal={arXiv preprint arXiv:2405.18710},
  year={2024}
}

@article{im2016empirical,
  title={An empirical analysis of the optimization of deep network loss surfaces},
  author={Im, Daniel Jiwoong and Tao, Michael and Branson, Kristin},
  journal={arXiv preprint arXiv:1612.04010},
  year={2016}
}

@article{li2018visualizing,
  title={Visualizing the loss landscape of neural nets},
  author={Li, Hao and Xu, Zheng and Taylor, Gavin and Studer, Christoph and Goldstein, Tom},
  journal={Advances in neural information processing systems},
  volume={31},
  year={2018}
}

@article{besiroglu2024chinchilla,
  title={Chinchilla scaling: A replication attempt},
  author={Besiroglu, Tamay and Erdil, Ege and Barnett, Matthew and You, Josh},
  journal={arXiv preprint arXiv:2404.10102},
  year={2024}
}

@inproceedings{hoffmann2022training,
  title={Training compute-optimal large language models},
  author={Hoffmann, Jordan and Borgeaud, Sebastian and Mensch, Arthur and Buchatskaya, Elena and Cai, Trevor and Rutherford, Eliza and de Las Casas, Diego and Hendricks, Lisa Anne and Welbl, Johannes and Clark, Aidan and others},
  booktitle={Proceedings of the 36th International Conference on Neural Information Processing Systems},
  pages={30016--30030},
  year={2022}
}

@inproceedings{defazio2023learning,
  title={Learning-rate-free learning by d-adaptation},
  author={Defazio, Aaron and Mishchenko, Konstantin},
  booktitle={International Conference on Machine Learning},
  pages={7449--7479},
  year={2023},
  organization={PMLR}
}

@article{mishchenko2023prodigy,
  title={Prodigy: An expeditiously adaptive parameter-free learner},
  author={Mishchenko, Konstantin and Defazio, Aaron},
  journal={arXiv preprint arXiv:2306.06101},
  year={2023}
}

@inproceedings{ivgi2023dog,
  title={DoG is SGD’s best friend: A parameter-free dynamic step size schedule},
  author={Ivgi, Maor and Hinder, Oliver and Carmon, Yair},
  booktitle={International Conference on Machine Learning},
  pages={14465--14499},
  year={2023},
  organization={PMLR}
}

@article{khaled2023dowg,
  title={DoWG unleashed: An efficient universal parameter-free gradient descent method},
  author={Khaled, Ahmed and Mishchenko, Konstantin and Jin, Chi},
  journal={Advances in Neural Information Processing Systems},
  volume={36},
  pages={6748--6769},
  year={2023}
}

@inproceedings{
boreiko2025towards,
title={Towards understanding of orthogonalization in Muon},
author={Valentyn Boreiko and Zhiqi Bu and Sheng Zha},
booktitle={High-dimensional Learning Dynamics 2025},
year={2025},
url={https://openreview.net/forum?id=ppmyFtr9EW}
}

@misc{jordan2024muon,
  author       = {Keller Jordan and Yuchen Jin and Vlado Boza and You Jiacheng and
                  Franz Cesista and Laker Newhouse and Jeremy Bernstein},
  title        = {Muon: An optimizer for hidden layers in neural networks},
  year         = {2024},
  url          = {https://kellerjordan.github.io/posts/muon/}
}

@article{xing2018walk,
  title={A walk with sgd},
  author={Xing, Chen and Arpit, Devansh and Tsirigotis, Christos and Bengio, Yoshua},
  journal={arXiv preprint arXiv:1802.08770},
  year={2018}
}

@article{rae2021scaling,
  title={Scaling language models: Methods, analysis \& insights from training gopher},
  author={Rae, Jack W and Borgeaud, Sebastian and Cai, Trevor and Millican, Katie and Hoffmann, Jordan and Song, Francis and Aslanides, John and Henderson, Sarah and Ring, Roman and Young, Susannah and others},
  journal={arXiv preprint arXiv:2112.11446},
  year={2021}
}

@inproceedings{wang2024scaling,
  title={Scaling Laws Across Model Architectures: A Comparative Analysis of Dense and MoE Models in Large Language Models},
  author={Wang, Siqi and Chen, Zhengyu and Li, Bei and He, Keqing and Zhang, Min and Wang, Jingang},
  booktitle={EMNLP},
  year={2024}
}

@article{porian2024resolving,
  title={Resolving discrepancies in compute-optimal scaling of language models},
  author={Porian, Tomer and Wortsman, Mitchell and Jitsev, Jenia and Schmidt, Ludwig and Carmon, Yair},
  journal={Advances in Neural Information Processing Systems},
  volume={37},
  pages={100535--100570},
  year={2024}
}

@misc{karpathy_nanogpt,
  author       = {Andrej Karpathy},
  title        = {nanoGPT},
  year         = {2023},
  howpublished = {\url{https://github.com/karpathy/nanoGPT}}
}

@article{mei2018mean,
  title={A mean field view of the landscape of two-layer neural networks},
  author={Mei, Song and Montanari, Andrea and Nguyen, Phan-Minh},
  journal={Proceedings of the National Academy of Sciences},
  volume={115},
  number={33},
  pages={E7665--E7671},
  year={2018},
  publisher={National Academy of Sciences}
}

@article{chizat2018global,
  title={On the global convergence of gradient descent for over-parameterized models using optimal transport},
  author={Chizat, Lenaic and Bach, Francis},
  journal={Advances in neural information processing systems},
  volume={31},
  year={2018}
}

@article{du2018gradient,
  title={Gradient descent provably optimizes over-parameterized neural networks},
  author={Du, Simon S and Zhai, Xiyu and Poczos, Barnabas and Singh, Aarti},
  journal={arXiv preprint arXiv:1810.02054},
  year={2018}
}

@article{li2018learning,
  title={Learning overparameterized neural networks via stochastic gradient descent on structured data},
  author={Li, Yuanzhi and Liang, Yingyu},
  journal={Advances in neural information processing systems},
  volume={31},
  year={2018}
}

@inproceedings{du2019gradient,
  title={Gradient descent finds global minima of deep neural networks},
  author={Du, Simon and Lee, Jason and Li, Haochuan and Wang, Liwei and Zhai, Xiyu},
  booktitle={International conference on machine learning},
  pages={1675--1685},
  year={2019},
  organization={PMLR}
}

@article{allen2019learning,
  title={Learning and generalization in overparameterized neural networks, going beyond two layers},
  author={Allen-Zhu, Zeyuan and Li, Yuanzhi and Liang, Yingyu},
  journal={Advances in neural information processing systems},
  volume={32},
  year={2019}
}

@article{fang2019over,
  title={Over parameterized two-level neural networks can learn near optimal feature representations},
  author={Fang, Cong and Dong, Hanze and Zhang, Tong},
  journal={arXiv preprint arXiv:1910.11508},
  year={2019}
}

@inproceedings{dosovitskiy2020image,
  title={An Image is Worth 16x16 Words: Transformers for Image Recognition at Scale},
  author={Dosovitskiy, Alexey and Beyer, Lucas and Kolesnikov, Alexander and Weissenborn, Dirk and Zhai, Xiaohua and Unterthiner, Thomas and Dehghani, Mostafa and Minderer, Matthias and Heigold, G and Gelly, S and others},
  booktitle={International Conference on Learning Representations},
  year={2020}
}

@article{fang2022convex,
  title={Convex formulation of overparameterized deep neural networks},
  author={Fang, Cong and Gu, Yihong and Zhang, Weizhong and Zhang, Tong},
  journal={IEEE Transactions on Information Theory},
  volume={68},
  number={8},
  pages={5340--5352},
  year={2022},
  publisher={IEEE}
}

@inproceedings{loshchilovdecoupled,
  title={Decoupled Weight Decay Regularization},
  author={Loshchilov, Ilya and Hutter, Frank},
  booktitle={International Conference on Learning Representations},
  year={2019}
}

@article{garipov2018loss,
  title={Loss surfaces, mode connectivity, and fast ensembling of dnns},
  author={Garipov, Timur and Izmailov, Pavel and Podoprikhin, Dmitrii and Vetrov, Dmitry P and Wilson, Andrew G},
  journal={Advances in neural information processing systems},
  volume={31},
  year={2018}
}

@inproceedings{choromanska2015loss,
  title={The loss surfaces of multilayer networks},
  author={Choromanska, Anna and Henaff, Mikael and Mathieu, Michael and Arous, G{\'e}rard Ben and LeCun, Yann},
  booktitle={Artificial intelligence and statistics},
  pages={192--204},
  year={2015},
  organization={PMLR}
}

@article{dauphin2014identifying,
  title={Identifying and attacking the saddle point problem in high-dimensional non-convex optimization},
  author={Dauphin, Yann N and Pascanu, Razvan and Gulcehre, Caglar and Cho, Kyunghyun and Ganguli, Surya and Bengio, Yoshua},
  journal={Advances in neural information processing systems},
  volume={27},
  year={2014}
}

@inproceedings{jin2017escape,
  title={How to escape saddle points efficiently},
  author={Jin, Chi and Ge, Rong and Netrapalli, Praneeth and Kakade, Sham M and Jordan, Michael I},
  booktitle={International conference on machine learning},
  pages={1724--1732},
  year={2017},
  organization={PMLR}
}

@article{sagun2016eigenvalues,
  title={Eigenvalues of the hessian in deep learning: Singularity and beyond},
  author={Sagun, Levent and Bottou, Leon and LeCun, Yann},
  journal={arXiv preprint arXiv:1611.07476},
  year={2016}
}

@article{bjorck2024scaling,
  title={Scaling optimal LR across token horizons},
  author={Bjorck, Johan and Benhaim, Alon and Chaudhary, Vishrav and Wei, Furu and Song, Xia},
  journal={arXiv preprint arXiv:2409.19913},
  year={2024}
}

@article{hagele2024scaling,
  title={Scaling laws and compute-optimal training beyond fixed training durations},
  author={H{\"a}gele, Alex and Bakouch, Elie and Kosson, Atli and Von Werra, Leandro and Jaggi, Martin and others},
  journal={Advances in Neural Information Processing Systems},
  volume={37},
  pages={76232--76264},
  year={2024}
}

@article{yang2022tensor,
  title={Tensor programs v: Tuning large neural networks via zero-shot hyperparameter transfer},
  author={Yang, Greg and Hu, Edward J and Babuschkin, Igor and Sidor, Szymon and Liu, Xiaodong and Farhi, David and Ryder, Nick and Pachocki, Jakub and Chen, Weizhu and Gao, Jianfeng},
  journal={arXiv preprint arXiv:2203.03466},
  year={2022}
}

@online{deep_learning_bible_nnfs,
  title        = {“Deep Learning Bible – 1. from Scratch: Z\_10. Optimizers”},
  author       = {{WikiDocs}},
  year         = {2025},
  url          = {https://wikidocs.net/182297},
  urldate      = {2025-11-16},
  note         = {Accessed Nov.\ 16, 2025}
}
\bibliographystyle{assets/plainnat}

\newpage
\appendix

\section{Proofs}
\label{app:proofs}
\subsection{Derivation of \Cref{tab:theorem1}}
\begin{theorem}
\label{col:last-iter}
For SGD under \Cref{def:convex and Lipschitz},  \Cref{eq:last lr array} gives:

\begin{itemize}
    \item[\textbf{Case 1:}] \textbf{Constant learning rate} (\(\eta_t = \etap\) for all \(t\)):
    \[
        \E L(\w_T) \lesssim L_* + \frac{D^2}{2T \etap} + \frac{\etap G^2}{2} \ln T
        := L_{\text{SGD-last-constant}}(\etap; T)
    \]
    The upper bound is minimized at:
    \begin{align}
        \min_{\eta} L_{\text{SGD-last-constant}}(\etap; T)
        = L_* + D G \sqrt{\frac{\ln T}{T}},
        \quad \text{with } \etap^*(T) = \frac{D}{G \sqrt{\ln T\cdot T}}.
    \end{align}

    \item[\textbf{Case 2:}] \textbf{Square-root inverse learning rate} (\(\eta_t = \etap/\sqrt{t}\)):
    \[
        \E L(\w_T) \lesssim L_* + \frac{D^2}{4\sqrt{T}\etap} + \frac{\etap G^2\ln T}{4\sqrt{T}}
        := L_{\text{SGD-last-sqrt-inv}}(\etap; T)
    \]
    The upper bound is minimized at:
    \begin{align}
        \min_{\etap} L_{\text{SGD-last-sqrt-inv}}(\etap; T)
        = L_* + DG\sqrt{\frac{\ln T}{4T}},
        \quad \text{with } \etap^*(T) = \frac{D}{G\sqrt{\ln T}}.
    \end{align}
    
    \item[\textbf{Case 3:}] \textbf{Linearly decaying learning rate} (\(\eta_t = \etap(1 - t/T)\)):
    \[
        \E L(\w_T) \lesssim L_* + \frac{D^2}{T \etap} + \etap G^2
        := L_{\text{SGD-last-linear}}(\etap; T)
    \]
    The upper bound is minimized at:
    \begin{align}
        \min_{\etap} L_{\text{SGD-last-linear}}(\etap; T)
        = L_* + 2DG \sqrt{\frac{1}{T}},
        \quad \text{with } \etap^*(T) = \frac{D}{G \sqrt{T}}.
    \end{align}
    \item[\textbf{Case 4:}] \textbf{Cosine decaying learning rate} (\(\eta_t = \etap\frac{1 + \cos(\pi t/T)}{2}\)):
    \[
        \E L(\w_T) \lesssim L_* + \frac{D^2}{T \etap} + \etap G^2 \cdot 1.061
        := L_{\text{SGD-last-cosine}}(\etap; T)
    \]
    The upper bound is minimized at:
    \begin{align}
        \min_{\etap} L_{\text{SGD-last-cosine}}(\etap; T)
        = L_* + 2DG \sqrt{\frac{1.061}{T}},
        \quad \text{with } \etap^*(T) = \frac{D}{G \sqrt{1.061T}}.
    \end{align}

    \item[\textbf{Case 5:}] \textbf{Warmup-stable-decay learning rate} ( $\eta_t = \begin{cases}
             \etap &\text{if }t<cT\\
              \etap\frac{T-t}{T-cT}&\text{if }t\geq cT
          \end{cases}$):
    \[
        \E L(\w_T) \lesssim L_* + \frac{D^2}{(1+c)\etap T}+ \etap G^2
\left[1+\frac{1}{2}\ln\left(\frac{1+c}{1-c}\right)\right]
        := L_{\text{SGD-last-wsd}}(\etap; T).
    \]
    The upper bound is minimized at:
    \begin{align}
        \min_{\etap} L_{\text{SGD-last-wsd}}(\etap; T)
        =  L_* +  2DG \sqrt{\frac{1+\frac{1}{2}\ln\left(\frac{1+c}{1-c}\right)}{(1+c)T}} ,
        \quad \\ 
        \text{with } \quad \etap^*(T) = \frac{D}{G} \sqrt{(1+c)\left(1+\frac{1}{2}\ln\left(\frac{1+c}{1-c}\right)\right) T}.
    \end{align}
\end{itemize}
\end{theorem}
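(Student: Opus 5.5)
We plan to start from the last-iterate bound \eqref{eq:last lr array} with $\tau=T$, pass from sums to integrals as in \Cref{def:qualifying} (the derivation the paper gives in \Cref{app:condition24}), and evaluate the three resulting terms for each schedule. Writing $S=\int_0^T\eta_t\,dt$, the first term is $D^2/(2S)$. The second term is $G^2\int_0^T\eta_t^2dt/(2S)$. For the third term we use the continuous surrogate
\begin{align*}
\frac{G^2}{2}\int_0^T \frac{\eta_k}{\int_k^T\eta_t dt}\,\frac{\int_k^T\eta_t^2dt}{\int_k^T\eta_tdt}\,dk .
\end{align*}
Here ``$\lesssim$'' means we keep leading-order terms in $T$ and discard lower-order boundary effects, such as cutting the integrals at $k=T-1$ to avoid the singularity at $k=T$. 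Each case then reduces to a bound of the form $L_*+\frac{a D^2}{T\etap}+b\,\etap G^2$. Minimizing in $\etap$ gives $\etap^*=\frac{D}{G}\sqrt{a/(bT)}$ and optimal value $L_*+2DG\sqrt{ab/T}$, which is AM--GM as in \Cref{col:sgd ref}.

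\textbf{Constant and square-root inverse (Cases 1--2).} For the constant schedule, $S=T\etap$ and the second term is $\etap G^2/2$. In the third term the ratio $\int_k^T\eta^2/\int_k^T\eta$ equals $\etap$. The remaining factor is $\sum_{k<T} 1/(T-k)\approx\ln T$, so the third term is $\frac{\etap G^2}{2}\ln T$, which dominates the second. For $\eta_t=\etap/\sqrt t$ we get $S\approx 2\etap\sqrt T$, which gives $D^2/(4\sqrt T\etap)$. The sum of squares is $\etap^2\ln T$, so the second term is $\frac{\etap G^2 \ln T}{4\sqrt T}$. For the third term, $\int_k^T t^{-1/2}=2(\sqrt T-\sqrt k)$ and $\int_k^T t^{-1}=\ln(T/k)$. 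Substituting $k=Tx$ gives $\frac{\etap G^2}{2\cdot 4\sqrt T}\int \frac{x^{-1/2}\ln(1/x)}{(1-\sqrt x)^2}dx$. This integral diverges only logarithmically at $x\to1$ once cut at $1-1/T$, so we must check that it contributes at most the same $\ln T/\sqrt T$ order. We then absorb it into the stated form. This bookkeeping is loose, and the constants stated for Case 2 may only be asymptotic. Finally, we minimize.

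\textbf{Linear, cosine, WSD (Cases 3--5).} For linear decay we set $u=1-t/T$. Then $S=T\etap/2$, so the first term is $D^2/(T\etap)$. The second term is $G^2\etap\int u^2/(2\int u)=G^2\etap/3$. In the third term, $\int_k^T\eta=\etap T u^2/2$ and $\int_k^T\eta^2=\etap^2Tu^3/3$, so the integrand becomes $\eta_k\cdot\frac{2}{\etap T u^2}\cdot\frac{2\etap u}{3}$, giving $\frac{G^2}{2}\cdot\frac{4\etap}{3}\int_0^1 du=\frac{2}{3}\etap G^2$. The total is $\etap G^2$ with no $\ln T$; this cancellation of the singularity is the key point. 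For cosine decay, $S=T\etap/2$ again. The remaining terms give $\etap G^2$ times an explicit constant $\int_0^1(\cdots)$, which we evaluate numerically to get $\approx1.061$. We must show this integral is finite: near $t=T$ we have $\eta\sim\etap\pi^2(1-x)^2/4$, so the integrand is bounded just as in the linear case. For WSD, $S=\etap T(c+(1-c)/2)=\etap T(1+c)/2$, which gives $D^2/((1+c)\etap T)$. For the third term, split at $k=cT$. On the decay part the computation is identical to the linear case, rescaled by $(1-c)$. On the stable part we have $\int_k^T\eta=\etap(T(1+c)/2-k)$, and integrating $1/(\text{that})$-type expressions produces the $\frac12\ln\frac{1+c}{1-c}$ term.

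The main obstacle is the third term. We need the discrete-to-integral passage to be justified to leading order, and, for each schedule, the singular factor $1/\int_k^T\eta$ near $k=T$ to be balanced correctly; for WSD this also requires evaluating the cross-regime integral in closed form. We would first carry out the linear case carefully, including the error from cutting at $k=T-1$, and then reuse that template for the other schedules. One point to flag: the stated WSD $\etap^*$ appears to have $\sqrt{(\cdot)T}$ in the numerator. By AM--GM it should be $\frac{D}{G\sqrt{(1+c)(1+\frac12\ln\frac{1+c}{1-c})T}}$, matching \Cref{tab:theorem1}, and we would prove that form.
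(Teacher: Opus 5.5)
Your proposal follows essentially the same route as the paper's proof: a continuous surrogate of \eqref{eq:last lr array}, schedule-by-schedule evaluation of the three terms (including the split at $cT$ for WSD and the numerical constant $\frac{3}{8}+\frac{2.7443}{4}\approx 1.061$ for cosine), and then AM--GM; the paper differs only cosmetically by using $\int_{k-1}^{T}$ shifts and cutting the $k$-integral at $T-1$, which affects only lower-order terms. You are also right that the WSD $\etap^*$ in the statement is a typo for $\frac{D}{G\sqrt{(1+c)(1+\frac{1}{2}\ln\frac{1+c}{1-c})T}}$, as in \Cref{tab:theorem1}, and your caution on Case 2 is justified and shared by the paper: the third term works out to $\frac{\etap G^2\ln T}{2\sqrt{T}}(1+o(1))$, the same order as the second, so under your leading-order reading of $\lesssim$ it cannot be absorbed (the coefficient would be $\frac{3}{4}$, not $\frac{1}{4}$) and the stated constants, $\etap^*$ and minimum hold only if $\lesssim$ is read up to constant factors---the paper likewise writes this term as $O(\ln T/\sqrt{T})$ and drops it.
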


\begin{proof}[Proof of \Cref{col:last-iter}]
For the ease of presentation, we work in the continuous regime and translate \eqref{eq:lr series} to
$$L_\text{SGD-any}(\tau)= L_*+\frac{D^2}{2\int_0^\tau \eta_t dt}+ \frac{\int_0^\tau \eta_t^2 dt}{2\int_0^\tau \eta_t dt}G^2+\frac{ G^2}{2}\int_0^{\tau-1}\frac{\eta_k\left(\int_{k-1}^\tau \eta_t^2 dt\right)}{\int_k^\tau \eta_t dt \int_{k-1}^\tau \eta_t dt}dk.$$
We will later show \Cref{def:qualifying} is also the continuous version of \eqref{eq:lr series}.


\paragraph{Constant learning rate} $\eta_t=\eta$. Hence, $\int_k^T \eta_t dt=(T-k)\eta$, $\int_k^T \eta_t^2 dt=(T-k)\eta^2$. Then
\begin{align*}
    L_\text{SGD-last-constant}(T)-L_*&=\frac{D^2}{2T\eta}+\frac{G^2\eta}{2}+\frac{ G^2}{2}\int_0^{T-1}\frac{\eta_k\left(\int_{k-1}^T \eta_t^2 dt\right)}{\int_k^T \eta_t dt \int_{k-1}^T \eta_t dt}dk\\
    &=\frac{D^2}{2T\eta}+\frac{G^2\eta}{2}+\frac{ G^2}{2}\int_0^{T-1}\frac{\eta}{T-k}dk\\
    &=\frac{D^2}{2T\eta}+\frac{G^2}{2}\eta\ln T.
\end{align*}

\paragraph{Square-root inverse learning rate} $\eta_t = \etap/\sqrt{t+1}$, then $\int_k^T = 2\eta(\sqrt{T+1}-\sqrt{k+1})$ and $\int_k^T \eta_t^2 = \eta^2 \ln\left( \frac{T+1}{k+1}\right)$, and we can write
\begin{align*}
   L_{\text{SGD-last-sqrt-inv}}(T) - L_* = &\frac{D^2}{4\eta(\sqrt{T+1} - 1)} + \frac{\eta \ln(T+1)}{4(\sqrt{T+1}-1)} G^2 \\
&+\frac{\eta G^2}{8} \int_{0}^{T-1} \frac{ \ln\left(\frac{T+1}{k}\right)}{\sqrt{k+1}(\sqrt{T+1)}-\sqrt{k+1})(\sqrt{T+1}-\sqrt{k})}dk.
\end{align*}
Let $A = \sqrt{T+1}$ and with the change of variable $u = A-\sqrt{k}$, $du = - \frac{1}{2(A-u)} dk$ we can write $\sqrt{T+1}-\sqrt{k} = u$, $\sqrt{T+1}-\sqrt{k+1} = u + O(\frac{1}{A})$, $\sqrt{k+1} = A-u + O(\frac{1}{A})$. Therefore, 
\[
\frac{1}{\sqrt{k+1}(\sqrt{T+1)}-\sqrt{k+1})(\sqrt{T+1}-\sqrt{k})} \approx \frac{1}{(A-u)u^2}
\] and 
\[
\ln\left(\frac{T+1}{k}\right) = -2 \ln\left(1-\frac{u}{A}\right).
\]
Also notice the upper limit $k=T-1$ corresponds to $u \approx \frac{1}{2A}$ and the lower limit is $u=A$, so the integral can be approximated by
\[
I(T) = \int_{0}^{T-1} \frac{ \ln\left(\frac{T+1}{k}\right)}{\sqrt{k+1}(\sqrt{T+1)}-\sqrt{k+1})(\sqrt{T+1}-\sqrt{k})}dk \approx -4 \int_{1/(2A)}^A \frac{\ln(1-u/A)}{u^2} du.
\]
This integral can be calculated exactly with a change of variable $\epsilon = \frac{1}{2A^2}$ as 
\[
-\frac{4}{A} \left( \frac{\ln(1-\epsilon)}{\epsilon} + \ln\epsilon - \ln(1-\epsilon)\right) = -\frac{4}{A}(\ln\epsilon - 1 + o(1)) \approx \frac{4\ln(T+1)+6.7726}{\sqrt{T+1}} + o(1/\sqrt{T}) .
\]
Combining the above we have
\begin{align*}
& L_{\text{SGD-last-sqrt-inv}}(T) - L_* \\
&\approx \frac{D^2}{4\eta(\sqrt{T+1} - 1)} + \frac{\eta \ln(T+1)}{4(\sqrt{T+1}-1)} G^2 + \frac{4\ln(T+1)+6.7726}{\sqrt{T+1}} + o(1/\sqrt{T})
\\
&\sim \frac{D^2}{4\eta\sqrt{T}} + \frac{\eta \ln(T)}{4\sqrt{T}} G^2 + O(\frac{\ln(T)}{\sqrt{T}}).
\end{align*}
\paragraph{Linear decay learning rate} $\eta_t=\eta(1-t/T)$ and it is not hard to get $\int_k^T \eta_t dt=\eta(T-k)^2/2T$, $\int_k^T \eta_t^2 dt=\eta^2(T-k)^3/3T^2$.

We can write down
\begin{align*}
L_\text{SGD-last-linear}(T)-L_*&=\frac{D^2}{\eta T}+ \frac{\eta^2 T/3}{\eta T}G^2+\frac{ G^2}{2}\int_0^{T-1}\frac{\eta_k\eta^2(T-k+1)^3/3T^2}{\eta(T-k)^2/2T *(T-k+1)^2/2T}dk
\\
&=\frac{D^2}{\eta T}+ \frac{\eta G^2}{3}+\frac{2\eta G^2}{3T}\int_0^{T-1}(1+\frac{1}{(T-k)})dk
\\
&=\frac{D^2}{\eta T}+ \frac{\eta G^2}{3}+\frac{2\eta G^2}{3T}(T-1+\ln T)\\
&=\frac{D^2}{\eta T}+ \frac{\eta G^2}{3}(3+2\ln T/T-2/T).
\end{align*}

\paragraph{Cosine decay learning rate} \label{proof:cosine-last-iter}
$\eta_t=\frac{1}{2}(1+\cos(\pi t/T))\eta$.

We derive
\begin{align*}
  A(k):= \int_k^T \eta_t dt &=\eta\int_k^T \frac{1 + \cos\!\bigl(\tfrac{\pi t}{T}\bigr)}{2} \, dt \\
  &= \eta(\frac{1}{2}\Bigl[(T-k)\Bigr]
     \;+\;\frac{1}{2}\int_k^T \cos\!\Bigl(\tfrac{\pi t}{T}\Bigr)\,dt) \\
  &= \eta(\frac{T - k}{2}
     \;+\;\frac{1}{2}\cdot
       \frac{T}{\pi}\Bigl[\sin\!\bigl(\tfrac{\pi t}{T}\bigr)\Bigr]_{t=k}^{t=T})  \\
  &= \eta(\frac{T - k}{2}
     \;-\;\frac{T}{2\pi}\,\sin\!\Bigl(\frac{\pi k}{T}\Bigr)).
\end{align*}

and
\begin{align*}
  B(k):=\int_k^T \eta_t^2 dt
  &= \eta^2\int_k^T
     \Bigl(\tfrac{3}{8}
          + \tfrac{1}{2}\cos\!\tfrac{\pi t}{T}
          + \tfrac{1}{8}\cos\!\tfrac{2\pi t}{T}\Bigr)
     \,dt \\
  &= \eta^2(\frac{3(T-k)}{8}
     + \frac{1}{2}\cdot \frac{T}{\pi}\Bigl[\sin\!\tfrac{\pi t}{T}\Bigr]_{k}^{T}
     + \frac{1}{8}\cdot \frac{T}{2\pi}\Bigl[\sin\!\tfrac{2\pi t}{T}\Bigr]_{k}^{T}) \\
  &= \eta^2(\frac{3(T-k)}{8}
     - \frac{T}{2\pi}\sin\!\Bigl(\tfrac{\pi k}{T}\Bigr)
     - \frac{T}{16\pi}\sin\!\Bigl(\tfrac{2\pi k}{T}\Bigr)).
\end{align*}
Finally, substituting them into main expression gives

\begin{align*}
  &\frac{D^2}{2 A(0)}
  +
  \frac{B(0)}{2 A(0)}\,G^2
 +
  \frac{G^2}{2}
  \int_{0}^{T-1}
  \frac{\eta_k B(k-1)}{A(k)A(k-1)}dk,
  \\
 & =
 \frac{D^2}{T\eta}+\frac{3\eta G^2}{8}+
  \frac{\eta G^2}{2}
  \int_{0}^{T-1}
  \frac{\tfrac12\bigl(1+\cos(\tfrac{\pi k}{T})\bigr)\,B(k-1)}
       {A(k)\,A(k-1)}dk\\
&= \frac{D^2}{T\eta}+\frac{3\eta G^2}{8}+
  \frac{\eta G^2}{4}\int_{0}^{T-1}
\frac{\bigl(1+\cos(\tfrac{\pi k}{T})\bigr)\,
\bigl[\tfrac{3(T-k+1)}{8}
-\tfrac{T}{2\pi}\sin\!\bigl(\tfrac{\pi(k-1)}{T}\bigr)
-\tfrac{T}{16\pi}\sin\!\bigl(\tfrac{2\pi(k-1)}{T}\bigr)\bigr]}
{\bigl(\tfrac{T-k}{2}-\tfrac{T}{2\pi}\sin(\tfrac{\pi k}{T})\bigr)\,
\bigl(\tfrac{T-k+1}{2}-\tfrac{T}{2\pi}\sin\!\bigl(\tfrac{\pi(k-1)}{T}\bigr)\bigr)}
\,dk.
\end{align*}

By a change of variable $x=\frac{k}{T}$, the last integral behaves like 


\[
\begin{aligned}
&T\int_{0}^{1-1/T}\!\Bigl(\frac{(1+\cos(\pi x))B(Tx)}{A^2(Tx)}+O\!\bigl(\tfrac1{T^2}\bigr)\Bigr)\,dx
\\
&=\int_{0}^{1}
\frac{(1+\cos\pi x)\Bigl[\frac{3(1-x)}{8}
-\frac{1}{2\pi}\sin(\pi x)
-\frac{1}{16\pi}\sin(2\pi x)\Bigr]}
     {\Bigl(\frac{1-x}{2}-\frac{1}{2\pi}\sin(\pi x)\Bigr)^2}
\,dx
\;+\;O\!\Bigl(\tfrac1T\Bigr)\\
&\approx 2.7443+\;O(\tfrac1T).\\
\end{aligned}
\]
Therefore, 
$$L_\text{SGD-last-cosine}(T)-L_*\approx \frac{D^2}{T\eta}+\frac{3\eta G^2}{8}+
  \frac{\eta G^2}{4}\times (2.7443 +O(\frac{1}{T})) \approx  \frac{D^2}{T\eta}+\eta G^2 (1.061 +O(\frac{1}{T})).$$

\paragraph{Warmup-stable-decay learning rate}  $\eta_t = \begin{cases}
             \eta &\text{if }t<cT\\
              \eta\frac{T-t}{T-cT}&\text{if }t\geq cT
\end{cases}$. We can get
\[
\int_k^T \eta_t dt = \begin{cases}
    \eta (cT-k) + \frac{(1-c)\eta T}{2}&\text{if } k < cT, \\ \frac{\eta(T-k)^2}{2(T-cT)} &\text{if } k \geq cT,
\end{cases}
\] and 
\[
\int_k^T \eta_t^2 dt = \begin{cases}
    \eta^2 (cT-k) + \frac{\eta^2}{3} (T-cT) &\text{if } k < cT, \\
    \frac{\eta^2}{3(T-cT)^2} (T-k)^3 &\text{if } k \geq cT.
\end{cases}
\]
We can write
\begin{align*}
&L_\text{SGD-last-WSD}(T)-L_*
\\
&= \frac{D^2}{(1+c)\eta T}+ \frac{\frac{1+2c}{3}\eta^2 T}{(1+c)\eta T}G^2
\\
&+\frac{G^2}{2}\left[\int_0^{cT}\frac{\eta_k\left(\int_{k-1}^T \eta_t^2 dt\right)}{\int_k^T \eta_t dt \int_{k-1}^T \eta_t dt}dk+\int_{cT}^{T-1}\frac{\eta_k\left(\int_{k-1}^T \eta_t^2 dt\right)}{\int_k^T \eta_t dt \int_{k-1}^T \eta_t dt}dk\right]
\\
&\approx \frac{D^2}{(1+c)\eta T}+ \frac{\frac{1+2c}{3}\eta^2 T}{(1+c)\eta T}G^2
\\
&+\frac{G^2}{2}\left[\int_0^{cT}\frac{\eta\left(\eta^2 (cT-k) + \frac{\eta^2}{3} (T-cT) \right)}{(\eta (cT-k) + \frac{(1-c)\eta T}{2})^2}dk+\int_{cT}^{T-1}\frac{\eta\frac{T-t}{T-cT}\left(\frac{\eta^2}{3(T-cT)^2} (T-k)^3\right)}{(\frac{\eta(T-k)^2}{2(T-cT)})^2}dk\right]
\\
&= \frac{D^2}{(1+c)\eta T}+ \frac{(1+2c)\eta}{3(1+c)}G^2
+\frac{\eta G^2}{2}\left[\ln\left(\frac{1+c}{1-c}\right)-\frac{2c}{3(1+c)}+\frac{4}{3}\left(1-\frac{1}{(1-c)T}\right)\right]
\\
&\sim \frac{D^2}{(1+c)\eta T}+ \frac{(1+2c)\eta}{3(1+c)}G^2
+\frac{\eta G^2}{2}\left[\ln\left(\frac{1+c}{1-c}\right)-\frac{2c}{3(1+c)}+\frac{4}{3}\right]
\\
&= \frac{D^2}{(1+c)\eta T}+ \eta G^2
\left[1+\frac{1}{2}\ln\left(\frac{1+c}{1-c}\right)\right]
\end{align*}
\end{proof}

\subsection{Derivation of \Cref{def:qualifying}}
\label{app:condition24}
Recall the loss bound at last iterate in \eqref{eq:last lr array} is 
$$L_*+\frac{D^2}{2\sum_{t=1}^T  \eta_t}+ \frac{ G^2}{2}\left(\frac{\sum_{t=1}^T \eta_t^2}{\sum_{t=1}^T  \eta_t}+\sum_{k=1}^{T -1}\frac{\eta_k}{\sum_{t=k+1}^T \eta_t}\frac{\sum_{t=k}^T  \eta_t^2}{\sum_{t=k}^T  \eta_t}\right)
$$
We can rewrite the last term by applying \Cref{thm1}, with $x_t=\eta_t^2$.
\begin{theorem}
\label{thm1}
For any sequence $\{x_t\}$, if $x_T/\eta_T=0$, then
$$\frac{1}{\sum_{t=1}^T  \eta_t}\sum_{t=1}^T  x_t+\sum_{k=1}^{T -1}\frac{\eta_{k}}{\sum_{t=k+1}^T \eta_t \sum_{t=k}^T  \eta_t}\sum_{t=k}^T  x_t=\sum_{t=1}^{T-1}  \left(\frac{x_t}{ \sum_{k=t+1}^T \eta_k}\right)
$$
\end{theorem}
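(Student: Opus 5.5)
The plan is an Abel-summation (summation-by-parts) argument built on one telescoping identity. Write $S_k:=\sum_{t=k}^T\eta_t$ for the tail sums, and assume, as is implicit in the statement, that the $\eta_t$ are positive so that every $S_k>0$ (the case $\eta_T=0$ is covered by the convention $x_T/\eta_T=0$ below). The key observation is that the weight in the second sum telescopes:
\begin{align*}
\frac{\eta_k}{S_{k+1}S_k}=\frac{S_k-S_{k+1}}{S_{k+1}S_k}=\frac{1}{S_{k+1}}-\frac{1}{S_k}, \qquad 1\le k\le T-1.
\end{align*}
So the left-hand side of \Cref{thm1} becomes
\begin{align*}
\frac{1}{S_1}\sum_{t=1}^T x_t+\sum_{k=1}^{T-1}\left(\frac{1}{S_{k+1}}-\frac{1}{S_k}\right)\sum_{t=k}^T x_t .
\end{align*}

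Next I would exchange the order of summation and collect the coefficient of each $x_t$. The index $t$ appears in the inner sum for every $k\le t$ with $k\le T-1$. Hence the coefficient of $x_t$ is
\begin{align*}
\frac{1}{S_1}+\sum_{k=1}^{\min(t,T-1)}\left(\frac{1}{S_{k+1}}-\frac{1}{S_k}\right)=\frac{1}{S_{\min(t,T-1)+1}},
\end{align*}
by telescoping against the leading $1/S_1$.

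It remains to read off the two cases. For $1\le t\le T-1$ the coefficient is $1/S_{t+1}=1/\sum_{k=t+1}^T\eta_k$, which is exactly the summand on the right-hand side. For $t=T$ the coefficient is $1/S_T=1/\eta_T$, so that term equals $x_T/\eta_T$, which vanishes by hypothesis. Summing over $t$ then gives the claimed identity.

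There is no real obstacle here. The only step needing care is the index bookkeeping in the exchange of sums, namely the cap $k\le T-1$, which is what isolates the $x_T/\eta_T$ boundary term. Positivity of the $\eta_t$ is used only to make the divisions legitimate. Applied with $x_t=\eta_t^2$, this yields the continuous form $\int_0^T\eta_t^2/\int_t^T\eta_k\,dk\,dt$ in \Cref{def:qualifying}.
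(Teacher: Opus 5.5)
Your proof is correct and is essentially the paper's own argument. The paper likewise writes the weight as $A_k=1/\sum_{t=k+1}^T\eta_t-1/\sum_{t=k}^T\eta_t$, exchanges the double sum, telescopes against the leading $1/\sum_{t=1}^T\eta_t$ to get the coefficient $1/\sum_{s=t+1}^T\eta_s$, and isolates the boundary term $x_T/\eta_T$.

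One small correction: your parenthetical that $\eta_T=0$ is ``covered by the convention'' is not accurate. If $S_T=\eta_T=0$, the $k=T-1$ weight on the left and the $t=T-1$ term on the right are also singular, so the identity genuinely needs $\eta_T>0$, where the hypothesis simply means $x_T=0$. The paper faces the same degeneracy and sidesteps it by passing to integrals.
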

\begin{proof}[Proof of \Cref{thm1}]
We firstly simplify the notation by denoting $A_k=\frac{\eta_{k}}{\sum_{t=k+1}^T \eta_t \sum_{t=k}^T  \eta_t}=\frac{1}{\sum_{t=k+1}^T \eta_t}-\frac{1}{ \sum_{t=k}^T  \eta_t}$ for $k<T$.

With this notation, we write the left hand side of \Cref{thm1} as
$$\frac{1}{\sum_{t=1}^T  \eta_t}\sum_{t=1}^T  x_t+\sum_{k=1}^{T -1} \left(A_k\sum_{t=k}^T  x_t\right)=\frac{1}{\sum_{t=1}^T  \eta_t}\sum_{t=1}^T  x_t+\sum_{k=1}^{T-1} \left(A_k\sum_{t=k}^{T-1}  x_t\right)+\sum_{k=1}^{T-1} A_k x_T$$
Now we can exchange the double sum by noticing that
$$\sum_{k=1}^{T-1} \left(A_k\sum_{t=k}^{T-1}  x_t\right)=\sum_{t=1}^{T-1} \left( x_t\sum_{k=1}^t A_k\right)=\sum_{1\leq k\leq t\leq T-1}A_k x_t$$
Therefore the left hand side of \Cref{thm1} becomes
$$
\frac{1}{\sum_{t=1}^T  \eta_t}\sum_{t=1}^T  x_t+\sum_{t=1}^{T-1} \left( x_t\sum_{k=1}^t A_k\right)+x_T\left(\sum_{k=1}^{T-1} A_k\right)
$$
Splitting the first term, we get
\begin{align}
\sum_{t=1}^{T-1}  x_t\left(\frac{1}{\sum_{t=1}^T  \eta_t}+\sum_{k=1}^t A_k\right)+x_T\left(\frac{1}{\sum_{t=1}^T\eta_t}+\sum_{k=1}^{T-1} A_k\right)
\label{eq:2terms A}
\end{align}
in which the term in big bracket can be greatly simplified by telescoping the sum,
$$\frac{1}{\sum_{t=1}^T  \eta_t}+\sum_{k=1}^t A_k=\frac{1}{\sum_{s=1}^T  \eta_s}+\sum_{k=1}^t \left(\frac{1}{\sum_{s=k+1}^T \eta_s}-\frac{1}{ \sum_{s=k}^T  \eta_s}\right)=\frac{1}{\sum_{s=t+1}^T \eta_s}.$$
In particular, with $t=T-1$, this is just $1/\eta_T$.

Substituting back to right hand side of \eqref{eq:2terms A}, we finally obtain an equivalent form of the left hand side of \Cref{thm1} as follows, after using $x_T/\eta_T=0$: 
$$\sum_{t=1}^{T-1} x_t\left(\frac{1}{\sum_{s=t+1}^T \eta_s}\right)+\frac{x_T}{\eta_T}$$
\end{proof}

Now we have
\begin{align}
L_*+\frac{D^2}{2\sum_{t=1}^T  \eta_t}+
\frac{G^2}{2}\sum_{t=1}^{T-1}  \left(\frac{\eta_t^2}{ \sum_{k=t+1}^T \eta_k}\right)
\label{eq:loss mapping simplified}
\end{align}

However, \eqref{eq:loss mapping simplified} has some singularity in the last term at $t=T$, where the denominator may be zero (if learning rate decays) and not divisible. To work around, we replace the discrete summation by continuous integral to get 
$$L_*+\frac{D^2}{2\int_0^T  \eta_t dt}+
\frac{G^2}{2}\int_0^T  \left(\frac{\eta_t^2}{ \int_{t}^T \eta_k dk}\right)dt$$

\subsection{Summary of generalizations}

\begin{table}[H]
\centering
\caption{Summary of generalizations in this work and coefficients to be fitted in a data-driven way.}
\vspace{-0.3cm}
    \begin{tabular}{c|c|c}
         & related experiments & coefficients to fit \\\hline
 Generalization 1 & Figure 2-5 & $\tilde{L}_\infty, \tilde{D},\tilde{G}$ \\
 Generalization 2 & Figure 6 &$\tilde{L}_\infty, \tilde{q_1}, \tilde{q_2}$ \\
 Generalization 3 & Figure 7, Table 2 & $\tilde{L}_\infty, \tilde{q_1}\tilde{q_2}$ \\
 Generalization 4 & Figure 8-11 & $\tilde{L}_\infty, \tilde{Q}(\etar)$
          \end{tabular}
\label{tab:generalization}
\end{table}

\section{More experiment settings
}\label{app:experiment}
\paragraph{\Cref{fig:sgd multi}:} WSD uses 10\% warmup and 10\% decay.

\paragraph{\Cref{fig:resnet-imagenet}:} 
Optimizer is SGD without momentum or weight decay, batch size $256$.

Linear decays from peak learning rate $0.225$ to 0.
Cosine uses peak learning rate $0.225$.
Cyclic uses 5 triangular cycles and peak learning rate is $0.056$.
WSD uses 10\% warmup and 10\% decay, peak learning rate $0.113$.

\paragraph{\Cref{fig:resnet-imagenet-adamw}}
Optimizer is AdamW: $\beta_1=0.9, \beta_2=0.999$, weight decay$=0.01$, batch size $256$.

Linear decays from peak learning rate $0.0003$ to 0.
Cosine uses peak learning rate $0.0003$.
Cyclic uses 5 triangular cycles and peak learning rate is $0.0003$.
WSD uses 10\% warmup and 10\% decay, peak learning rate $0.00014$.

\paragraph{\Cref{fig:nanogpt-adamw} and \Cref{fig:nanogpt-muon}}

We use two optimizers: AdamW and Muon-NSGD. Here Muon-NSGD uses Muon and NSGD with the same learning rate as in \citep{boreiko2025towards}: denoting $\mathbf{W}$ as a layer's parameter and ignoring weight decay for brevity,
\begin{align*}
\textup{Muon:} & \mathbf{W}_{t+1}=\mathbf{W}_t-\eta\cdot\textup{NS}(\mathbf{m}_t)
\\
\textup{NSGD:} & \mathbf{W}_{t+1}=\mathbf{W}_t-\eta\cdot \mathbf{m}_t/\|\mathbf{m}_t\|_2
\end{align*}
where NS is the Newton-Schulz matrix iteration and $\mathbf{m}_t$ is the momentum.

Each optimizer is trained under four learning rate schedulers, i.e., linear decay with warm-up, cosine decay with warm-up, cyclic and WSD. For schedulers with warm-up, we allocate  2\% of the total steps (i.e., 100 of 5000 steps) to warmup. For WSD, we start to decay the learning rate from the last 10\% of the steps. For AdamW, the peak learning rate is 2e-3. For Muon-NSGD, the peak learning rate is 2e-2.

\paragraph{\Cref{sec:experiments}}
All runs are trained with bf16 mixed-precision training. AdamW uses $\beta_1=0.9,\beta_2=0.95$. Muon-NSGD uses momentum 0.95.
0.1B models use batch size 512. 1B/7B models use batch size 64. We train with 1024 sequence length. 

VLM models use SmolLM2-360M-Instruct \citep{allal2025smollm2smolgoesbig} as the language model and siglip2-base-patch16-512 \citep{tschannen2025siglip2multilingualvisionlanguage} as the vision model. These models use different learning rates instead of a single one. For example, when $\etar=5e-3$, the vision and language backbones use $\etar/\sqrt{T}$ as the learning rate while the modality projector uses $100\etar/\sqrt{T}$. As a consequence, all learning rates are $1/\sqrt{T}$ scaled. AdamW uses default hyperparameters with batch size being 64. Sequence length 8192, max image size 2048, max images per example 8, max images per knapsack  36.

\clearpage
\section{Details of \Cref{sec:4}}
\subsection{\Cref{fig:hitch-hiker}}
\label{app:hitch-hiker}

The model configurations of \Cref{fig:hitch-hiker} can be found in \Cref{table:HH-dense} and \Cref{table:HH-moe}.   
All other parameters not specified here follow the default setting in \citep{li2025predictable}. 

\begin{table}[h]
\centering
\caption{Dense model configurations. Model size is measured in billions of parameters (B). In columns, $d_{\text{model}}$ = hidden dimension, $d_{\text{ff}}$ = feed-forward width, 
$n_{\text{heads}}$ = number of attention heads, $n_{\text{layers}}$ = number of layers, 
$B$ = batch size, $T$ = total iterations.}
\begin{tabular}{ccccccc}
\toprule
Model Size (B) & $d_{\text{model}}$ & $d_{\text{ff}}$ & $n_{\text{heads}}$ & $n_{\text{layers}}$ & $B$ & $T$ \\
\midrule
1.07 & 2048 & 8192 & 16 & 16 & 352 & 27743 \\
0.54 & 1280 & 9048 & 10 & 13 & 352 & 39464 \\
0.43 & 1280 & 9472 & 10 & 10 & 512 & 21696 \\
0.21 &  960 & 9368 & 15 &  7 & 128 & 76293 \\
\bottomrule
\end{tabular}
\label{table:HH-dense}
\end{table}

\begin{table}[h]
\centering
\caption{MoE model configurations. Model size is measured in billions of parameters (B). In columns, $n_{\text{exp}}$ = number of experts, $d_{\text{exp}}$ = per-expert hidden size, 
SED = selected expert dimension, $B$ = batch size, $T$ = total iterations.}
\begin{tabular}{ccccccc}
\toprule
Model Size (B) & $n_{\text{exp}}$ & $d_{\text{exp}}$ & SED & $B$ & $T$ \\
\midrule
2.16 &  8 & 2888 & 8664 & 32 & 30517  \\
2.15 (Green) & 88 &  352 &  704 & 32 & 30517  \\
2.15 (Orange) & 89 &  352 &  352 & 32 & 30517 \\
\bottomrule
\end{tabular}
\label{table:HH-moe}
\end{table}


\subsection{\Cref{fig:vis hitchhiker} and \Cref{tab:replica}}
We give the comprehensive table of all datapoints from \citep{besiroglu2024chinchilla}, where the linear regression fits well in all cases.
\begin{table}[!htb]
\centering
\renewcommand{\arraystretch}{1.2} 
\begin{tabular}{c|c|c|c|c}
  model size(B) &num of horizons  &$2\tilde q_1\tilde q_2$&$\tilde L_\infty$&$R^2$ score\\\hline
0.074&5&3.22e+04&2.825&0.991\\
0.090&3&3.19e+04&2.774&0.991\\
0.106&4&3.38e+04&2.706&1.000\\
0.117&3&3.27e+04&2.692&0.996\\
0.140&7&3.04e+04&2.670&0.991\\
0.163&3&3.11e+04&2.619&1.000\\
0.175&7&3.08e+04&2.619&0.995\\
0.196&4&3.14e+04&2.582&0.999\\
0.217&6&3.54e+04&2.526&0.998\\
0.251&3&3.37e+04&2.517&1.000\\
0.279&8&3.29e+04&2.498&0.999\\
0.305&7&3.14e+04&2.488&0.997\\
0.425&8&3.27e+04&2.430&0.998\\
0.489&4&3.30e+04&2.404&0.999\\
0.552&8&3.24e+04&2.382&0.999\\
0.586&8&3.25e+04&2.368&0.994\\
0.632&8&3.17e+04&2.367&0.998\\
0.664&3&3.46e+04&2.330&0.999\\
0.724&3&3.53e+04&2.320&0.999\\
0.817&10&3.28e+04&2.315&0.994\\
0.893&3&3.35e+04&2.304&0.998\\
1.019&7&3.06e+04&2.305&0.997\\
1.143&10&3.10e+04&2.275&0.998\\
1.265&10&3.05e+04&2.286&0.986\\
1.426&3&4.07e+04&2.214&0.984\\
1.429&9&3.18e+04&2.253&0.996\\
1.592&4&4.22e+04&2.182&0.997\\
1.611&9&3.36e+04&2.228&0.995\\
1.730&7&3.53e+04&2.207&0.998\\
1.795&11&3.41e+04&2.211&0.997\\
2.004&8&3.62e+04&2.178&0.999\\
2.280&7&4.41e+04&2.128&1.000\\
2.636&6&4.08e+04&2.113&0.998\\
2.979&10&5.90e+04&2.016&0.990\\
4.519&6&3.83e+04&2.106&0.978\\
6.792&8&4.66e+04&2.023&0.999\\
9.290&4&4.29e+04&2.046&0.988\\
12.560&3&4.23e+04&2.053&1.000\\
\end{tabular}
\caption{Comprehensive version of \Cref{tab:replica}.}
\label{tab:replica all}
\end{table}

\clearpage

\section{Additional experiments}
\label{app:overfitting}
\subsection{Effect of overfitting}
Our experiments are primarily conducted when the train loss and test loss are similar, without much overfitting. For example, our GPT2 runs in \Cref{sec:experiments} use up to 260B tokens (about 30 epochs of OpenWebText) and test loss is close to train loss.

In order to validate our generalization in the overfitting regime, we train ViT models and ResNet models on ImageNet for multiple runs up to 1 epoch, and a ResNet50 on CIFAR10 for multiple runs up to 820 epochs. As expected, within 1 epoch when overfitting is insignificant, $O(1/\sqrt{T})$ convergence holds for both train loss and test loss, although the coefficients of fitting may be slightly different. However, with overfitting (e.g. after 50 epochs), we observe that $O(1/\sqrt{T})$ convergence only holds for train loss but no longer for test loss (see the last plot in \Cref{fig:vision multi run}).
\begin{figure}[!htb]
    \centering
    \includegraphics[width=0.43\linewidth]{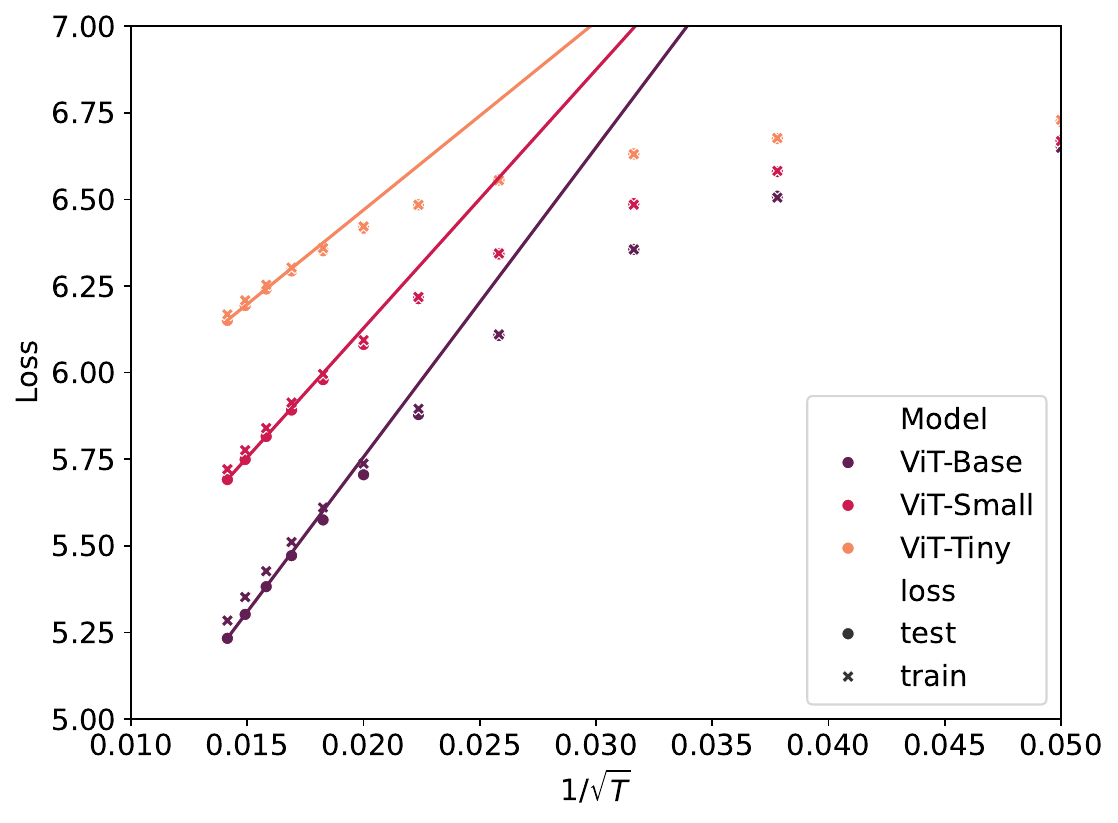}
    \includegraphics[width=0.43\linewidth]{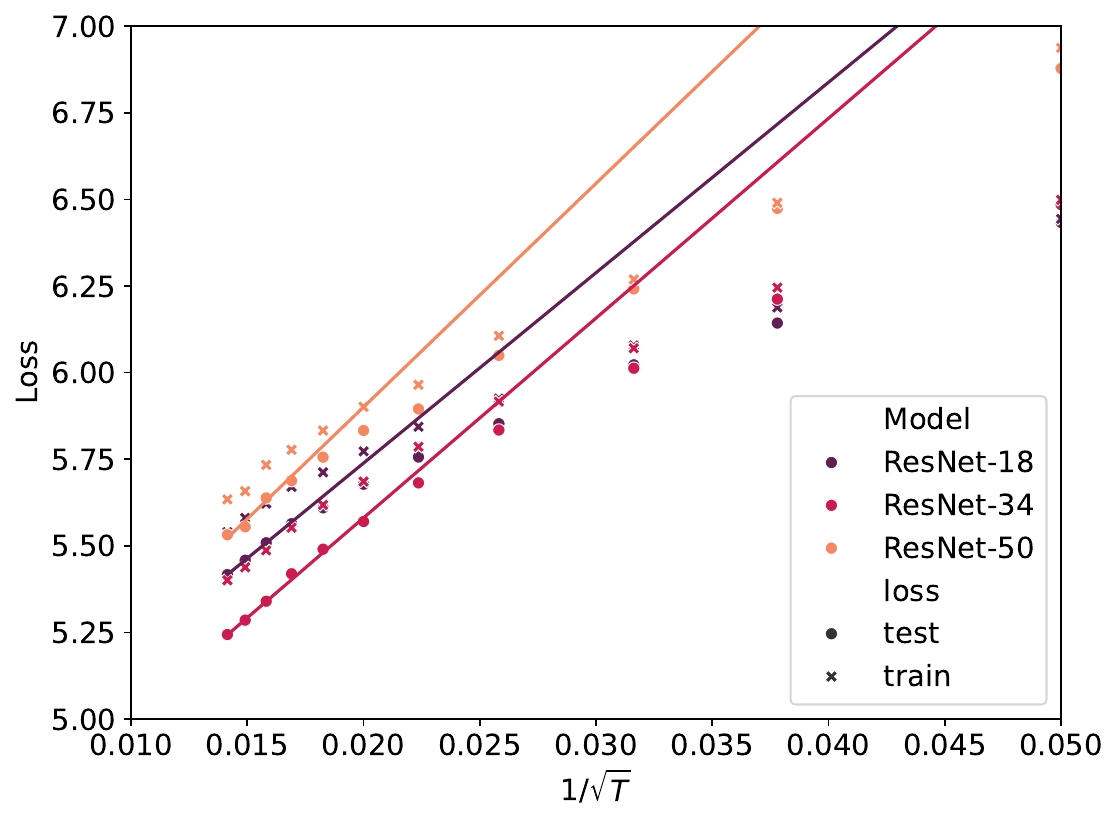} 
    \\
    \includegraphics[width=0.8\linewidth]{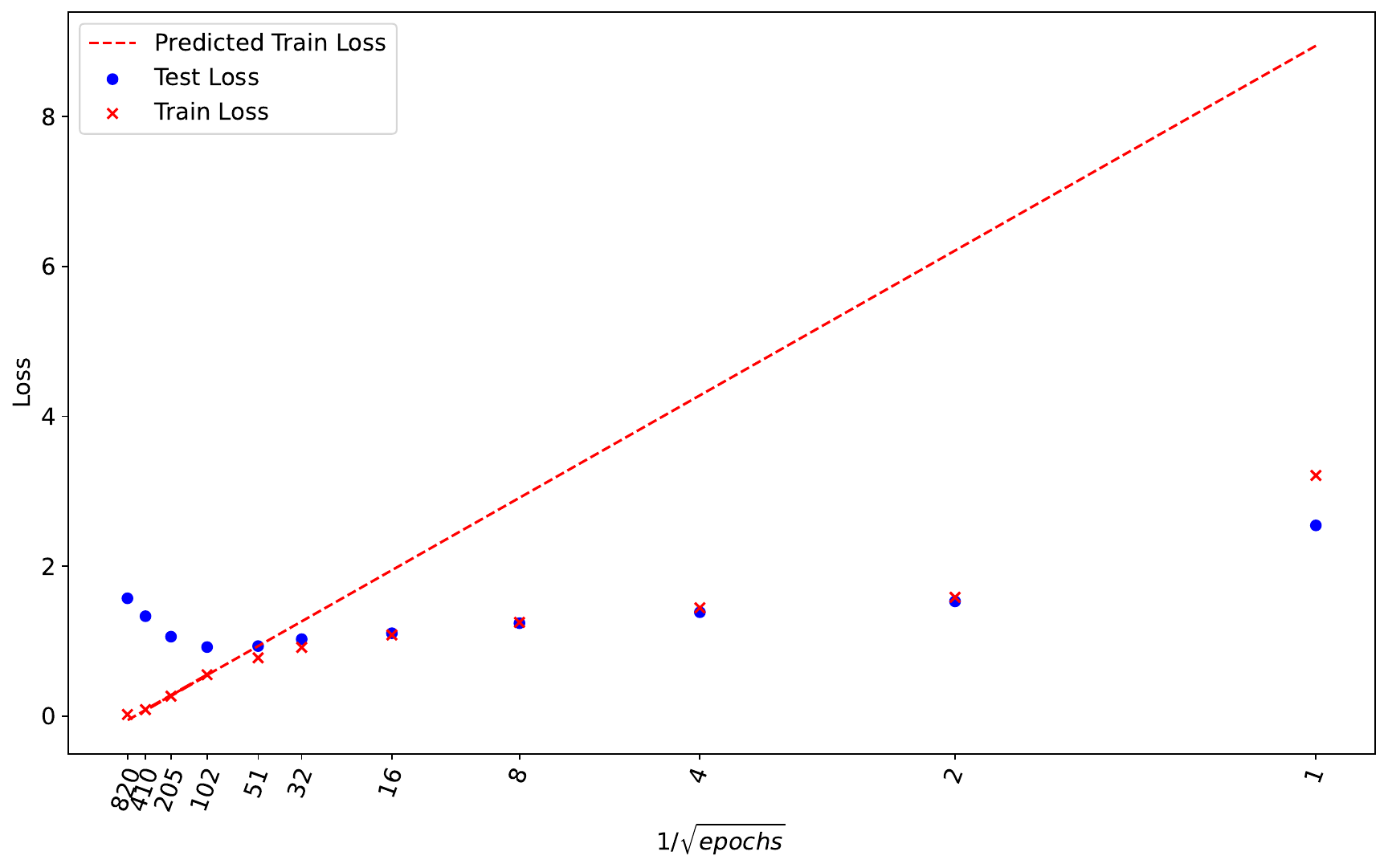}
    \caption{Loss of multiple runs with scaled peak learning rate and AdamW. Upper left: ViT model on ImageNet (within 1 epoch). Upper right: ResNet models on ImageNet (within 1 epoch). Lower: ResNet50 on CIFAR10 (over 1 epoch), where we have marked the number of epochs along x-axis.}
    \label{fig:vision multi run}
\end{figure}

All models are optimized by AdamW with $\beta_1=0.9, \beta_2=0.999$, weight decay$=0.01$, batch size $128$. Learning rate schedule is cosine decay with peak learning rate $0.01/\sqrt{T}$.

\newpage
\subsection{Parameter-efficient v.s. full model training}
We experiment with full model training and parameter-efficient fine-tuning of GPT2 models. We test small/medium/large sizes on E2E dataset. For parameter-efficient fine-tuning, we apply low-rank adaptation (LoRA) with rank 4. Our experiments in \Cref{fig:gpt multi run} show that $O(1/\sqrt{T})$ convergence holds within both training regimes.
All models are optimized by AdamW with  $\beta_1=0.9, \beta_2=0.999$, weight decay$=0.01$, batch size $8$. Learning rate schedule is linear with peak learning rate $0.0005/\sqrt{T}$ for LoRA and $0.00005/\sqrt{T}$ for full model fine-tuning.
\begin{figure}[!htb]
    \centering
    \includegraphics[width=0.41\linewidth]{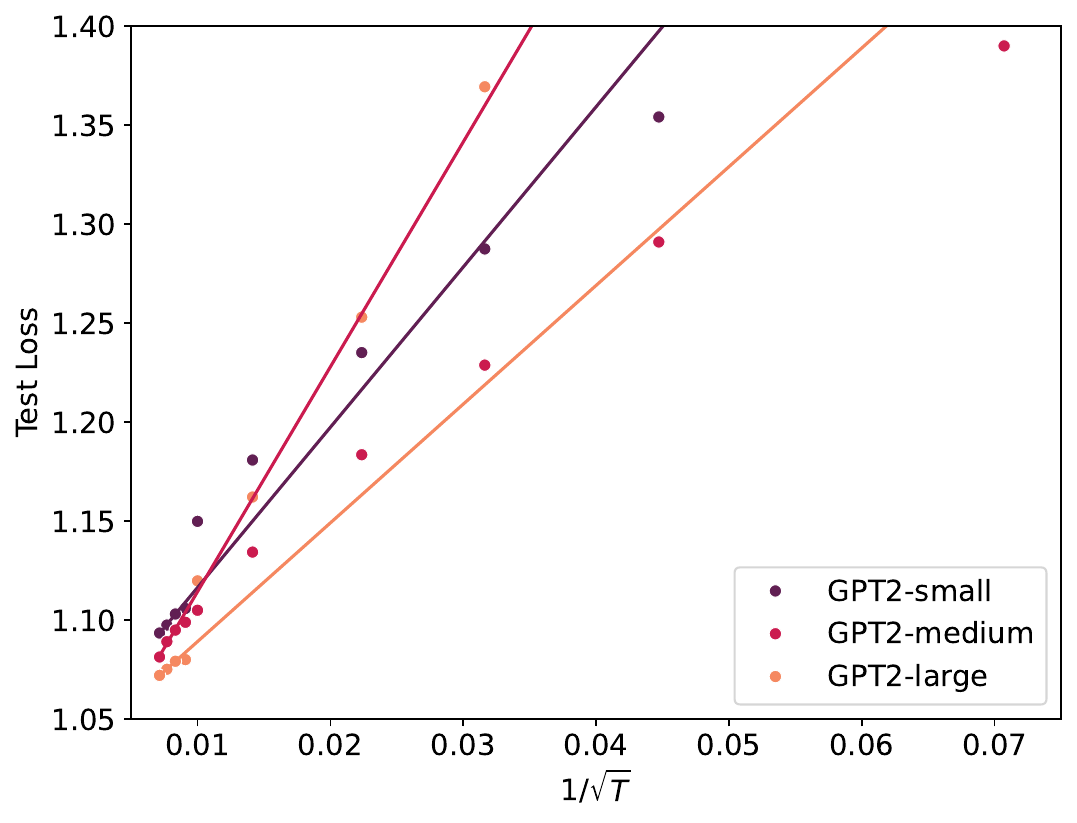}
    \includegraphics[width=0.4\linewidth]{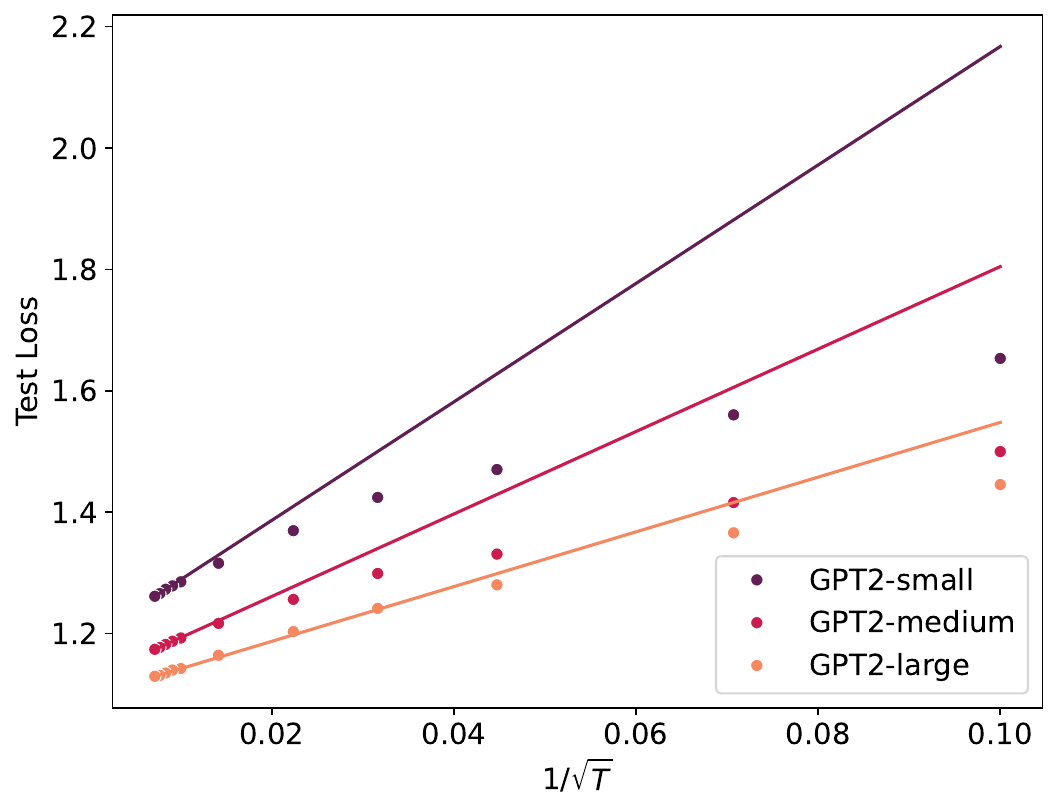} 
    \vspace{-0.3cm}
    \caption{Loss of multiple fine-tuning runs with scaled peak learning rate and AdamW, on E2E dataset. Left: full model fine-tuning. Right: LoRA fine-tuning.}
    \label{fig:gpt multi run}
\end{figure}

\vspace{-0.3cm}
\subsection{Ablation study on hyperparameters}

We further evaluate our \Cref{abstract:4} through the ablation study over five key hyperparameters, such as weight decay, gradient clipping, momentum coefficient, batch size, and random seeds. For each new row in \Cref{tab:ablation}, we launch 5 runs with $T\in [60^2,70^2,80^2,90^2,100^2]$, so in total we summarize the statistics of 35 runs. We fit linear regression of validation loss against $1/\sqrt{T}$, and we consistently observe goodness of fit.

\begin{table}[!htb]
    \centering
    \begin{tabular}{c|c|c|c}
         & $\tilde Q$ estimation (std err)& $\tilde L_\infty$ estimation (std err)&$R^2$ fit score \\\hline
       default (seed=1337)  & 23.9094 (0.469) & 2.8298 (0.004)&0.998 \\
       seed=3333  &24.1820 (0.220)&2.8321 (0.003)& 1.000\\
       seed=8888  & 25.0086 (0.914)&2.8146 (0.012) &0.995\\\hline
       default (batch size=512)  & 23.9094 (0.469) & 2.8298 (0.004)&0.998\\
       batch size=64  &39.5403 (1.238)&2.9670 (0.016)&0.996 \\\hline
       default (clipping=0.0)  & 23.9094 (0.469) & 2.8298 (0.004)&0.998\\
       clipping=1.0  &24.3836 (0.481)&2.8207 (0.006)&0.998 \\\hline
       default (momentum=0.95)  & 23.9094 (0.469) & 2.8298 (0.004)&0.998\\
       momentum=0.9  & 23.8508      (0.497)&2.8432 (0.007)&0.998\\
       \hline
       default (weight decay=0.01)  & 23.9094 (0.469) & 2.8298 (0.004)&0.998\\
       weight decay=0.0  &25.1154 (0.416) &2.8158 (0.005)&0.999\\\hline
    \end{tabular}
    \caption{Ablation study of \Cref{abstract:4} over weight decay, gradient clipping, momentum coefficient, batch size, and random seeds. High $R^2$ score supports our $O(1/\sqrt{T})$ loss convergence with $1/\sqrt{T}$-scaled learning rate.}
    \label{tab:ablation}
\end{table}

Here, the default setting is $\etar=1.0$ for GPT2 0.1B, batch size 512(*1024 context length), weight decay 0.01, momentum 0.95, random seed 1337, clipping norm 0.0, Muon-NSGD optimizer and cosine learning rate schedule (with 2\% warmup), the same as in \Cref{sec:experiments}.

\end{document}